\theoremstyle{plain}
\newtheorem{theorem}{Theorem}
\newtheorem{proposition}{Proposition}
\newtheorem{lemma}{Lemma}
\theoremstyle{definition}
\newtheorem{definition}{Definition}
\newtheorem{assumption}{Assumption}
\DeclareMathOperator{\sign}{\mathrm{sign}}
\DeclareMathOperator*{\argmin}{\mathrm{arg\,min}}
\DeclareMathOperator*{\argmax}{\mathrm{arg\,max}}
\newcommand{\bE}{\mathbb{E}}
\newcommand{\bR}{\mathbb{R}}
\newcommand{\bP}{\mathbb{P}}
\newcommand{\bQ}{\mathbb{Q}}
\newcommand{\bF}{\mathbb{F}}
\newcommand{\bW}{\mathbb{W}}
\newcommand{\bJ}{\mathbb{J}}
\newcommand{\cF}{\mathcal{F}}
\newcommand{\cX}{\mathcal{X}}
\newcommand{\cH}{\mathcal{H}}
\newcommand{\cN}{\mathcal{N}}
\newcommand{\cD}{\mathcal{D}}
\newcommand{\cO}{\mathcal{O}}
\newcommand{\cJ}{\mathcal{J}}
\newcommand{\MMD}{\mathrm{MMD}}
\newcommand{\G}{\mathrm{G}}
\newcommand{\D}{\mathrm{D}}
\newcommand{\RoD}{\mathrm{RoD}}
\newcommand{\rS}{\mathrm{S}}
\newcommand{\rL}{\mathrm{L}}
\newcommand{\ME}{\mathrm{ME}}
\newcommand{\SCF}{\mathrm{SCF}}
\newcommand{\R}{\mathrm{R}}
\newcommand{\tr}{\mathrm{tr}}
\newcommand{\te}{\mathrm{te}}
\newcommand{\epsball}{\mathcal{B}_\epsilon}
\newcommand{\bmx}{x}
\newcommand{\bmy}{y}
\newcommand{\bigromanone}{\mathrm{\uppercase\expandafter{\romannumeral1}}}
\newcommand{\two}{\mathrm{\uppercase\expandafter{\romannumeral2}}}
\icmltitlerunning{Adversarial Attack and Defense for Non-Parametric Two-Sample Tests}
\begin{document}

\twocolumn[
\icmltitle{Adversarial Attack and Defense for Non-Parametric Two-Sample Tests}



\icmlsetsymbol{equal}{*}

\begin{icmlauthorlist}
\icmlauthor{Xilie Xu}{equal,nus}
\icmlauthor{Jingfeng Zhang}{equal,riken}
\icmlauthor{Feng Liu}{unimelb}
\icmlauthor{Masashi Sugiyama}{riken,UTokyo}
\icmlauthor{Mohan Kankanhalli}{nus}
\end{icmlauthorlist}

\icmlaffiliation{nus}{School of Computing, National University of Singapore}
\icmlaffiliation{riken}{RIKEN Center for Advanced Intelligence Project (AIP)}
\icmlaffiliation{UTokyo}{Graduate School of Frontier Sciences, The University of Tokyo}
\icmlaffiliation{unimelb}{School of Mathematics and Statistics, The University of Melbourne}

\icmlcorrespondingauthor{Jingfeng Zhang}{jingfeng.zhang@riken.jp}

\icmlkeywords{Machine Learning, ICML}

\vskip 0.3in
]



\printAffiliationsAndNotice{\icmlEqualContribution} 

\begin{abstract}
Non-parametric two-sample tests (TSTs) that judge whether two sets of samples are drawn from the same distribution, have been widely used in the analysis of critical data. People tend to employ TSTs as trusted basic tools and rarely have any doubt about their reliability. This paper systematically uncovers the failure mode of non-parametric TSTs through adversarial attacks and then proposes corresponding defense strategies. 
First, we theoretically show that an adversary can upper-bound the distributional shift which guarantees
the \textit{attack's invisibility}.
Furthermore, we theoretically find that the adversary can also degrade the lower bound of a TST's \textit{test power}, which enables us to iteratively minimize the \textit{test criterion} in order to search for adversarial pairs. 
To enable TST-agnostic attacks, we propose an \textit{ensemble attack} (EA) framework that jointly minimizes the different types of test criteria. 
Second, 
to robustify TSTs, we propose a \textit{max-min optimization} that iteratively generates adversarial pairs to train the deep kernels. 
Extensive experiments on both simulated and real-world datasets validate the adversarial vulnerabilities of non-parametric TSTs and the effectiveness of our proposed defense.
Source code is available at \href{https://github.com/GodXuxilie/Robust-TST.git}{https://github.com/GodXuxilie/Robust-TST.git}.
\end{abstract}

\section{Introduction}
Non-parametric two-sample tests (TSTs) that judge whether two sets of samples drawn from the same distribution have been widely used to analyze critical data in physics~\cite{baldi2014searching}, neurophysiology~\cite{rasch2008predicting}, biology~\cite{borgwardt2006integrating}, etc.
Compared with traditional methods (such as the \textit{t}-test), non-parametric TSTs can relax the strong parametric assumption about the distributions being studied and are effective in complex domains~\cite{gretton2009fast, gretton2012kernel, chwialkowski2015fast, jitkrittum2016interpretable,sutherland2016generative,lopez2016revisiting,cheng2019classification,liu2020learning, liu2021meta}.
Notably, the use of deep kernels~\cite{liu2020learning} flexibly empowers the non-parametric TSTs to learn even more complex distributions. 

However, the adversarial robustness of non-parametric TSTs is rarely studied, despite its extensive studies for deep neural networks (DNNs). 
Studies of DNNs' adversarial robustness~\cite{Madry_adversarial_training}
have enabled significant advances in defending against adversarial attacks~\cite{szegedy}, which can help enhance the security in various domains such as computer vision~\cite{xie2017adversarial,mahmood2021robustness}, natural language processing~\cite{Zhu2020FreeLB:, yoo2021towards}, recommendation system~\cite{peng2020robust}, etc. 
We therefore undertake this pioneer study on adversarial robustness of non-parametric TSTs, which uncovers the failure mode of non-parametric TSTs through adversarial attacks and facilitate an effective strategy for making TSTs reliable in critical applications~\cite{baldi2014searching,rasch2008predicting,borgwardt2006integrating}.

First, we theoretically show the adversary could upper-bound the distributional shift and degrade the lower bound of a TST's test power (details in Section~\ref{sec:theoretical_analysis}). 
Given a benign pair $(S_{\bP}, S_{\bQ})$, in which $S_{\bP} =\{{x}_i\}_{i=1}^{m} \sim \bP^m$
and $S_{\bQ} = \{{y}_j \}_{j=1}^n \sim \bQ^n$ 
, an $\ell_{\infty}$-bounded adversary could generate the adversarial pair $(S_\bP, \tilde{S}_\bQ)$. We will show in Proposition~\ref{proposition:mmd_shift} that the maximum mean discrepancy (MMD)~\cite{gretton2012kernel} between the benign and adversarial pairs is upper-bounded, which guarantees imperceptible adversarial perturbations~\cite{szegedy}. 
Furthermore, we will show in Theorem~\ref{theory:tp_attack} that 
the adversary can degrade the lower bound of a TST's test power, which implies that a TST could wrongly determine $\bP = \bQ$ with a larger probability under adversarial attacks when $\bP \neq \bQ$ holds.

\begin{figure}[t]
    \centering
    \includegraphics[width=0.49\textwidth]{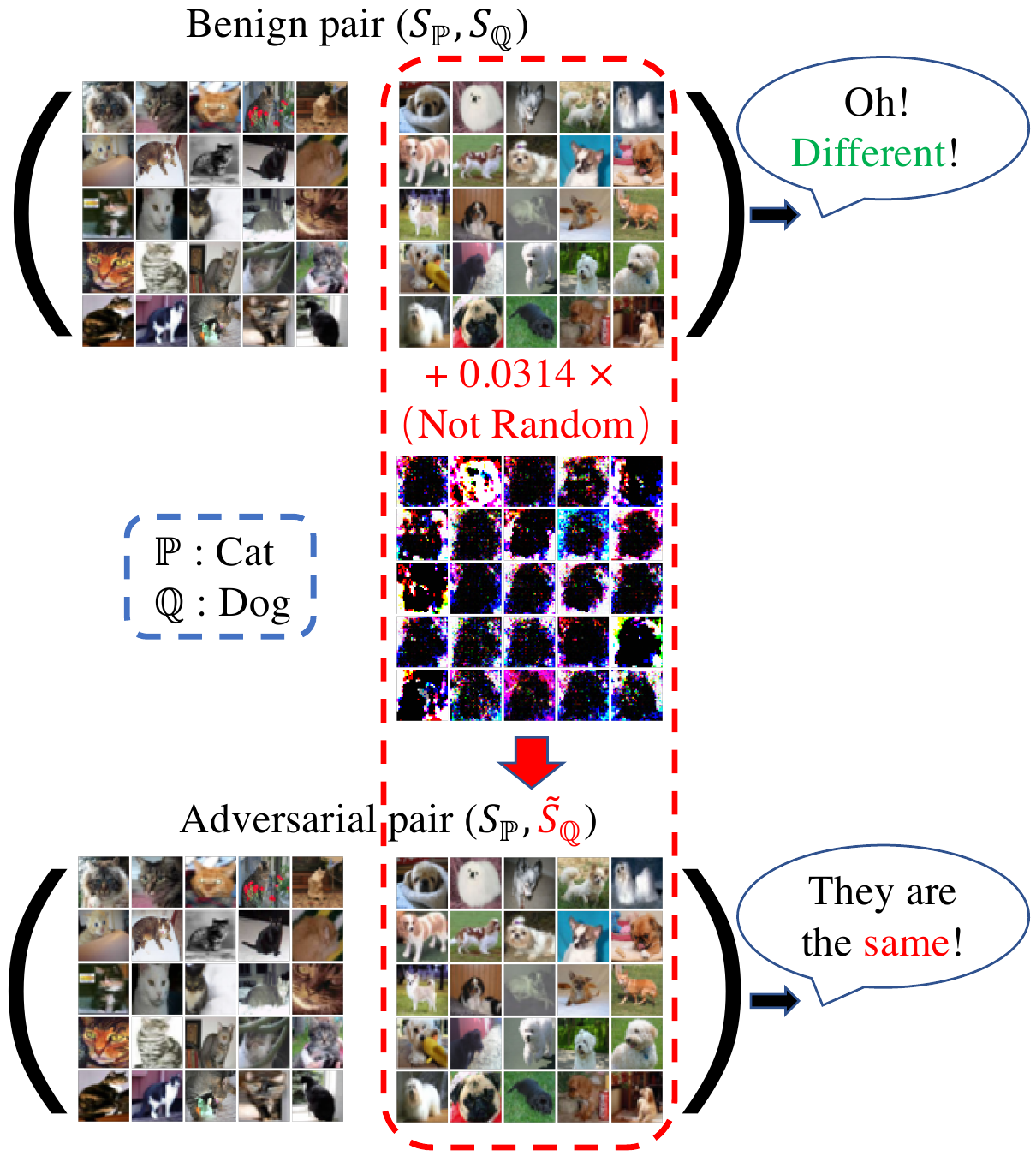}
    \vspace{-8mm}
    \caption{An example of adversarial pair $(S_{\mathbb{P}}, \tilde{S}_{\mathbb{Q}})$ generated by embedding an adversarial perturbation in the benign set $S_{\bQ}$ of the benign pair $(S_{\mathbb{P}}, S_{\mathbb{Q}})$. Experimental details are in Section~\ref{sec:attack_power}. } 
    \label{fig:corner_case_example}
    \vspace{-4mm}
\end{figure}

Then, we realize effective adversarial attacks against non-parametric TSTs (details in Section~\ref{sec:realization}). 
We formulate an attack as a constraint optimization problem that minimizes a TST's test criterion~\cite{liu2020learning} within the $\ell_{\infty}$-bound of size $\epsilon$ on $S_\bQ$. We utilize projected gradient descent (PGD)~\cite{Madry_adversarial_training} to efficiently search the adversarial set $\tilde{S}_\bQ$ and incorporate automatic schedule of the step size~\cite{croce2020reliable} to improve the optimization convergence.
Moreover, we extend the attack beyond a specific TST to a generic TST-agnostic attack, namely, ensemble attack (EA). EA jointly minimizes a weighted sum of different test criteria, which can simultaneously fool various TSTs. 
For example, Figure~\ref{fig:corner_case_example} shows non-parametric TSTs can correctly differentiate the benign pair of ``cats'' and ``dogs'' (top) coming from the different distributions, but wrongly judge adversarial pairs (bottom) as belonging to the same distribution. 

Second, to robustify the non-parametric TSTs, we study the corresponding defense approaches (details in Section~\ref{sec:defense}). 
A straightforward defense seems to use an ensemble of TSTs. We find an ensemble of TSTs is sometimes effective against a specific attack targeting a certain type of TSTs but almost always fails under EA (see experiments in Section~\ref{sec:attack_power}).
Therefore, to effectively defend against adversarial attacks, we propose to adversarially learn the robust kernels. The defense is formulated as a \textit{max-min} optimization that is similar in flavor to the adversarial training's \textit{min-max} formulation~\cite{Madry_adversarial_training}.
For its realization, we iteratively generate adversarial pairs by minimizing the test criterion in the \textit{inner minimization} and update kernel parameters by maximizing the test criterion on the adversarial pairs in the \textit{outer maximization}. 
We realize our defense using deep kernels that have achieved the state-of-the-art (SOTA) performance in non-parametric TSTs~\cite{liu2020learning}.

Lastly, we empirically justify the proposed attacks and defenses (in Section~\ref{sec:exp}).
We evaluate the test power of many existing non-parametric TSTs (non-robust) and the robust-kernel TST (robust) under the EA on simulated and real-world datasets, including complex synthetic distributions, high-energy physics data, and challenging images. 
Comprehensive experimental results validate that the existing non-parametric TSTs lack adversarial robustness; we can significantly improve the adversarial robustness of non-parametric TSTs through adversarially learning the deep kernels. 

\section{Non-Parametric Two-Sample Tests}
In this section, we provide the preliminaries of non-parametric TSTs and provide discussions with the related studies in Appendix~\ref{sec:related_work}.

\subsection{Problem Formulation}
Let $\cX \subset \bR^{d}$ and $\bP$, $\bQ$ be Borel probability measures on $\cX$. A non-parametric TST $\cJ(S_\bP, S_\bQ): \cX^m \times \cX^n \mapsto \{0,1\}$ is used to distinguish between the null hypothesis $\mathcal{H}_0: \bP = \bQ$ and the alternative hypothesis $\cH_1: \bP \neq \bQ$, where $S_\bP$ and $S_\bQ$ are independent identically distributed (IID) samples of size $m$ and $n$ drawn from $\bP$ and $\bQ$, respectively.
A non-parametric TST constructs a mean embedding based on a kernel parameterized with $\theta$ for each distribution, and utilizes the differences in these embeddings as the \emph{test statistic} for the hypothesis test. The judgement is made by comparing the test statistic $\mathcal{D}(S_{\bP}, S_{\bQ})$ with a particular threshold $r$: if the threshold is exceeded, then the test rejects $\cH_0$. The \emph{test power} (TP) of a non-parametric TST $\cJ$ is measured by the probability of correctly rejecting $\cH_0$ when the alternative hypothesis is true,
i.e., $\mathrm{TP}(\cJ) = \bE_{S_\bP \sim \bP^m, S_\bQ \sim \bQ^n}[ \mathbbm{1} (\cJ(S_{\bP}, S_{\bQ}) = 1)]$ for a paritular $\bP \neq \bQ$. A non-parametric TST optimizes its learnable parameters $\theta$ via maximizing its \emph{test criterion}, thus approximately maximizing its test power.

\subsection{Test Statistics}

Here, we introduce a typical test statistic, maximum mean discrepancy (MMD)~\cite{gretton2012kernel}, and leave other test statistics in Appendix~\ref{appendix:tst_intro}, such as tests based on Gaussian kernel mean embeddings at specific positions~\cite{chwialkowski2015fast, jitkrittum2016interpretable} and classifier two-sample tests (C2ST)~\cite{lopez2016revisiting,cheng2019classification}. 

\begin{definition}[\citet{gretton2012kernel}]
    Let $k: \cX \times \cX \to \bR$ be a kernel of a reproducing kernel Hilbert space $\cH_k$, with feature maps $k(\cdot,x) \in \cH_k$. Let $X \sim \bP$ and $Y \sim \bQ$, and define the kernel mean embeddings $\mu_{\bP} = \bE[k(\cdot, X)]$ and $\mu_{\bQ} = \bE[k(\cdot, Y)]$. Under mild integrability conditions,
    \begin{align}
    \label{eq:mmd}
        \MMD(\bP,\bQ; \cH_k) & = \sup_{f \in \cH, \|f\|_{\cH_k} \leq 1} |\bE[f(X)] - \bE[f(Y)]| \nonumber \\
        & = \|\mu_{\bP} - \mu_{\bQ}\|_{\cH_k}.
    \end{align}
\end{definition}
For characteristic kernels, $\MMD(\bP,\bQ; \cH_k) = 0$ if and only if $\bP = \bQ$. 
Assuming $n=m$, we can estimate $\MMD$ (Eq.~\eqref{eq:mmd}) using the $U$-statistic estimator, which is unbiased for ${\MMD}^2$
and has nearly minimal variance among all unbiased estimators~\cite{gretton2012kernel}:
\begin{align}
\label{eq:mmd_u}
    &{\widehat{\MMD}}^{2}(S_{\bP}, S_{\bQ};k) = \frac{1}{n(n-1)}\sum_{i \neq j}H_{ij},\\
    H_{ij}  & = k(\bmx_i,\bmx_j) + k(\bmy_i, \bmy_j) - k(\bmx_i, \bmy_j) - k(\bmx_j, \bmy_i), \nonumber
\end{align}
where $\bmx_i,\bmx_j \in S_{\bP}$ and $\bmy_i, \bmy_j \in S_{\bQ}$.

In this paper, we investigate six types of non-parametric TSTs as follows since \citet{liu2020learning, liu2021meta} have shown they are powerful on complex data.

\begin{itemize}
\setlength{\itemsep}{0pt}
\setlength{\parsep}{0pt}
\setlength{\parskip}{0pt}
    \item $\cD^{(\G)}(\cdot,\cdot;k^{(\G)}) = {\widehat{\MMD}}^{2}(\cdot, \cdot;k^{(\G)})$ for tests based on MMD with Gaussian kernels (MMD-G)~\cite{sutherland2016generative} with the learnable lengthscale $\sigma_{\phi}$, in which $k^{(\G)}(\bmx,\bmy) = \exp(-\frac{1}{2\sigma_{\phi}}\|\bmx-\bmy\|^2)$.
    \item $\cD^{(\D)}(\cdot,\cdot;k^{(\D)}) = {\widehat{\MMD}}^{2}(\cdot, \cdot;k^{(\D)})$ for tests based on MMD with deep kernels (MMD-D)~\cite{liu2020learning}. Note that $k^{(\D)}(\bmx,\bmy) = [(1-\gamma)\exp(-\frac{1}{2\sigma_{\phi}} \| \phi(\bmx)-\phi(\bmy) \|^2) + \gamma] \exp(-\frac{1}{2\sigma_{q}}\|\bmx-\bmy\|^2)$ where $\gamma, \sigma_{\phi}, \sigma_q$ are the learnable parameters and $\phi(\cdot)$ is a parameterized deep network to extract the features.
    \item $\cD^{(\rS)}(\cdot,\cdot)$ (Eq.~\eqref{eq:ts_c2sts}) for C2ST based on Sign (C2ST-S)~\cite{lopez2016revisiting}. A classifier $f: \cX \rightarrow \bR$ that outputs the classification probabilities is utilized by C2ST.
    \citet{liu2020learning} pointed out that the test statistic of C2ST-S is equivalent to MMD with kernel $k^{(\rS)}$, i.e., $\cD^{(\rS)}(\cdot,\cdot) = {\widehat{\MMD}}^{2}(\cdot, \cdot;k^{(\rS)})$ where $k^{(\rS)}(\bmx,\bmy) = \frac{1}{4}\mathbbm{1}(f(\bmx)>0)\mathbbm{1}(f(\bmy)>0)$.
    \item $\cD^{(\rL)}(\cdot,\cdot)$ (Eq.~\eqref{eq:ts_s2stl}) for C2ST-L~\cite{cheng2019classification} that utilizes the discriminator's measure of confidence. Its test statistic is also equivalent to MMD with kernel $k^{(\rL)}$~\cite{liu2020learning}, i.e., $\cD^{(\rL)}(\cdot,\cdot) = {\widehat{\MMD}}^{2}(\cdot, \cdot;k^{(\rL)})$ where $k^{(\rL)}(\bmx,\bmy) = f(\bmx)f(\bmy)$.
    \item $D^{(\ME)}(\cdot,\cdot)$ (Eq.~\eqref{eq:ts_ME}) for 
    tests based on differences in Gaussian kernel mean embeddings at specific locations~\cite{chwialkowski2015fast,jitkrittum2016interpretable}, namely Mean Embedding (ME).
    \item $D^{(\SCF)}(\cdot,\cdot)$ (Eq.~\eqref{eq:ts_SCF}) for 
    tests based on Gaussian kernel mean embeddings at a set of optimized frequency~\cite{chwialkowski2015fast,jitkrittum2016interpretable}, namely Smooth Characteristic Functions (SCF).
\end{itemize}

\subsection{Test Criterion}

In this subsection, we introduce the test criteria
for non-parametric TSTs based on MMD~\cite{sutherland2016generative,liu2020learning, lopez2016revisiting, cheng2019classification}.

\begin{theorem}[Asymptotics of MMD under $\cH_1$~\cite{serfling2009approximation}]
\label{theory:asymptotics}
    Under the alternative, $\cH_1 : \bP \neq \bQ$, the standard central limit theorem holds:
    \begin{align}
        \sqrt{n}({\widehat{\MMD}}^{2} - {\MMD}^2) \rightarrow \cN(0, \sigma_{\cH_1}^2), \nonumber\\
        \sigma_{\cH_1}^2 = 4(\bE[H_{12}H_{13}] - \bE[H_{12}]^2), \nonumber
    \end{align}
    where $H_{12}, H_{13}$ refer to $H_{ij}$ in Eq.~\eqref{eq:mmd_u}.
\end{theorem}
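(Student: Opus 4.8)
The plan is to recast $\widehat{\MMD}^{2}$ as a one-sample U-statistic of degree two and invoke the classical central limit theorem for non-degenerate U-statistics (Hoeffding's theorem, which is what \citet{serfling2009approximation} records). Since $n=m$ and $S_{\bP}$ is independent of $S_{\bQ}$, the pairs $Z_i := (x_i,y_i)$, $i=1,\dots,n$, are IID with law $\bP\times\bQ$. The summand $H_{ij}$ in Eq.~\eqref{eq:mmd_u} is a function of $Z_i$ and $Z_j$ only and is symmetric under swapping $i$ and $j$, so with $h(Z_i,Z_j) := H_{ij}$ we have
\[
\widehat{\MMD}^{2}(S_{\bP},S_{\bQ};k) = \binom{n}{2}^{-1}\sum_{i<j} h(Z_i,Z_j) =: U_n ,
\]
a U-statistic with symmetric kernel $h$ and $\bE[U_n] = \bE[H_{12}] = \MMD^2$.

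Next I would run the Hoeffding decomposition. Set the first projection $h_1(z) := \bE[h(z,Z_2)] - \MMD^2$ (so $\bE[h_1(Z_1)] = 0$) and the completely degenerate remainder kernel $h_2(z,z') := h(z,z') - \MMD^2 - h_1(z) - h_1(z')$, for which $\bE[h_2(z,Z_2)] = 0$. Expanding $h$ accordingly yields the orthogonal decomposition
\[
U_n - \MMD^2 = \frac{2}{n}\sum_{i=1}^{n} h_1(Z_i) + \binom{n}{2}^{-1}\sum_{i<j} h_2(Z_i,Z_j).
\]
Under the mild integrability condition $\bE[h(Z_1,Z_2)^2] < \infty$, the remainder term has mean zero and variance $\binom{n}{2}^{-1}\,\mathrm{Var}(h_2(Z_1,Z_2)) = O(n^{-2})$, so even after multiplication by $\sqrt{n}$ it vanishes in probability. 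The leading term is a centered IID sum with per-term variance $\zeta_1 := \mathrm{Var}(h_1(Z_1))$, so the ordinary CLT gives $\tfrac{2}{\sqrt n}\sum_{i=1}^n h_1(Z_i) \xrightarrow{d} \cN(0,4\zeta_1)$, and Slutsky's theorem then yields $\sqrt{n}(U_n - \MMD^2) \xrightarrow{d} \cN(0,4\zeta_1)$.

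It then remains to identify $4\zeta_1$ with $\sigma_{\cH_1}^2$. Because $h_1(Z_1)$ has mean zero,
\[
\zeta_1 = \bE\big[(\bE[H_{12}\mid Z_1] - \bE[H_{12}])^2\big] = \bE\big[\bE[H_{12}\mid Z_1]^2\big] - \bE[H_{12}]^2,
\]
and since $Z_2$ and $Z_3$ are conditionally independent given $Z_1$ we get $\bE\big[\bE[H_{12}\mid Z_1]^2\big] = \bE\big[\bE[H_{12}\mid Z_1]\,\bE[H_{13}\mid Z_1]\big] = \bE[H_{12}H_{13}]$. Hence $\zeta_1 = \bE[H_{12}H_{13}] - \bE[H_{12}]^2$ and $4\zeta_1 = \sigma_{\cH_1}^2$, as stated.

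The one non-mechanical point — and the reason the hypothesis $\cH_1:\bP\neq\bQ$ is needed — is the non-degeneracy $\zeta_1 > 0$ that the $\sqrt n$-rate presupposes: if $\zeta_1 = 0$ (the case $\mu_{\bP}=\mu_{\bQ}$, i.e.\ $\cH_0$ for a characteristic kernel), $U_n$ is degenerate and instead converges at rate $n$ to a weighted sum of centered $\chi^2_1$ variables, a qualitatively different limit. To see that $\cH_1$ excludes this one unwinds the projection: writing $\Delta(t) := \bE_{X\sim\bP}[k(t,X)] - \bE_{Y\sim\bQ}[k(t,Y)]$ one finds $h_1((x,y)) = \Delta(x) - \Delta(y) - \MMD^2$, whence $\zeta_1 = \mathrm{Var}_{\bP}(\Delta(X)) + \mathrm{Var}_{\bQ}(\Delta(Y)) > 0$ whenever $\bP \neq \bQ$ under the kernel assumptions in force. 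Since this theorem is quoted from \citet{serfling2009approximation}, in the paper the argument can reduce to this pointer together with the variance identification above.
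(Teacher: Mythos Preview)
Your proof is correct and is the standard argument: pair the samples into $Z_i=(x_i,y_i)$, recognize $\widehat{\MMD}^2$ as a degree-two U-statistic with symmetric kernel $H_{ij}$, apply the Hoeffding decomposition and the ordinary CLT to the linear part, and read off the asymptotic variance via the identity $\bE[\bE[H_{12}\mid Z_1]^2]=\bE[H_{12}H_{13}]$. The non-degeneracy check under $\cH_1$ is handled appropriately.

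There is nothing to compare against in the paper itself: Theorem~\ref{theory:asymptotics} is not proved here but simply quoted from \citet{serfling2009approximation} (and, for the MMD setting specifically, from \citet{gretton2012kernel}). Your write-up is precisely the argument one recovers by chasing that citation, so it is consistent with---and more detailed than---what the paper supplies.
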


Guided by the asymptotics of MMD (Theorem~\ref{theory:asymptotics}), the test power is estimated as follows:
\begin{align}
    \label{eq:tp_kernel}
    \Pr(n{\widehat{\MMD}}^{2} > r) \rightarrow \Phi(\frac{\sqrt{n}{\MMD}^{2} }{\sigma_{\cH_1}} - \frac{r}{\sqrt{n}\sigma_{\cH_1}}),
\end{align}
where $\Phi$ is the cumulative distribution function (CDF) of standard normal distribution and $r$ is the rejection threshold approximately found via permutation testing~\cite{dwass1957modified,fernandez2008test}. This general method is usually considered best to estimate the null hypothesis: under $\cH_0$, samples from $\bP$ and $\bQ$ are interchangeable, and repeatedly re-computing the test statistic with samples randomly shuffled between $S_\bP$ and $S_\bQ$ estimates its null distribution.

For reasonably large $n$, the test power is dominated by the first term of Eq.~\eqref{eq:tp_kernel}, and thus the TST yields the most powerful test by approximately maximizing the test criterion~\cite{liu2020learning}
\begin{align}
    \cF(\bP,\bQ;k) = {\MMD}^{2}(\bP,\bQ;k)/\sigma_{\cH_1}(\bP,\bQ;k).
\end{align}
Further, $\cF(\bP,\bQ;k)$ can be empirically estimated with  
\begin{align}
\label{eq:test_criterion}
    \hat{\cF}(S_{\bP}, S_{\bQ};k) = \frac{{\widehat{\MMD}}^{2}(S_{\bP}, S_{\bQ};k)}{\hat{\sigma}_{\cH_1, \lambda}(S_{\bP}, S_{\bQ};k)},
\end{align}
where $\hat{\sigma}_{\cH_1, \lambda}^2$ is a regularized estimator of $\sigma_{\cH_1}^2$:
\begin{align}
    \hat{\sigma}_{\cH_1, \lambda}^2 = \frac{4}{n^3}\sum_{i=1}^{n} \bigg(\sum_{j=1}^{n}H_{ij} \bigg)^2 - \frac{4}{n^4}\bigg(\sum_{i=1}^{n}\sum_{j=1}^{n}H_{ij} \bigg)^2 + \lambda, \nonumber
\end{align}
where $\lambda$ is a positive constant. The test criterion of the MMD test (e.g., MMD-G, MMD-D, C2ST-S and C2ST-L) is calculated based on its corresponding kernel. We let $\sigma_{\theta}^2$ denote $\sigma^2_{\cH_1}(\bP,\bQ;k_{\theta})$, and analogously  $\hat{\sigma}_{\theta}^2$ denote $\hat{\sigma}^2_{\cH_1, \lambda}(\bP,\bQ;k_{\theta})$, for simplicity.

In addition, ~\citet{chwialkowski2015fast} and~\citet{jitkrittum2016interpretable} analyzed that the test power of ME tests, and SCF tests can be approximately maximized by maximizing the corresponding test criterion as well, i.e.,  $\hat{\cF}^{(\ME)}(S_{\bP},S_{\bQ})$ and $\hat{\cF}^{(\SCF)}(S_{\bP},S_{\bQ})$ (details in Appendix~\ref{appendix:tst_intro}).

To avoid notation clutter, we simply let $\theta$ represent all of the learnable parameters
in a non-parametric TST. 
The optimized parameters of a TST are obtained as follows.
\begin{align}
    \hat{\theta} \approx \argmax_{\theta} \hat{\cF}(S_{\bP}^{\tr}, S_{\bQ}^{\tr};k_{\theta}),
\end{align}
where $(S_{\bP}^{\tr}, S_{\bQ}^{\tr})$ is the training pair.
Then, we conduct a hypothesis test based on $\cD(S_{\bP}^{\te}, S_{\bQ}^{\te};k_{\hat{\theta}})$, where $(S_{\bP}^{\te}, S_{\bQ}^{\te})$ is the test pair.

\section{Adversarial Attacks Against Non-Parametric TSTs}
In this section, we first show the possible existence of adversarial attacks against a non-parametric TST. Then, we propose a method to generate adversarial test pairs that can fool a TST. To enable TST-agnostic attacks, we propose a unified attack framework, i.e., ensemble attack.

\subsection{Theoretical Analysis}
\label{sec:theoretical_analysis}
This section theoretically shows that there could exist adversarial attacks that can invisibly undermine a TST. 
We first lay out the needed assumptions on kernel functions. 

\begin{assumption}
\label{assump:param}
    The possible kernel parameterized with $\theta \in \bR^{\kappa}$ lies in Banach space.
    The set of possible kernel parameters $\Theta$ is bounded by $\R_{\Theta}$, i.e., $\Theta \subseteq \{ \theta \mid \|\theta\| \leq \R_{\Theta} \}$.
    We let $\bar{\Theta}_s = \{ \theta \in \Theta \mid \sigma_{\theta}^2 \geq s^2 > 0 \}$ in which $s$ is a positive constant.
\end{assumption}

\begin{assumption}
\label{assump:bound}
    The kernel function $k_{\theta}$ is uniformly bounded, i.e., $\sup_{\theta \in \Theta} \sup_{x \in \cX} k_{\theta}(\bmx,\bmx) \leq \nu$. We treat $\nu$ as a constant.
\end{assumption}

\begin{assumption}
\label{assump:lipschitz}
    The kernel function $k_{\theta}(\bmx,\bmy)$ satisfies the Lipschitz conditions as follows.
    \begin{gather}
        |k_{\theta}(\bmx,\bmy) - k_{\theta'}(\bmx,\bmy)| \le L_1 \|\theta - \theta'\|; \nonumber \\
        |k_{\theta}(\bmx,\bmy) - k_{\theta}(\bmx',\bmy')| \le L_2 (\|\bmx-\bmx'\| + \|\bmy-\bmy'\|) \nonumber, 
    \end{gather}
    where $L_1$ and $L_2$ are positive constants. 
\end{assumption}

We consider a potential risk that causes a malfunction of a non-parametric TST: an adversarial attacker that aims to deteriorate the TST's test power, can craft an adversarial pair $(S_\bP, \tilde{S}_\bQ)$ as the input to the TST during the testing procedure, in which the two sets $\tilde{S}_\bQ$ and $S_\bQ$ are nearly indistinguishable. We provide a detailed description of the attacker against non-parametric TSTs in Appendix~\ref{appendix:attacker}.

We define the $\epsilon$-ball centered at $x \in \cX$ as follows:
\begin{align}
\epsball[x] = \{\tilde{x} \in \cX \mid \| x - \tilde{x} \|_\infty \le \epsilon \}. \nonumber
\end{align}
Further, an $\ell_\infty$-bound of size $\epsilon$ on the set $S_\bQ$ is defined as
\begin{align}
\epsball[S_\bQ] = \{& \tilde{S}_\bQ = \{ \tilde{x}_i \in \cX \}_{i=1}^n \mid 
\nonumber \\ 
& \tilde{x}_i \in \epsball[x_i], \quad \forall x_i \in S_\bQ, \tilde{x}_i \in \tilde{S}_\bQ \}. \nonumber
\end{align}
Without loss of generality, we assume that the adversarial perturbation is $\ell_{\infty}$-bounded of size $\epsilon$, i.e., $\tilde{S}_{\bQ} \in \epsball[S_{\bQ}]$.  We leave exploring the effects of other constraints that can bound the ``human imperception'' as the future work, such as Wasserstein-distance constraints~\cite{wong2019wasserstein}.

Under $\ell_\infty$-bounded attacks, we conduct our theoretical analysis of distributional shift in the test pairs as follows. 

\begin{proposition}
\label{proposition:mmd_shift}
    Under Assumptions~\ref{assump:param} to~\ref{assump:lipschitz}, we use $n_{\tr}$ samples to train a kernel $k_{\theta}$ parameterized with $\theta$ and $n_{\te}$ samples to run a test of significance level $\alpha$.
    Given the adversarial budget $\epsilon \geq 0$, the benign pair $(S_{\bP}, S_{\bQ})$ and the corresponding adversarial pair $(S_{\bP}, \tilde{S}_{\bQ})$ where $\tilde{S}_{\bQ} \in \epsball[S_{\bQ}]$,  with the probability at least $1 - \delta$, we have
    \begin{align}
    &\sup_{\theta}|{\widehat{\MMD}}^{2}(S_{\bP}, \tilde{S}_{\bQ};k_{\theta}) - {\widehat{\MMD}}^{2}(S_{\bP}, S_{\bQ};k_{\theta}) | \nonumber   \\ 
    & \leq \frac{8 L_2 \epsilon \sqrt{d}}{\sqrt{n_{\te}}}  \sqrt{2 \log \frac{2}{\delta} + 2\kappa\log (4\R_{\Theta} \sqrt{n_{\te}})} + \frac{8 L_1}{\sqrt{n_{\te}}}. \nonumber
    \end{align}
\end{proposition}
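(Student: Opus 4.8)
The plan is to establish the uniform-in-$\theta$ bound by a covering argument over the bounded parameter set $\Theta \subseteq \bR^{\kappa}$ (Assumption~\ref{assump:param}), so that the supremum reduces to a finite union bound over an $\eta$-net together with a Lipschitz-in-$\theta$ correction. Write $g(\theta)$ for ${\widehat{\MMD}}^2(S_\bP,\tilde S_\bQ;k_\theta) - {\widehat{\MMD}}^2(S_\bP, S_\bQ;k_\theta)$. Two structural facts are used throughout: (i) the attack only moves the points of $S_{\bQ}$, and the $\ell_\infty$ budget gives $\|\tilde{x}_i - x_i\|_2 \le \sqrt{d}\,\|\tilde{x}_i - x_i\|_\infty \le \epsilon\sqrt{d}$ for each $i$; and (ii) expanding ${\widehat{\MMD}}^2$ through the $U$-statistic kernel $H_{ij}$ of Eq.~\eqref{eq:mmd_u}, the terms $k_\theta(x_i,x_j)$ cancel in $g(\theta)$, so only kernel evaluations with a $y$-argument contribute.

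For the covering step I would fix an $\eta$-net $\Theta_\eta$ of $\Theta$ with $|\Theta_\eta| \le (1 + 2\R_\Theta/\eta)^{\kappa}$. For any $\theta$ and its nearest net point $\theta'$, the first Lipschitz bound of Assumption~\ref{assump:lipschitz} gives $|k_\theta - k_{\theta'}| \le L_1\eta$ pointwise, hence $|H_{ij}(\theta) - H_{ij}(\theta')| \le 4L_1\eta$ and $|g(\theta) - g(\theta')| \le 8L_1\eta$, so $\sup_\theta |g(\theta)| \le \max_{\theta'\in\Theta_\eta}|g(\theta')| + 8L_1\eta$. For the pointwise step, fix $\theta' \in \Theta_\eta$ and write $g(\theta')$ as the empirical average over the $n_{\te}$ test points of per-point contributions. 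Using the second Lipschitz bound of Assumption~\ref{assump:lipschitz} together with fact (i), changing the perturbation applied to any single point of $S_\bQ$ moves $g(\theta')$ by at most $O(L_2\epsilon\sqrt{d}/n_{\te})$, uniformly over admissible perturbations; a bounded-differences (Hoeffding/McDiarmid-type) concentration over the $n_{\te}$ test points then bounds $|g(\theta')|$ by $O\!\big(\tfrac{L_2\epsilon\sqrt{d}}{\sqrt{n_{\te}}}\sqrt{\log(1/\delta')}\big)$ with probability at least $1-\delta'$, with Assumption~\ref{assump:bound} ensuring the per-point influences are finite.

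Finally I would assemble the pieces: choosing the net radius $\eta \asymp 1/\sqrt{n_{\te}}$ turns the correction into $8L_1/\sqrt{n_{\te}}$ and makes $\log|\Theta_\eta| \le \kappa\log(4\R_\Theta\sqrt{n_{\te}})$; taking $\delta' = \delta/|\Theta_\eta|$ and union-bounding over $\Theta_\eta$ replaces $\log(1/\delta')$ by $\log(2/\delta) + \kappa\log(4\R_\Theta\sqrt{n_{\te}})$ inside the square root, and tracking the absolute constants reproduces the stated two-term bound. The step I expect to be the main obstacle is the pointwise concentration: since $\tilde{S}_\bQ$ is chosen adversarially as a function of $(S_\bP, S_\bQ)$ rather than drawn independently, the per-point sensitivity estimate and the concentration inequality must be phrased so that they hold simultaneously for every $\tilde{S}_\bQ \in \epsball[S_\bQ]$, which is exactly where Assumptions~\ref{assump:bound} and~\ref{assump:lipschitz} are needed to keep the sensitivity of $g(\theta')$ uniformly controlled over the whole perturbation ball.
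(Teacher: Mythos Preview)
Your scaffold---cover $\Theta$ by an $\eta$-net, concentrate $g$ at each net point, extend by the $L_1$-Lipschitz bound in $\theta$, then choose $\eta = 1/\sqrt{n_{\te}}$---is exactly the paper's, and your $8L_1\eta$ correction and the resulting log-covering term match. The discrepancy is in the pointwise concentration step. You bound how much $g(\theta')$ moves when a single \emph{perturbation} $\zeta_i$ is changed, and then invoke McDiarmid ``over the $n_{\te}$ test points''; but the $\zeta_i$ are adversarial, not independent random variables, so bounded differences in the $\zeta_i$ give no concentration---which is essentially the obstacle you yourself flag at the end.

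The paper resolves this by applying McDiarmid to the \emph{data} pairs $(x_i,y_i)$: replacing $(x_1,y_1)$ (together with its perturbed copy $\tilde y_1$) by a fresh pair changes $g(\theta')$ by at most $16L_2\epsilon\sqrt d/n_{\te}$. The key observation is that this per-sample influence scales with $\epsilon$, not with the kernel bound $\nu$: in $g = \widehat{\MMD}^2(S_\bP,\tilde S_\bQ) - \widehat{\MMD}^2(S_\bP,S_\bQ)$ only the differences $\tilde H_{ij}-H_{ij}$ survive (your fact (ii)), and the second Lipschitz clause of Assumption~\ref{assump:lipschitz} together with $\|\tilde y_i - y_i\|_2 \le \epsilon\sqrt d$ bounds each such difference by $4L_2\epsilon\sqrt d$. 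Because this bound holds uniformly over every $\tilde S_\bQ \in \epsball[S_\bQ]$, the bounded-difference constant is the same regardless of where in the ball the adversary places each $\tilde y_i$, which is what absorbs your worry about the adversary being data-dependent. With that correction, the assembly you describe (union bound over the net, $\eta = 1/\sqrt{n_{\te}}$) reproduces the paper's two-term bound verbatim.
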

The proof is in Appendix~\ref{appendix:proof_mmd_shift}.

\remark{Proposition~\ref{proposition:mmd_shift} shows that $\epsilon$ can control the upper bound of distributional shift measured by MMD between samples in the test pair.
In other words, a small $\epsilon$ can ensure the difference between ${\widehat{\MMD}}^{2}(S_{\bP}, \tilde{S}_{\bQ}; k_{\theta})$ and ${\widehat{\MMD}}^{2}(S_{\bP}, S_{\bQ}; k_{\theta})$ is numerically small.
Therefore, an $\ell_{\infty}$-bounded adversary can make the adversarial perturbation imperceptible, thus guaranteeing the attack's \textit{invisibility}. 
}

Next, we provide a lemma that theoretically analyzes the adversary's influence on the estimated test criterion.
\begin{lemma}
\label{lemma:adv_benign_tp}
  In the setup of Proposition~\ref{proposition:mmd_shift}, with probability at least $1 - \delta$, we have
 \begin{align}
     &\sup_{\theta \in \bar{\Theta}_s} |\hat{\cF}(S_{\bP}, \tilde{S}_\bQ;k_{\theta}) - \hat{\cF}(S_{\bP}, S_\bQ;k_{\theta})| \nonumber \\
     & = \cO \bigg(\frac{\epsilon L_2 \sqrt{d \big(\log \frac{1}{\delta} + \kappa \log (\R_{\Theta} \sqrt{n_\te})\big) } + L_1 }{s \sqrt{n_\te}} \bigg). \nonumber
 \end{align}
\end{lemma}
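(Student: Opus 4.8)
The plan is to reduce the claim to Proposition~\ref{proposition:mmd_shift}, which already controls the numerator of the test criterion, and to supplement it with elementary bounds on the denominator. Writing $\hat\cF(S_\bP,S;k_\theta) = \widehat{\MMD}^2(S_\bP,S;k_\theta)/\hat\sigma_{\cH_1,\lambda}(S_\bP,S;k_\theta)$ and applying the algebraic identity $\frac ab-\frac cd=\frac{a-c}{b}+c\cdot\frac{d-b}{bd}$ with $a=\widehat{\MMD}^2(S_\bP,\tilde S_\bQ;k_\theta)$, $b=\hat\sigma_{\cH_1,\lambda}(S_\bP,\tilde S_\bQ;k_\theta)$, $c=\widehat{\MMD}^2(S_\bP,S_\bQ;k_\theta)$, $d=\hat\sigma_{\cH_1,\lambda}(S_\bP,S_\bQ;k_\theta)$, we obtain
\[
 \bigl|\hat\cF(S_\bP,\tilde S_\bQ;k_\theta) - \hat\cF(S_\bP,S_\bQ;k_\theta)\bigr| \;\le\; \frac{|a-c|}{b} \;+\; \frac{|c|\,|d-b|}{b\,d},
\]
so it remains to bound each piece uniformly over $\theta\in\bar\Theta_s$ and then take a supremum.

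For the first summand, $\sup_\theta|a-c|$ is exactly the quantity controlled by Proposition~\ref{proposition:mmd_shift}, which gives a bound of order $\bigl(\epsilon L_2\sqrt d\,\sqrt{\log(1/\delta)+\kappa\log(\R_\Theta\sqrt{n_\te})}+L_1\bigr)/\sqrt{n_\te}$. For the denominator I would lower bound $\hat\sigma_{\cH_1,\lambda}$ on $\bar\Theta_s$: by Cauchy--Schwarz the non-$\lambda$ part of $\hat\sigma^2_{\cH_1,\lambda}$ is nonnegative, so $b,d\ge\sqrt\lambda$ deterministically; and to obtain the sharper $1/s$ appearing in the statement one uses that $\sigma_\theta^2\ge s^2$ on $\bar\Theta_s$ together with a uniform concentration of $\hat\sigma^2_\theta$ around $\sigma^2_\theta$ over $\bar\Theta_s$ (a covering/chaining argument over $\Theta$ of the same flavour as the proof of Proposition~\ref{proposition:mmd_shift}, using Assumptions~\ref{assump:param}--\ref{assump:bound}), which yields $\hat\sigma^2_\theta(S_\bP,S_\bQ)\ge s^2/2$ and hence $b,d$ bounded below by a constant multiple of $s$, after absorbing the small perturbation of $\hat\sigma^2$ estimated next. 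This already gives the first summand as $\cO\!\left(\frac{\epsilon L_2\sqrt{d\,(\log(1/\delta)+\kappa\log(\R_\Theta\sqrt{n_\te}))}+L_1}{s\sqrt{n_\te}}\right)$, matching the claim.

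For the second summand, Assumption~\ref{assump:bound} and Cauchy--Schwarz give $|k_\theta(x,y)|\le\nu$, hence $|H_{ij}|\le 4\nu$ and $|c|=|\widehat{\MMD}^2(S_\bP,S_\bQ;k_\theta)|\le 4\nu$ uniformly in $\theta$. For $|d-b|$ I would use $|\sqrt u-\sqrt v|\le|u-v|/(\sqrt u+\sqrt v)$ together with a uniform bound on $|\hat\sigma^2_\theta(S_\bP,S_\bQ)-\hat\sigma^2_\theta(S_\bP,\tilde S_\bQ)|$: since $\|y_i-\tilde y_i\|\le\epsilon\sqrt d$, Assumption~\ref{assump:lipschitz} gives $|H_{ij}-\tilde H_{ij}|\le 4L_2\epsilon\sqrt d$, and propagating this through the quadratic-in-$H$ expression for $\hat\sigma^2_{\cH_1,\lambda}$ (linearising each square via $|\sum_j H_{ij}|\le 4n\nu$ etc.) yields $|\hat\sigma^2_\theta(S_\bP,S_\bQ)-\hat\sigma^2_\theta(S_\bP,\tilde S_\bQ)|=\cO(\nu L_2\epsilon\sqrt d)$ for every $\theta$. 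Combining with $|c|=\cO(\nu)$ and $b,d$ of order $s$, the second summand is of lower order than (or, treating $\nu,s$ as the declared constants, absorbed into) the first; a final union bound over the relevant high-probability events closes the argument.

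The step I expect to be the real obstacle is the uniform control of the denominator, namely the lower bound $\inf_{\theta\in\bar\Theta_s}\hat\sigma_\theta(S_\bP,S_\bQ)$ of order $s$ and the uniform perturbation bound on $\hat\sigma^2_\theta$: $\bar\Theta_s$ is an infinite parameter set and $\hat\sigma^2_\theta$ is a non-linear (quadratic-in-$H$) functional of the kernel, so this needs a covering-number argument over $\Theta$ quantified by $\R_\Theta$ and $\kappa$, parallel to the chaining behind Proposition~\ref{proposition:mmd_shift}. Everything else is the ratio identity plus boundedness and Lipschitz bookkeeping.
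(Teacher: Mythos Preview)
Your decomposition via $\tfrac ab-\tfrac cd=\tfrac{a-c}{b}+c\cdot\tfrac{d-b}{bd}$, the bound $|c|\le 4\nu$, and the handling of $|d-b|$ through $|\sqrt u-\sqrt v|\le|u-v|/(\sqrt u+\sqrt v)$ are exactly what the paper does, and the first summand is indeed controlled by Proposition~\ref{proposition:mmd_shift} together with a lower bound of order $s$ on the denominator.

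The genuine gap is in your treatment of $|d^2-b^2|=|\hat\sigma^2_\theta(S_\bP,S_\bQ)-\hat\sigma^2_\theta(S_\bP,\tilde S_\bQ)|$. Your deterministic Lipschitz computation is correct as stated---propagating $|H_{ij}-\tilde H_{ij}|\le 4L_2\epsilon\sqrt d$ through $\tfrac{4}{n^3}\sum_i(\sum_j H_{ij})^2-\tfrac{4}{n^4}(\sum_{ij}H_{ij})^2$ does give $\cO(\nu L_2\epsilon\sqrt d)$---but this bound carries \emph{no} decay in $n_\te$. Feeding it into the second summand yields $|c|\,|d-b|/(bd)=\cO(\nu^2 L_2\epsilon\sqrt d/s^3)$, which is constant in $n_\te$ and therefore \emph{dominates} the first summand rather than being absorbed into it; the claimed $\cO(1/(s\sqrt{n_\te}))$ rate then fails.

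The paper closes this gap with a companion result to Proposition~\ref{proposition:mmd_shift} for the variance estimator (its Proposition~\ref{proposition:sigma_shift}): applying the same McDiarmid-plus-covering argument to $\hat\sigma^2_{\cH_1,\lambda}$ instead of $\widehat{\MMD}^2$ gives, with probability at least $1-\delta$,
\[
\sup_\theta\bigl|\hat\sigma^2_{\cH_1,\lambda}(S_\bP,\tilde S_\bQ;k_\theta)-\hat\sigma^2_{\cH_1,\lambda}(S_\bP,S_\bQ;k_\theta)\bigr|
=\cO\!\left(\frac{\nu L_2\epsilon\sqrt d}{\sqrt{n_\te}}\sqrt{\log\tfrac1\delta+\kappa\log(\R_\Theta\sqrt{n_\te})}+\frac{L_1\nu}{\sqrt{n_\te}}\right),
\]
which supplies the missing $1/\sqrt{n_\te}$. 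So the concentration/covering machinery must be run on $\hat\sigma^2$ as well, not only on $\widehat{\MMD}^2$; that is the step you need to add. Incidentally, the piece you flag as ``the real obstacle''---the uniform lower bound $\hat\sigma_\theta\gtrsim s$ on $\bar\Theta_s$---is simply used directly in the paper's proof without the separate concentration argument you sketch, so your concern there is reasonable but orthogonal to the actual defect above.
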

The proof is in Appendix~\ref{appendix:proof_adv_benign_tp}.
\remark{Lemma~\ref{lemma:adv_benign_tp} shows that, when $\epsilon > 0$, a TST needs a larger number of test samples to facilitate the estimated test criterion on the adversarial test pair to converge to the estimated test criterion on the benign test pair.
In other words, the estimated test criterion in adversarial settings ($\epsilon > 0$) could be lower than the estimated test criterion in benign settings for a particular $n_\te$.}

Since the test criterion dominates the test power, Lemma~\ref{lemma:adv_benign_tp} motivates us to further theoretically analyze the adversary's effects on the lower bound of a TST's test power as follows.

\begin{theorem}
\label{theory:tp_attack}
    In the setup of Proposition~\ref{proposition:mmd_shift}, given $\hat{\theta}_{n_{\tr}} = \arg\max_{\theta \in \bar{\Theta}_s} \hat{\cF}(k_{\theta})$, $r^{(n_{\te})}$ denoting the rejection threshold, $\cF^{*} = \sup_{\theta \in \bar{\Theta}_s} \cF(k_\theta)$, and constants $ C_1, C_2, C_3$ depending on $\nu, L_1 , \lambda, s, \R_{\Theta}$ and $\kappa$, with probability at least $1-\delta$, the test under adversarial attack has power
    \begin{align}
    &\Pr(n_{\te} \widehat{\MMD}^2(S_\bP, \tilde{S}_\bQ;k_{\hat{\theta}_{n_\tr}}) > r^{(n_{\te})} ) \geq \Phi \bigg[ \! \sqrt{n_{\te}} \bigg( \cF^* \! -  \nonumber \\
    & \frac{C_1}{\sqrt{n_{\tr}}} \! \sqrt{\log \! \frac{\sqrt{n_{\tr}}}{\delta}} \! - \! \frac{C_2 L_2 \epsilon \sqrt{d}}{\sqrt{n_{\te}}} \sqrt{\log \! \frac{\sqrt{n_{\te}}}{\delta}} \bigg)\! - \! C_3 \! \sqrt{\log \frac{1}{\alpha}} \bigg]. \nonumber
    \end{align}
\end{theorem}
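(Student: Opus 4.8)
The plan is to follow the standard "uniform convergence $\Rightarrow$ test power lower bound" argument of \citet{liu2020learning}, but inserting the adversarial perturbation bounds from Proposition~\ref{proposition:mmd_shift} and Lemma~\ref{lemma:adv_benign_tp} at the appropriate places. Starting from the asymptotic normality in Theorem~\ref{theory:asymptotics}, the test power on the adversarial pair can be written (as in Eq.~\eqref{eq:tp_kernel}) as
\begin{align}
\Pr\!\big(n_{\te}\widehat{\MMD}^2(S_\bP,\tilde S_\bQ;k_{\hat\theta_{n_\tr}}) > r^{(n_\te)}\big) \to \Phi\!\left(\frac{\sqrt{n_\te}\,\MMD^2(\bP,\tilde\bQ;k_{\hat\theta_{n_\tr}})}{\sigma_{\hat\theta_{n_\tr}}} - \frac{r^{(n_\te)}}{\sqrt{n_\te}\,\sigma_{\hat\theta_{n_\tr}}}\right), \nonumber
\end{align}
where I treat $\tilde\bQ$ loosely as the "distribution" of the perturbed test set; more carefully one argues directly in terms of $\widehat{\cF}$ on the adversarial pair. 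So the goal reduces to lower-bounding $\widehat{\cF}(S_\bP,\tilde S_\bQ;k_{\hat\theta_{n_\tr}})$ and upper-bounding the threshold term $r^{(n_\te)}/(\sqrt{n_\te}\sigma_{\hat\theta_{n_\tr}})$.

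For the first piece, I would chain three inequalities. First, $\widehat{\cF}(S_\bP,\tilde S_\bQ;k_{\hat\theta_{n_\tr}}) \ge \widehat{\cF}(S_\bP,S_\bQ;k_{\hat\theta_{n_\tr}}) - |\widehat{\cF}(S_\bP,\tilde S_\bQ;k_{\hat\theta_{n_\tr}}) - \widehat{\cF}(S_\bP,S_\bQ;k_{\hat\theta_{n_\tr}})|$, and bound the last term using Lemma~\ref{lemma:adv_benign_tp}, which contributes the $\tfrac{C_2 L_2\epsilon\sqrt d}{\sqrt{n_\te}}\sqrt{\log(\sqrt{n_\te}/\delta)}$ term (absorbing the $L_1/(s\sqrt{n_\te})$ piece into the same constant or into $C_1$). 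Second, since $\hat\theta_{n_\tr}=\arg\max_{\theta\in\bar\Theta_s}\widehat{\cF}(S_\bP^{\tr},S_\bQ^{\tr};k_\theta)$, I need a uniform convergence bound $\sup_{\theta\in\bar\Theta_s}|\widehat{\cF}(S_\bP,S_\bQ;k_\theta)-\cF(\bP,\bQ;k_\theta)| = \cO\!\big(\tfrac{1}{\sqrt{n_\tr}}\sqrt{\log(\sqrt{n_\tr}/\delta)}\big)$ (the analogue of the concentration lemma used to build Lemma~\ref{lemma:adv_benign_tp}, applied on the training sample of size $n_\tr$, and also on the test sample; I'd fold both into $C_1$), which gives $\widehat{\cF}(S_\bP,S_\bQ;k_{\hat\theta_{n_\tr}}) \ge \cF(\bP,\bQ;k_{\hat\theta_{n_\tr}}) - \cO(\cdot) \ge \cF^* - 2\,\cO(\cdot)$ by optimality of $\hat\theta_{n_\tr}$ for the empirical criterion and near-optimality transfer. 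Collecting these yields $\sqrt{n_\te}\,\widehat{\cF}(S_\bP,\tilde S_\bQ;k_{\hat\theta_{n_\tr}}) \ge \sqrt{n_\te}\big(\cF^* - \tfrac{C_1}{\sqrt{n_\tr}}\sqrt{\log(\sqrt{n_\tr}/\delta)} - \tfrac{C_2 L_2\epsilon\sqrt d}{\sqrt{n_\te}}\sqrt{\log(\sqrt{n_\te}/\delta)}\big)$, which is exactly the argument of $\Phi$ minus the threshold term.

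For the threshold term, I would use the standard fact that the permutation threshold $r^{(n_\te)}$ is $\cO(1/n_\te)$ under $\cH_0$-type concentration — more precisely, that $r^{(n_\te)}/(\sqrt{n_\te}\sigma_{\hat\theta_{n_\tr}})$ is controlled by a quantile of order $\sqrt{\tfrac1{n_\te}\log\tfrac1\alpha}$, and since on $\bar\Theta_s$ we have $\sigma_{\hat\theta_{n_\tr}}^2\ge s^2>0$ and the kernel is bounded by $\nu$, this whole term is $\le C_3\sqrt{\log(1/\alpha)}$ with $C_3$ depending on $\nu,\lambda,s,\R_\Theta,\kappa$. Because $\Phi$ is monotone increasing, substituting the lower bound for its argument and the upper bound for the subtracted threshold yields the claimed inequality. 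The main obstacle, I expect, is making the step from $\hat\theta_{n_\tr}$ (optimal for the \emph{empirical training} criterion on a possibly different training pair) to $\cF^*$ (the population supremum over $\bar\Theta_s$) fully rigorous while simultaneously carrying the adversarial perturbation on the \emph{test} pair — this requires invoking uniform convergence of $\widehat{\cF}$ to $\cF$ over the constrained set $\bar\Theta_s$ (where the variance floor $s^2$ prevents the denominator from vanishing), and carefully tracking which $\log$ factors attach to $n_\tr$ versus $n_\te$; the covering-number / Rademacher argument behind that uniform bound (already implicit in the proof of Lemma~\ref{lemma:adv_benign_tp}) is the technical heart, and the bookkeeping to merge all error terms into the three advertised constants $C_1,C_2,C_3$ is the fiddly part.
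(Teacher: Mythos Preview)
Your proposal is essentially the paper's own argument: chain Lemma~\ref{lemma:adv_benign_tp} (the $\tilde\xi$ term) with the uniform-convergence bound $\sup_{\theta\in\bar\Theta_s}|\hat\cF(k_\theta)-\cF(k_\theta)|\le\xi$ from \citet{liu2020learning}, use optimality of $\hat\theta_{n_\tr}$ to get $\cF(S_\bP,\tilde S_\bQ;k_{\hat\theta_{n_\tr}})\ge \cF^*-2\xi-\tilde\xi$, and plug this into the asymptotic power formula together with a threshold bound. Two small points where the paper is more concrete than your sketch: (i) the uniform bound you call ``the analogue of the concentration lemma'' is exactly the Liu et al.\ lemma restated here as Lemma~\ref{lemma:benign_tp}, so you need not re-derive it; (ii) your description of the threshold order is slightly off---$r^{(n_\te)}$ is the threshold for $n_\te\widehat{\MMD}^2$, not for $\widehat{\MMD}^2$, and the paper controls it via Corollary~11 of \citet{gretton2012kernel}, which gives $r^{(n_\te)}\le 4\nu\sqrt{n_\te\log(1/\alpha)}$ directly, so $r^{(n_\te)}/(s\sqrt{n_\te})\le (4\nu/s)\sqrt{\log(1/\alpha)}=C_3\sqrt{\log(1/\alpha)}$ without any appeal to permutation-null quantiles.
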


The proof is in Appendix~\ref{appendix:proof_tp_attack}.

\remark{Theorem~\ref{theory:tp_attack} indicates that the lower bound of test power can become lower with the increase of the adversarial budget $\epsilon$, the dimensionality of data $d$ and Lipschitz constant $L_2$ of the kernel function, which implies that the test power of a TST could be further degraded in the adversarial setting. In other words, a non-parametric TST could wrongly accept $\cH_0$ with a larger probability in the adversarial setting when $\bP \neq \bQ$ holds. Therefore, with the $\epsilon > 0$ being constrained within a reasonable range, there could exist an adversarial attack that can invisibly fool a non-parametric TST.}

\subsection{Generation of Adversarial Pairs}
\label{sec:realization}

\paragraph{Formulation.}
Motivated by Theorem~\ref{theory:asymptotics}, a TST could output a wrong judgement on an adversarial pair $(S_\bP, \tilde{S}_\bQ)$ with a larger probability when the test criterion $\hat{\cF}(S_{\bP},\tilde{S}_{\bQ})$ becomes smaller. Therefore, to generate an adversarial pair against a non-parametric TST, we update $\tilde{S}_\bQ$ via minimizing the test criterion $\hat{\cF}(S_{\bP},\tilde{S}_{\bQ})$.
We formulate adversarial attacks against a non-parametric TST $\cJ$ in the following: 
\begin{align}
\label{eq:attack}
    \tilde{S}_{\bQ} = \argmin_{\tilde{S}_{\bQ} \in \epsball[S_{\bQ}]} \hat{\cF}^{(\cJ)}(S_{\bP},\tilde{S}_{\bQ}),
\end{align}
where $\hat{\cF}^{(\cJ)}(\cdot,\cdot)$ is the test criterion, $\tilde{S}_\bQ$ is constrained in an $\epsilon$-ball centered at $S_\bQ$.

\paragraph{Realization.}
We utilize PGD~\cite{Madry_adversarial_training} to approximately 
solve the minimization problem of
Eq.~\eqref{eq:attack}. Given a starting point $S_{\bQ}^{(0)}$, step size $\rho > 0$, iteration number $t \in \mathbb{N}$,
and the size of adversarial budget $\epsilon \geq 0$, PGD works as follows:
\begin{align}
    S_\bQ^{(t+1)} \! = \! \{  \Pi&_{ \! \epsball[x_i^{(0)}]} \big( \! x_i^{(t)} \!- \! \rho \sign (\nabla_{x_i^{(t)}} \hat{\cF}(S_{\bP},S_{\bQ}^{(t)})\!)\! \big)\!\}_{i=1}^{n} , \nonumber
\end{align}
where $x_i^{(0)} \in S_\bQ^{(0)}$,  $x_i^{(t)} \in S_\bQ^{(t)}$, $\Pi_{\epsball[{x^{(0)}}]}(\cdot)$ is the projection function that projects the adversarial data back into the $\epsilon$-ball centered at ${x^{(0)}}$, and $\hat{\cF}(\cdot, \cdot)$ is a differentiable function. 

Further, we introduce a strategy that automatically schedules the step size $\rho$, which can improve the convergence of PGD~\cite{croce2020reliable}. We start with step size $\rho^{(0)} = \epsilon$ at iteration $0$ and identify whether it is necessary to halve the current step size at checkpoints $c_0, c_1, \ldots, c_n$. We set two conditions: 
\begin{enumerate}
\setlength{\itemsep}{0pt}
\setlength{\parsep}{0pt}
\setlength{\parskip}{0pt}
    \item $\sum_{i=c_{j-1}}^{c_j -1} \mathbbm{1}_{\hat{\cF}(S_{\bP}, S_{\bQ}^{(i+1)}) < \hat{\cF}(S_{\bP}, S_{\bQ}^{(i)})} < 0.75 \cdot (c_j - c_{j-1})$;
    \item $\rho^{(c_{j-1})} \equiv \rho^{(c_j)}$ and $\hat{\cF}_{\mathrm{min}}^{(c_{j-1})} \equiv \hat{\cF}_{\mathrm{min}}^{(c_{j})}$,
\end{enumerate}
where $\hat{\cF}_{\mathrm{min}}^{(t)}$ is the lowest value of the test criterion found in the first
$t$ iterations. If one of the conditions is triggered, then the step size at iteration $t=c_j$ is halved and $\rho^{(t)} = \rho^{(c_{j})}/2$ for every $t \in \{c_j+1, \ldots, c_{j+1}\}$. If at a checkpoint $c$, the step size gets halved, then we set $S_\bQ^{(c+1)}$ to the current $\tilde{S}_\bQ$.

\begin{algorithm}[t!]
   \caption{Ensemble Attack (EA)}
   \label{alg:era}
\begin{algorithmic}[1]
   \STATE {\bfseries Input:} benign pair $(S_{\bP}, S_{\bQ})$, maximum PGD step $T$, adversarial budget $\epsilon$, test criterion function set $\hat{\bF}$, weight set $\bW$, checkpoint $\mathbb{C} = \{c_0, \ldots, c_n\}$
   \STATE {\bfseries Output:} adversarial pair $(S_{\bP},\tilde{S}_{\bQ})$ 
   \STATE $S_{\bQ}^{(0)} \gets {S_{\bQ}}$ and $\rho \gets {\epsilon}$
   \STATE $S_{\bQ}^{(1)} \! \leftarrow \! \{  \Pi_{ \! \epsball[x_i^{(0)}]} \big( \! x_i^{(0)} \!- \! \rho \sign (\nabla_{x_i^{(0)}} \ell(S_{\bP},S_{\bQ}^{(0)})\!)\! \big)\!\}_{i=1}^{n}$
   \STATE $\ell_{\mathrm{min}} \gets {\min\{\ell(S_{\bP},S_{\bQ}^{(0)}), \ell(S_{\bP},S_{\bQ}^{(1)})} \}$
   \STATE $\tilde{S}_{\bQ} \gets S_{\bQ}^{(0)}$  \textbf{if} $\ell_{\mathrm{min}} \equiv \ell(S_{\bP},S_{\bQ}^{(0)})$ \textbf{else} $\tilde{S}_{\bQ} \gets S_{\bQ}^{(1)}$
   \FOR{$t=1$ \textbf{to} $T-1$}
   \STATE $S_{\bQ}^{(t+1)} \! \leftarrow \! \{ \! \Pi_{ \! \epsball[x_i^{(0)}]} \! \big( \! x_i^{(t)} \!- \! \rho \! \sign (\nabla_{x_i^{(t)}} \! \ell(S_{\bP}, \! S_{\bQ}^{(t)} \!)\!)\! \big)\!\}_{i=1}^{n} $
   \IF {$\ell_{\mathrm{min}} > \ell(S_{\bP},S_{\bQ}^{(t+1)})$}
   \STATE $\tilde{S}_{\bQ} \gets S_{\bQ}^{(t+1)}$ and $\ell_{\mathrm{min}} \gets \ell(S_{\bP},S_{\bQ}^{(t+1)})$
   \ENDIF
   \IF{$t \in \mathbb{C}$}
   \IF {Condition 1 \textbf{or} Condition 2}
   \STATE $\rho \gets \rho/2$ and $S_{\bQ}^{(t+1)} \gets \tilde{S}_{\bQ}$
   \ENDIF
   \ENDIF
   \ENDFOR
\end{algorithmic}
\end{algorithm}

\subsection{TST-Agnostic Ensemble Attack}

In practice, different TSTs have different formulations of the test criteria.
To provide a generic TST-agnostic attack framework, we propose the ensemble attack (EA) that finds the adversarial set $\tilde{S}_{\bQ}$ as follows.
\begin{align}
    \tilde{S}_{\bQ} = \argmin_{\tilde{S}_{\bQ} \in \epsball[S_{\bQ}]} \sum_{w^{(\cJ_i)} \in \bW, \hat{\cF}^{(\cJ_i)} \in \hat{\bF}} w^{(\cJ_i)}\hat{\cF}^{(\cJ_i)}(S_{\bP},\tilde{S}_{\bQ}) \nonumber, 
\end{align}
where $\bJ = \{\cJ_1,\cJ_2, \ldots, \cJ_n\}$ a set of non-parametric TSTs, $\hat{\bF} = \{\hat{\cF}^{(\cJ_1)}, \hat{\cF}^{(\cJ_2)},\ldots,\hat{\cF}^{(\cJ_n)}\}$ is a set composed of the test criterion for each TST $\cJ_i \in \bJ$, $\bW = \{w^{(\cJ_1)},w^{(\cJ_2)},\ldots,w^{(\cJ_n)}\}$ is a weight set, and $\sum_{w^{(\cJ_i)} \in \bW} w^{(\cJ_i)} = 1$. For notational simplicity, we let
\begin{align}
    \ell(S_{\bP},{\tilde{S}_{\bQ}}) = \sum_{w^{(\cJ_i)} \in \bW, \hat{\cF}^{(\cJ_i)} \in \hat{\bF}} w^{(\cJ_i)}\hat{\cF}^{(\cJ_i)}(S_{\bP},\tilde{S}_{\bQ}). \nonumber
\end{align}
We utilize ``PGD with a dynamic schedule of step size $\rho$'' (see above) to realize EA. We summarize the realization of EA in Algorithm~\ref{alg:era}. 
Note that an adversarial attack against a TST $\cJ$ is the special case of EA when we set $w^{(\cJ)} = 1$.

\section{Defending Non-Parametric TSTs}
\label{sec:defense}

In this section, to counteract the threats incurred by adversarial attacks, we propose defensive strategies to enhance the test power of non-parametric TSTs under attacks.

\subsection{A Simple Ensemble as A Vanilla Defense}
In machine learning, ensemble methods leverage various learning algorithms together to obtain better performance than could be obtained from any of the individual learning algorithms alone~\cite{opitz1999popular,rokach2010ensemble}.
Therefore, a simple ensemble of different non-parametric TSTs could be a vanilla defense. 
Correspondingly, we let the test power of an ensemble of TSTs measure the probability of any non-parametric TST $\cJ_i \in \bJ$ correctly rejecting $\cH_0$ when $\cH_1$ is true, i.e., for a particular $\bP \neq \bQ$,
\begin{align}
    \mathrm{TP}(\bJ) = \bE_{S_{\bP}\sim \bP^m, S_{\bQ}\sim \bQ^n}[\vee_{\cJ_i \in \bJ}\mathbbm{1}(\cJ_i(S_{\bP}, S_{\bQ}) = 1)]. \nonumber
\end{align} 
However, this simple defense cannot effectively improve the test power of TSTs under EA. We empirically find that EA can significantly degrade the test power of an ensemble of different TSTs (see Table~\ref{tab:attack_power}).
Therefore, the ensemble of TSTs is no longer an effective defensive strategy.

\subsection{Adversarially Learning Kernels for TSTs}
To effectively enhance the robustness of non-parametric TSTs, we propose a general defense which employs adversarial learning~\cite{Madry_adversarial_training} to obtain robust kernels for non-parametric TSTs.
The learning objective of robust kernels is formulated as a max-min optimization:
\begin{align}
\label{eq:rob_obj}
    \hat{\theta} \approx \argmax_{\theta} \min_{\tilde{S}_{\bQ} \in \epsball[S_{\bQ}]} \hat{\cF}(S_{\bP}, \tilde{S}_{\bQ}; k_{\theta}).
\end{align}

Eq.~\eqref{eq:rob_obj} is equivalent to a minimax optimization problem by simply flipping its inner minimization term and its outer maximization term simultaneously. Then, Danskin’s theorem~\cite{danskin1966theory} can apply~\cite{Madry_adversarial_training}. Therefore, we can adversarially learn the deep kernels with one step minimizing the test criterion to find an adversarial pair and one step maximizing the test criterion on the adversarial pair w.r.t. the parameters $\theta$.

\paragraph{Robust deep kernels for TSTs (MMD-RoD).} 
Since MMD-D~\cite{liu2020learning} has been validated as a superior non-parametric TST, our defense is based on the deep kernels, i.e., $\hat{\cF}^{(\RoD)}(\cdot, \cdot; k_{\theta}^{(\RoD)}) = \hat{\cF}^{(\D)}(\cdot, \cdot; k_{\theta}^{(\RoD)})$ where $k_{\theta}^{(\RoD)} = k_{\theta}^{(\D)}$. We let $\theta$ denote all the learnable parameters ($\gamma, \sigma_{\phi}, \sigma_{q}$ and the parameters of the DNN $\phi(\cdot)$) for a robust deep kernel. We summarize the training procedure of adversarially learning deep kernels in Algorithm~\ref{alg:rob_kernel}. The testing procedure of MMD-RoD exactly follows MMD-D~\cite{liu2020learning} and is introduced in Appendix~\ref{appendix:testing}.

\begin{algorithm}[t]
  \caption{Adversarially Learning Deep Kernels}
  \label{alg:rob_kernel}
\begin{algorithmic}[1]
  \STATE {\bfseries Input:} benign pair $(S_{\bP}, S_{\bQ})$, maximum PGD step $T$, adversarial budget $\epsilon$, checkpoint $\mathbb{C} = \{c_0, \ldots, c_n\}$, deep kernel $k_{\theta}^{(\RoD)}$ parameterized by $\theta$, training epochs $E$, learning rate $\eta$
  \STATE {\bfseries Output:} parameters of robust deep kernel $\theta$
  \FOR{$e=1$ \textbf{to} $E$}
  \STATE $X \gets$ minibatch from  $S_{\bP}$; $Y \gets$ minibatch from $S_{\bQ}$
  \STATE Generate an adversarial pair $(X,\tilde{Y})$ by Algorithm~\ref{alg:era} with setting $\hat{\bF} = \{\hat{\cF}^{(\RoD)}(\cdot,\cdot;k_{\theta}^{(\RoD)})\}$
  \STATE $\theta \gets \theta + \eta \nabla_{\theta}\hat{\cF}^{(\RoD)}(X,\tilde{Y};k_{\theta}^{(\RoD)})$
  \ENDFOR
  \setlength{\belowdisplayskip}{-3pt}
\end{algorithmic}
\end{algorithm}

\begin{table*}[t]
\centering
\caption{We report the average test power of six typical non-parametric TSTs ($\alpha=0.05$) as well as Ensemble on five benchmark datasets in benign and adversarial settings, respectively. The lower the test power under attacks is, the more adversarially vulnerable is the TST. }
\label{tab:attack_power}
\resizebox{\columnwidth * 2}{!} { 
\begin{tabular}{ccc|c|ccccccc}
\hline
Datasets  & $\epsilon$ & $n_{\te}$ & EA & MMD-D & MMD-G & C2ST-S & C2ST-L & ME & SCF & Ensemble  \\ \hline
\multirow{2}{*}{Blob} & \multirow{2}{*}{0.05} & \multirow{2}{*}{100} & $\times$ & 1.000\scriptsize{$\pm$0.000} & 1.000\scriptsize{$\pm$0.000} & 1.000\scriptsize{$\pm$0.000} & 1.000\scriptsize{$\pm$0.000} & 0.992\scriptsize{$\pm$0.002} & 0.962\scriptsize{$\pm$0.001} & 1.000\scriptsize{$\pm$0.000}  \\
  & & & $\surd$ & \textbf{0.131}\scriptsize{$\pm$0.007} & \textbf{0.099}\scriptsize{$\pm$0.003} & \textbf{0.021}\scriptsize{$\pm$0.003} & \textbf{0.715}\scriptsize{$\pm$0.091} & \textbf{0.154}\scriptsize{$\pm$0.011} & \textbf{0.098}\scriptsize{$\pm$0.022} & \textbf{0.846}\scriptsize{$\pm$0.030} \\ \hline
\multirow{2}{*}{HDGM} & \multirow{2}{*}{0.05} & \multirow{2}{*}{3000} & $\times$ & 1.000\scriptsize{$\pm$0.000} & 1.000\scriptsize{$\pm$0.000} & 1.000\scriptsize{$\pm$0.000} & 1.000\scriptsize{$\pm$0.000} & 1.000\scriptsize{$\pm$0.002} & 0.942\scriptsize{$\pm$0.013} & 1.000\scriptsize{$\pm$0.000}  \\
& & & $\surd$ & \textbf{0.259}\scriptsize{$\pm$0.009} & \textbf{0.081}\scriptsize{$\pm$0.003} & \textbf{0.105}\scriptsize{$\pm$0.000} & \textbf{0.090}\scriptsize{$\pm$0.000} & \textbf{0.500}\scriptsize{$\pm$0.025} & \textbf{0.006}\scriptsize{$\pm$0.000} & \textbf{0.734}\scriptsize{$\pm$0.078}  \\ \hline
\multirow{2}{*}{Higgs} & \multirow{2}{*}{0.05} & \multirow{2}{*}{5000} & $\times$ & 1.000\scriptsize{$\pm$0.000} & 1.000\scriptsize{$\pm$0.000} & 0.970\scriptsize{$\pm$0.002} & 0.984\scriptsize{$\pm$0.003} & 0.830\scriptsize{$\pm$0.042} & 0.675\scriptsize{$\pm$0.071} & 1.000\scriptsize{$\pm$0.000}  \\
 & & & $\surd$ & \textbf{0.027}\scriptsize{$\pm$0.001} & \textbf{0.002}\scriptsize{$\pm$0.000} & \textbf{0.065}\scriptsize{$\pm$0.000} & \textbf{0.080}\scriptsize{$\pm$0.006} & \textbf{0.263}\scriptsize{$\pm$0.022} & \textbf{0.058}\scriptsize{$\pm$0.005} & \textbf{0.422}\scriptsize{$\pm$0.013} \\ \hline
\multirow{2}{*}{MNIST} & \multirow{2}{*}{0.05} & \multirow{2}{*}{500} & $\times$ & 1.000\scriptsize{$\pm$0.000} & 0.904\scriptsize{$\pm$0.000} & 1.000\scriptsize{$\pm$0.000} & 1.000\scriptsize{$\pm$0.000} & 1.000\scriptsize{$\pm$0.000} & 0.386\scriptsize{$\pm$0.005} & 1.000\scriptsize{$\pm$0.000}  \\
 & & & $\surd$ & \textbf{0.087}\scriptsize{$\pm$0.040} & \textbf{0.102}\scriptsize{$\pm$0.002} & \textbf{0.003}\scriptsize{$\pm$0.000} & \textbf{0.005}\scriptsize{$\pm$0.000} & \textbf{0.062}\scriptsize{$\pm$0.002} & \textbf{0.001}\scriptsize{$\pm$0.000} & \textbf{0.213}\scriptsize{$\pm$0.026}  \\ \hline
\multirow{2}{*}{CIFAR-10} & \multirow{2}{*}{0.0314} & \multirow{2}{*}{500} & $\times$ & 1.000\scriptsize{$\pm$0.000} & 1.000\scriptsize{$\pm$0.000} & 1.000\scriptsize{$\pm$0.000} & 1.000\scriptsize{$\pm$0.000} & 1.000\scriptsize{$\pm$0.000} & 0.033\scriptsize{$\pm$0.001} & 1.000\scriptsize{$\pm$0.000}  \\
 & & & $\surd$ & \textbf{0.187}\scriptsize{$\pm$0.001} & \textbf{0.279}\scriptsize{$\pm$0.004} & \textbf{0.107}\scriptsize{$\pm$0.017} & \textbf{0.119}\scriptsize{$\pm$0.021} & \textbf{0.079}\scriptsize{$\pm$0.000} & \textbf{0.000}\scriptsize{$\pm$0.000} & \textbf{0.429}\scriptsize{$\pm$0.005} \\ \hline
\end{tabular}
}
\vskip -0.05in
\end{table*}

\section{Experiments}
\label{sec:exp}
In this section, we empirically uncover the adversarial vulnerabilities of non-parametric TSTs and demonstrate the efficacy of our proposed MMD-RoD in enhancing adversarial robustness of non-parametric TSTs.

\subsection{Test Power Evaluated under Ensemble Attacks}
\label{sec:attack_power}

We conduct six typical non-parametric TSTs (MMD-D, MMD-G, C2ST-S, C2ST-L, ME and SCF) under EA on five benchmark datasets---Blob~\cite{gretton2012kernel,jitkrittum2016interpretable,sutherland2016generative}, high-dimensional Gaussian mixture (HDGM)~\cite{liu2020learning}, Higgs~\cite{chwialkowski2015fast}, MNIST~\cite{lecun1998gradient,radford2015unsupervised} and CIFAR-10~\cite{krizhevsky2009learning_cifar10}. 
$\bP$ and $\bQ$ of each dataset are illustrated in Appendix~\ref{appendix:dataset_stats}.
Note that $\bP \neq \bQ$ in each dataset. 
For Blob, HDGM and Higgs, we randomly sample a training pair ($S_\bP^{\tr}$, $S_\bQ^{\tr}$) for learning a kernel once for each non-parametric TST. For MNIST and CIFAR-10, we select a subset of the available data as training data $S_\bP^{\tr}$ and $S_\bQ^{\tr}$. The training settings (e.g., the structure of neural network and the optimizer) follow~\citet{liu2020learning} and are illustrated in detail in Appendix~\ref{appendix:exp_detail}. 

During the testing procedure, we randomly sample 100 new pairs ($S_\bP^{\te}$, $S_\bQ^{\te}$), disjoint from the training data, as the benign test pairs. We let $n_{\tr}$ and $n_{\te}$ be large enough to ensure TSTs can achieve a high test power in benign settings. EA is implemented on each benign test pair and generates the corresponding adversarial test pair as the input for TSTs. We illustrate experimental settings of permutation test in Appendix~\ref{appendix:testing}. Note that we utilize the wild bootstrap process~\cite{chwialkowski2014wild} (introduced in Appendix~\ref{appendix:testing}) to resample the value of MMD for MMD-D and MMD-G (as well as MMD-RoD) since adversarial data are probably not IID. Wild bootstrap process guarantees that we can get correct p-values in non-IID/IID scenarios.
We repeat the full process 10 times, and report the average test power (comparing $\bP$ to $\bQ$) of each non-parametric TST as well as an ensemble of these six typical TSTs (denoted as ``Ensemble'') in Table~\ref{tab:attack_power}. In addition, we confirm that these TSTs have reasonable Type $\bigromanone$ errors (comparing $\bP$ to $\bP$) in Appendix~\ref{appendix:type1}.

EA minimizes a weighted sum of test criteria of six typical TSTs, i.e., $\hat{\bF} = \{\hat{\cF}^{(\D)}, \hat{\cF}^{(\G)}, \hat{\cF}^{(\rS)}, \hat{\cF}^{(\rL)}, \hat{\cF}^{(\ME)}, \hat{\cF}^{(\SCF)} \}$. 
Weight set $\bW$ is manually set for each dataset and is summarized in Table~\ref{tab:attack_weight} (Appendix~\ref{appendix:attack_config}). For all datasets, $T = 50$. $\epsilon$ for each dataset is summarized in Table~\ref{tab:attack_power}. 

In Table~\ref{tab:attack_power}, we implement EA in the white-box setting where we can obtain the non-parametric TST's all information (e.g., the kernel parameters).
Table~\ref{tab:attack_power} demonstrates that the test power of each particular non-parametric TST and even Ensemble are significantly deteriorated among all datasets. 
It empirically validates that many existing non-parametric TSTs suffer from severe adversarial vulnerabilities. 

In addition, we surprisingly find that $\epsilon=0.05$ is large enough to significantly degrade the test power on MNIST. In contrast, conventional adversarial attacks that aim to fool DNNs on MNIST need a larger adversarial budget $\epsilon$ which is up to $0.3$~\cite{Madry_adversarial_training}. It seems that non-parametric TSTs are more adversarially vulnerable than classifiers. However, this claim could be inaccurate for two reasons. First, attack target is different. We target to fool non-parametric TSTs that belong to hypothesis tests, while previous works aim to attack DNN-based classifiers. Second, measurement is different. We cannot fairly compare the non-parametric TST's test power to the classifier's classification accuracy.

\subsection{Adversarial Robustness of MMD-RoD}
\label{sec:mmd-rod-tp}
For hyperparameters of adversarially learning kernels, we keep $\epsilon$ same as the dataset-corresponding adversarial budget in Table~\ref{tab:attack_power}, and set $T=1$ for all datasets. Other training settings such as the structure of the neural network and the optimizer as well as the testing procedure of MMD-RoD exactly follow MMD-D~\cite{liu2020learning}. We call an ensemble of six typical TSTs and MMD-RoD as ``Ensemble$^{+}$''. 
Here, EA is conducted based on the test criteria of TSTs in Ensemble$^{+}$. As for $\bW$, we let $w^{(\RoD)}$ and $w^{(\D)}$ in this section be half of $w^{(\D)}$ in Section~\ref{sec:attack_power}. Other attack settings (e.g., $n_{\te}, T, \epsilon$) for each dataset follow Section~\ref{sec:attack_power}. The Type $\bigromanone$ error of MMD-RoD is reported in Appendix~\ref{appendix:type1}.

\begin{table}[h!]
\vskip -0.1in
\caption{Test power of MMD-RoD and Ensemble$^{+}$.}
\label{tab:mmd-rod}
\resizebox{\columnwidth}{!} { 
\begin{tabular}{c|c|ccccc}
\hline
 & EA & Blob & HDGM & Higgs & MNIST & CIFAR-10 \\ \hline
\multirow{2}{*}{MMD-RoD} & $\times$ & \textbf{1.00}\scriptsize{$\pm$0.00} & 0.61\scriptsize{$\pm$0.07} & 0.53\scriptsize{$\pm$0.00} & \textbf{1.00}\scriptsize{$\pm$0.12} & \textbf{1.00}\scriptsize{$\pm$0.00} \\ 
 & $\surd$ & \textbf{0.19}\scriptsize{$\pm$0.06} & 0.00\scriptsize{$\pm$0.01} & 0.23\scriptsize{$\pm$0.02} & \textbf{0.98}\scriptsize{$\pm$0.00} & \textbf{0.91}\scriptsize{$\pm$0.00} \\ \hline
\multirow{2}{*}{Ensemble$^{+}$} & $\times$ & 1.00\scriptsize{$\pm$0.00} & 1.00\scriptsize{$\pm$0.00} & 1.00\scriptsize{$\pm$0.00} & 1.00\scriptsize{$\pm$0.00} & 1.00\scriptsize{$\pm$0.00} \\ 
& $\surd$ & \textbf{0.89}\scriptsize{$\pm$0.01} & 0.73\scriptsize{$\pm$0.08} & 0.54\scriptsize{$\pm$0.04} & \textbf{0.98}\scriptsize{$\pm$0.00} & \textbf{0.95}\scriptsize{$\pm$0.00} \\ 
\hline
\end{tabular}
}
\vskip -0.1in
\end{table}

Table~\ref{tab:mmd-rod} reports the test power of MMD-RoD and Ensemble$^{+}$ in benign and adversarial settings. Table~\ref{tab:mmd-rod} shows that the test power of MMD-RoD and Ensemble$^{+}$ under EA are significantly enhanced on most datasets such as MNIST and CIFAR-10, even without sacrificing test power in the benign setting. It validates robust deep kernels can improve adversarial robustness of non-parametric TSTs.

We surprisingly observe in Table~\ref{tab:mmd-rod} that benign test power of MMD-RoD on MNIST and CIFAR-10 remains high while the test power under attacks is significantly improved. This seems to conflict with the robustness-accuracy trade-off in conventional adversarial training~\cite{Zhang_trades}. The main reason could be that the metric is different, i.e., test power for non-parametric TSTs v.s. classification accuracy for classifiers. Due to this difference, the trade-off between benign test power and adversarial robustness may not hold in the case of non-parametric TSTs. In addition, there are published papers~\cite{yang2020closer, pang2022robustness} that claimed there should be no trade-off between benign accuracy and adversarial robustness.

\vskip -0.1in

MMD-RoD unexpectedly performs poorly on HDGM and Higgs, which has low test power in both benign and adversarial settings. 
The poor performance in the benign setting could be attributed to that the most adversarial training pairs can lead to the cross-over mixture problem~\cite{zhang2020fat}, thus making the learning extremely difficult and even fail. 
The reason for the poor robustness could be that the number of training data is small since enhancing adversarial robustness needs more training data~\cite{schmidt2018adversarial_more_data}. 
Therefore, we believe that utilizing the style of friendly adversarial training~\cite{zhang2020fat} for learning kernels along with sampling more training data can further enhance the performance of MMD-RoD. We leave further improving MMD-RoD as future work.

\begin{figure}[t!]
    \centering
    \subfigure[Blob]{
    \begin{minipage}[b]{0.214\textwidth}
    \includegraphics[width=\textwidth]{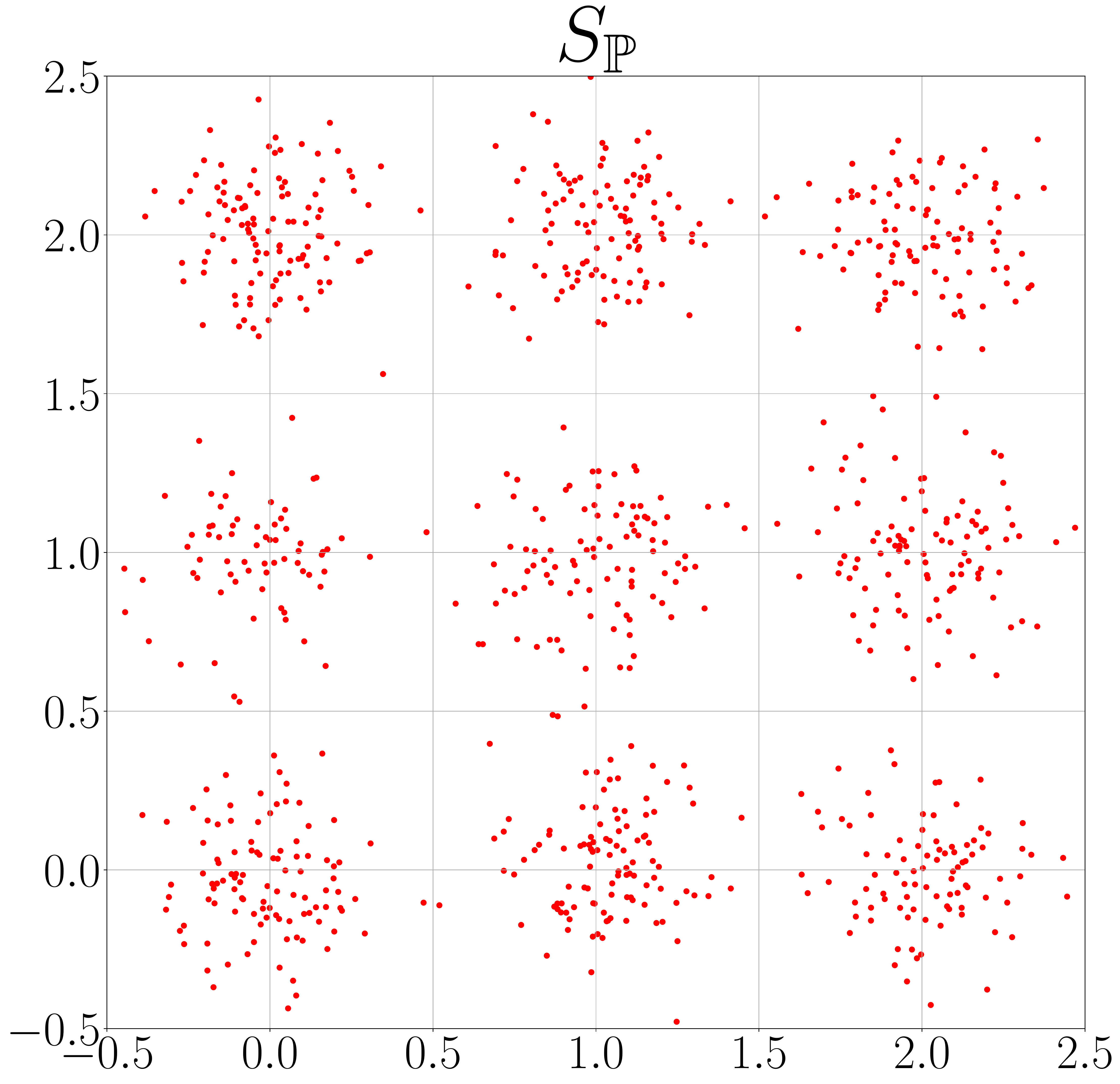} \\
    \includegraphics[width=\textwidth]{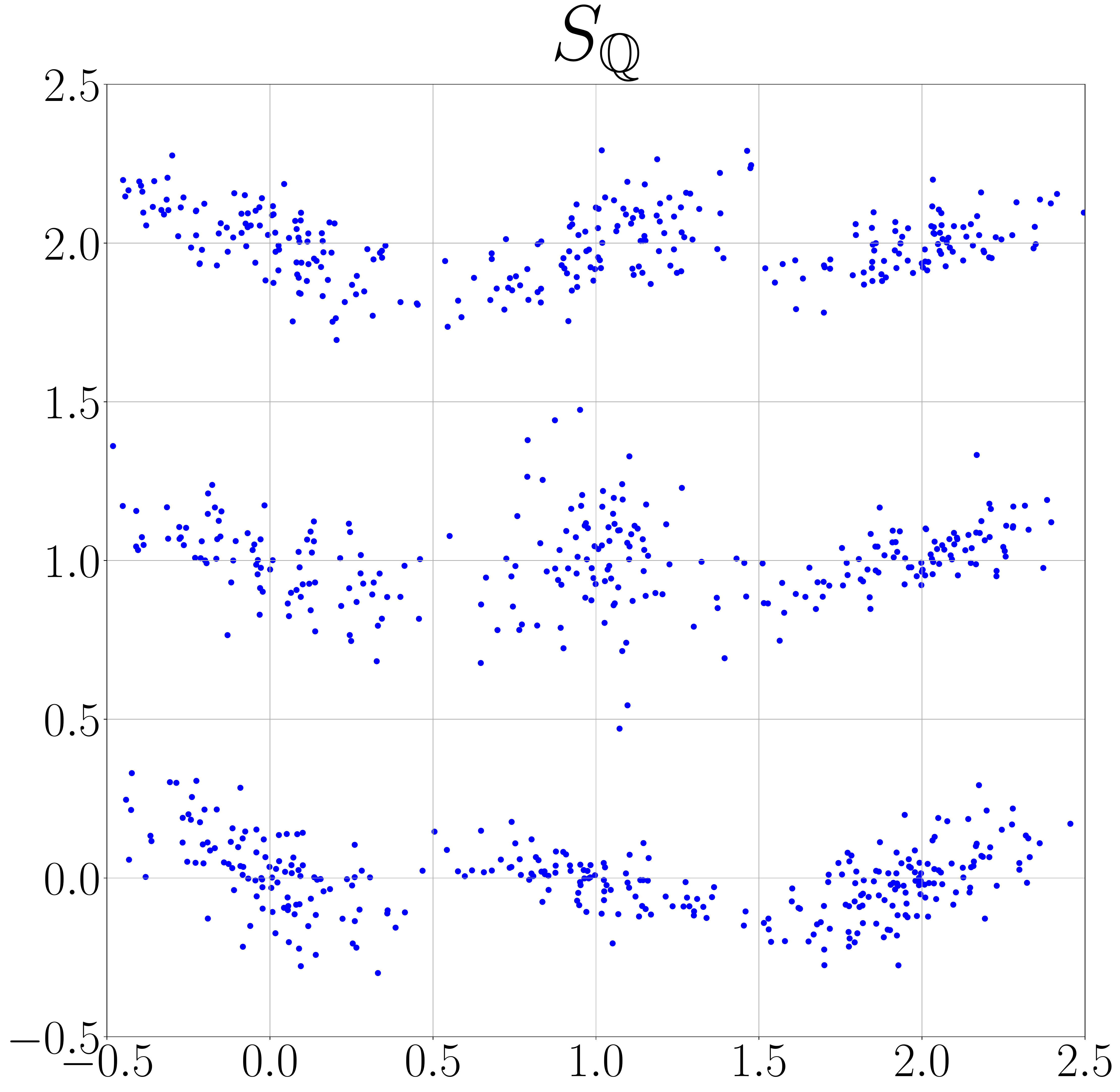} \\
    \includegraphics[width=\textwidth]{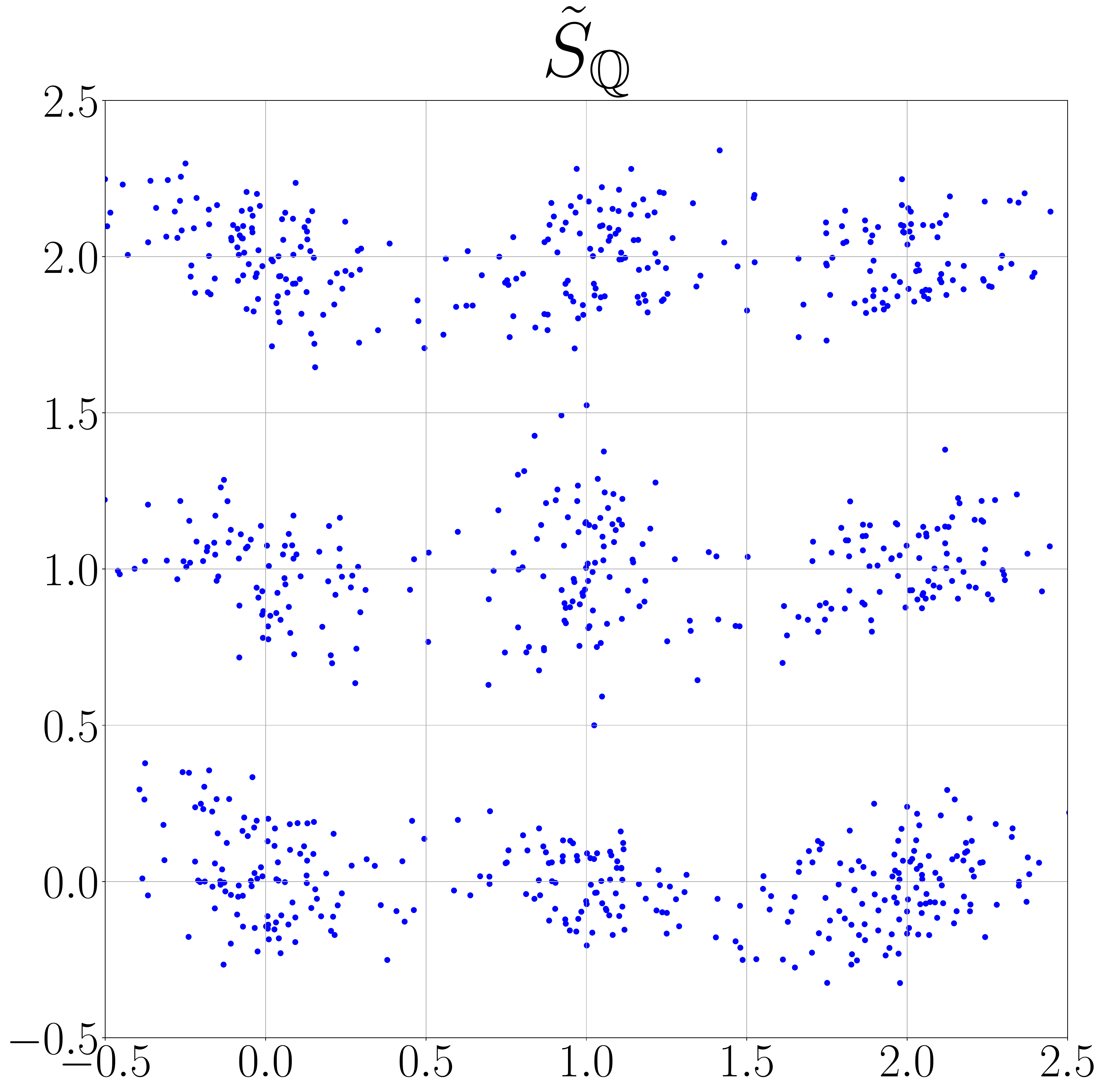}
    \end{minipage}
    }
    \hspace{+2mm}
    \subfigure[MNIST]{
    \begin{minipage}[b]{0.207\textwidth}
    \includegraphics[width=\textwidth]{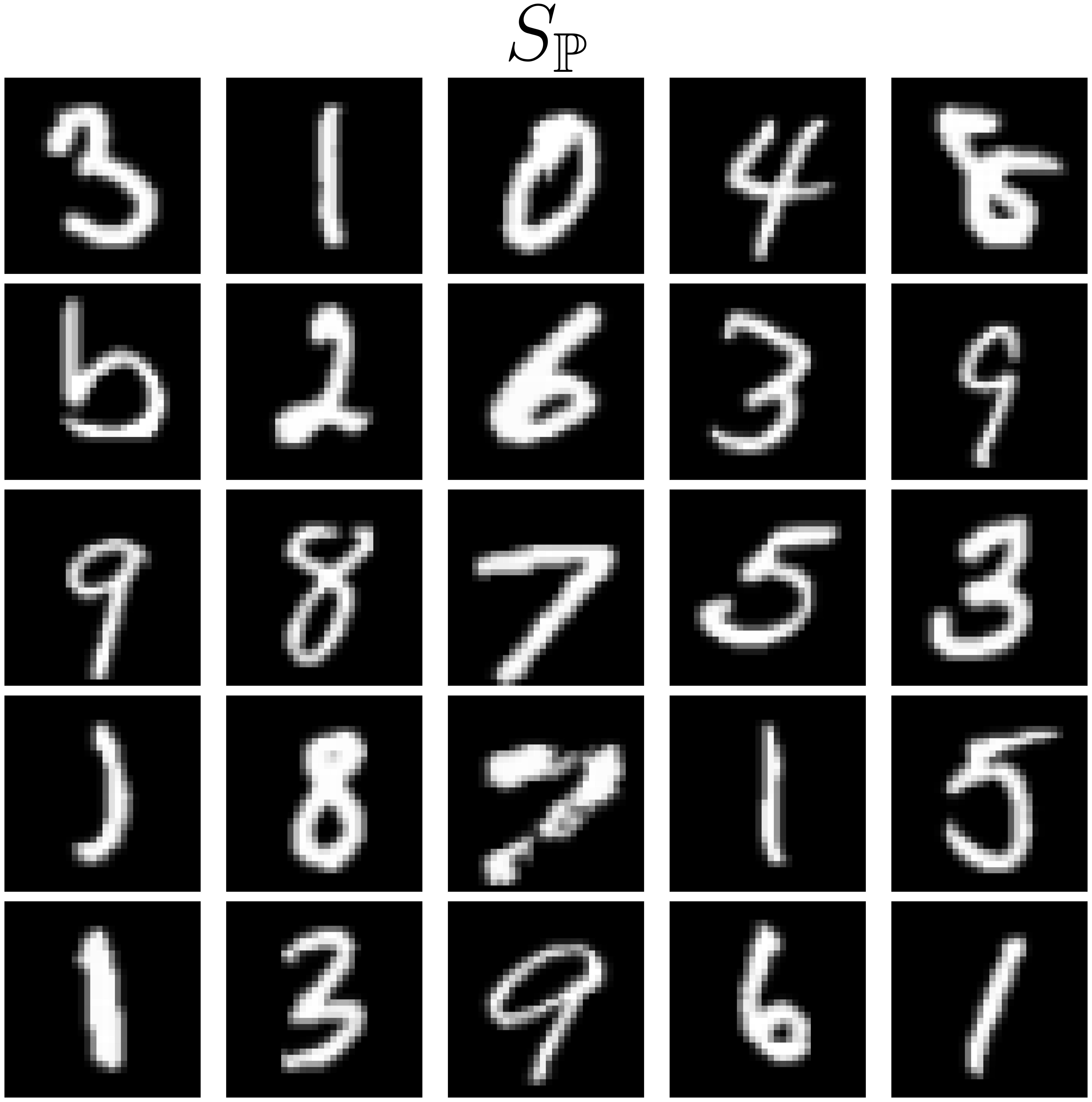} \\
    \includegraphics[width=\textwidth]{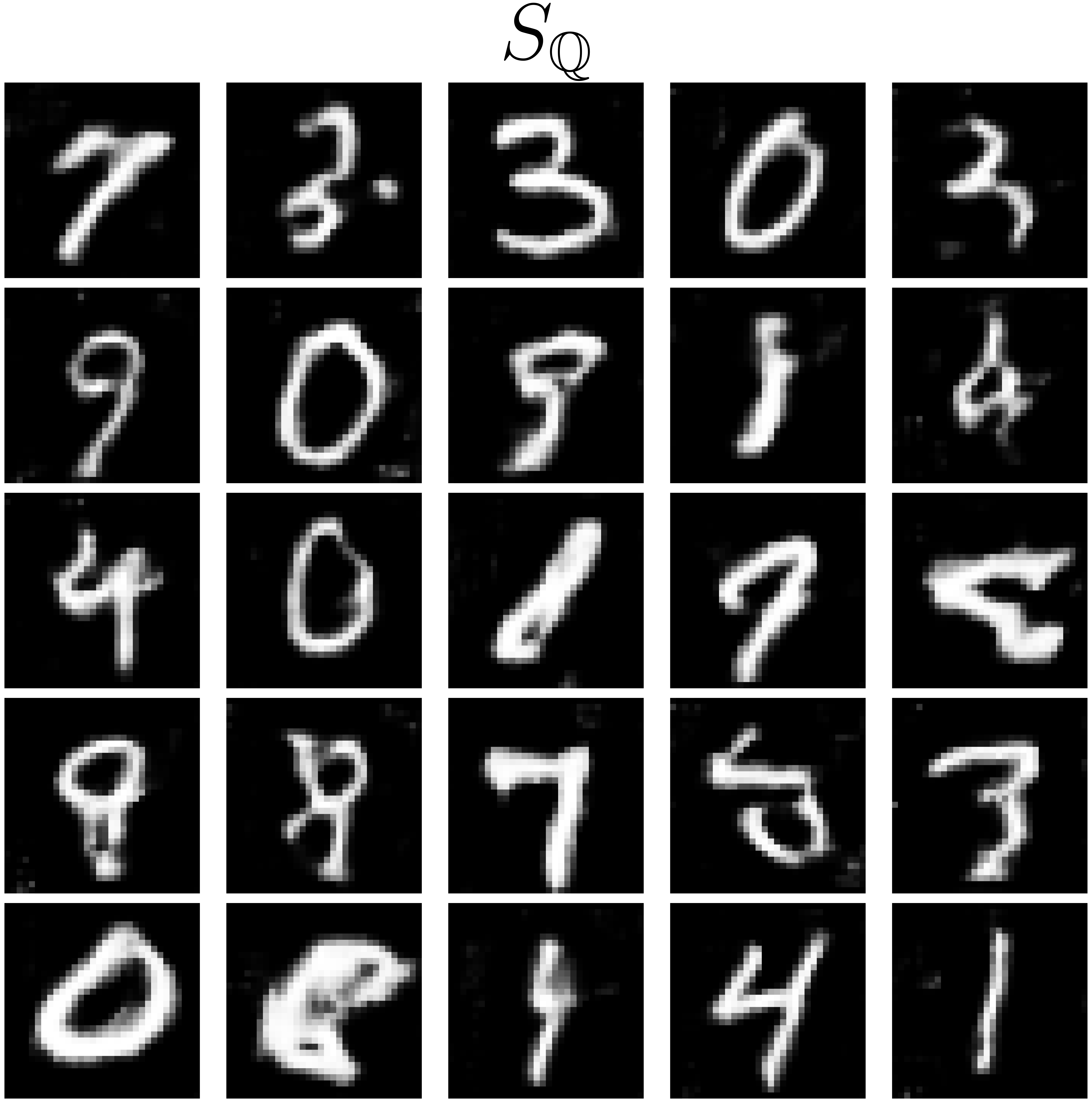} \\
    \includegraphics[width=\textwidth]{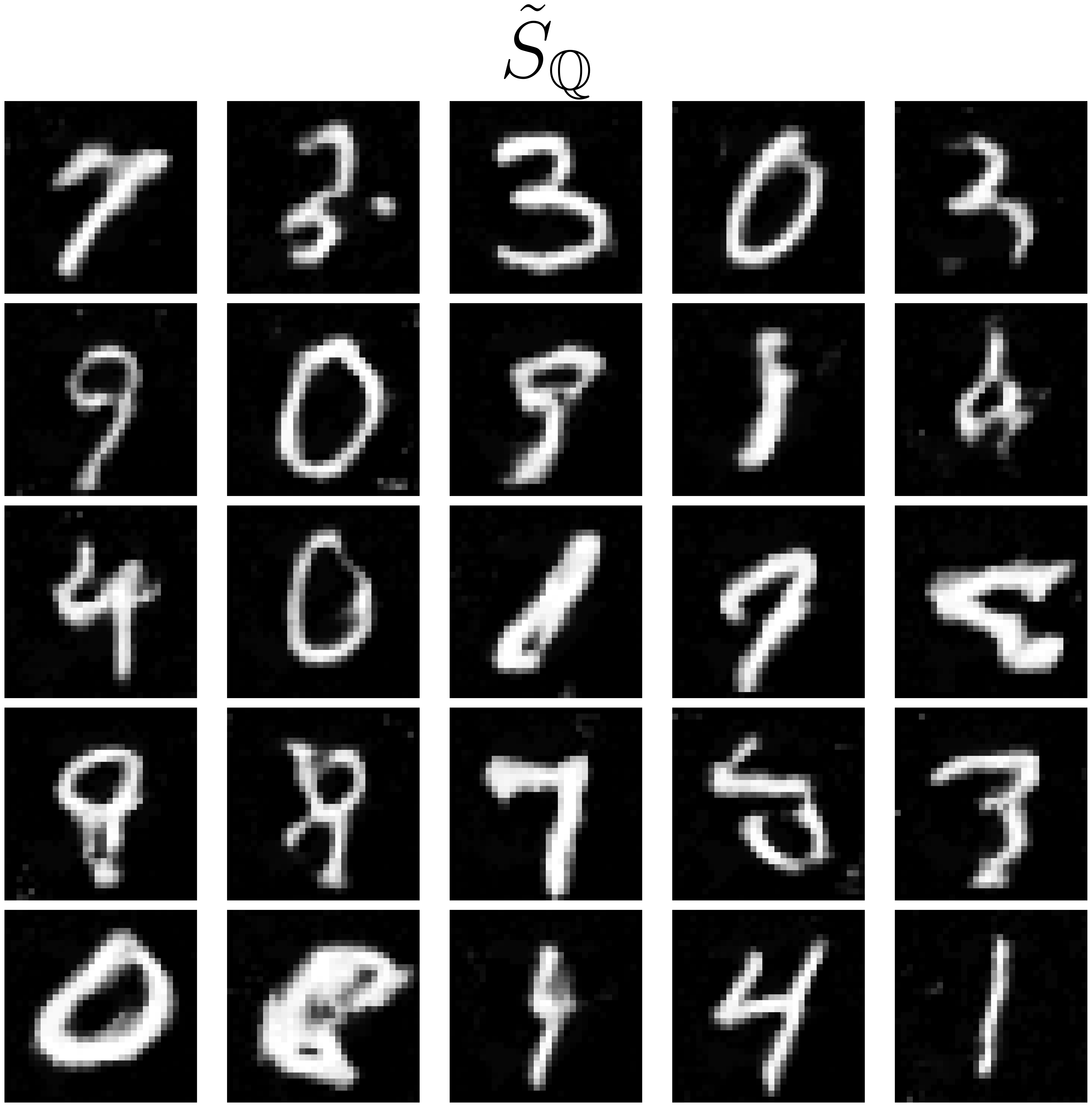}
    \end{minipage}
    }
    \vspace{-4mm}
    \caption{Visualization of adversarial test sets.}
    \label{fig:vis_adv_data}
    \vspace{-4mm}
\end{figure}

\subsection{Visualization of Adversarial Test Sets}
We visualize benign test set $S_{\bQ}$ (middle) and the corresponding adversarial test set $\tilde{S}_{\bQ}$ (bottom) on Blob and MNIST in Figure~\ref{fig:vis_adv_data} as well as CIFAR-10 in Figure~\ref{fig:corner_case_example}. The adversarial data are generated in the experiments illustrated in Section~\ref{sec:attack_power}. Note that the benign test pair we choose to visualize can be correctly judged as samples drawn from different distributions by each TST in Ensemble, and its corresponding adversarial test pair can successfully fool Ensemble. Due to limited space, we visualize only a part of samples from each set. Figure~\ref{fig:corner_case_example}-\ref{fig:vis_adv_data} verify that the differences between $S_{\bQ}$ and $\tilde{S}_{\bQ}$ is almost visually indistinguishable to humans, and meanwhile the distribution of $S_{\bP}$ is explicitly different from that of $\tilde{S}_{\bQ}$. Therefore, Figure~\ref{fig:corner_case_example}-\ref{fig:vis_adv_data} empirically validate that an $\ell_\infty$-bound can guarantee the invisibility of adversarial attacks.

\subsection{Ablation Studies on Important Hyperparameters}
\label{sec:ablation}
In this subsection, we conduct ablation studies on important hyperparameters, including $\epsilon$, $d$, $n_{\te}$ and $\bW$. 
Comprehensive results further validate non-parametric TSTs lack adversarial robustness.

\begin{figure}[t!]
    \centering
    \includegraphics[width=0.23\textwidth]{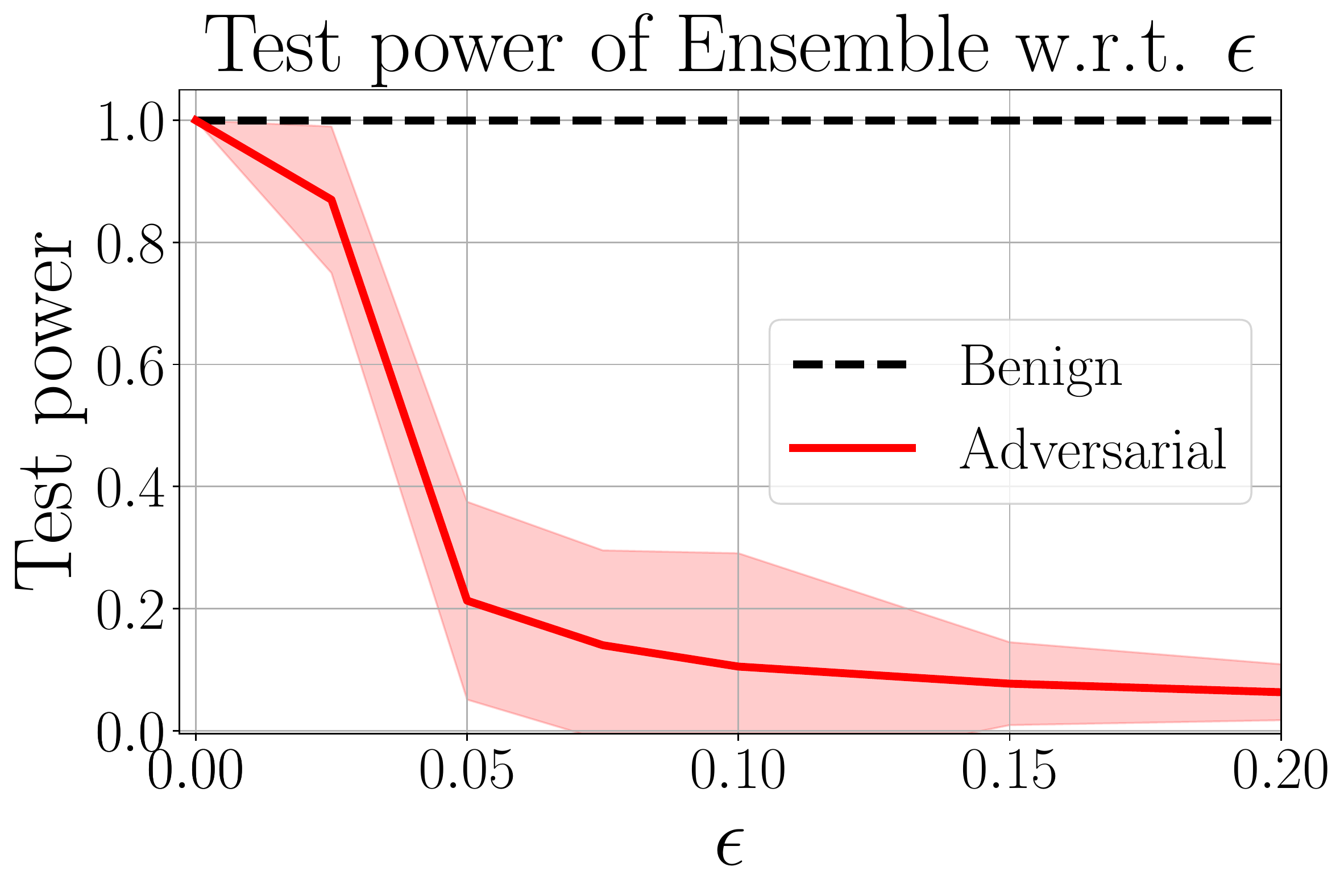}
    \includegraphics[width=0.23\textwidth]{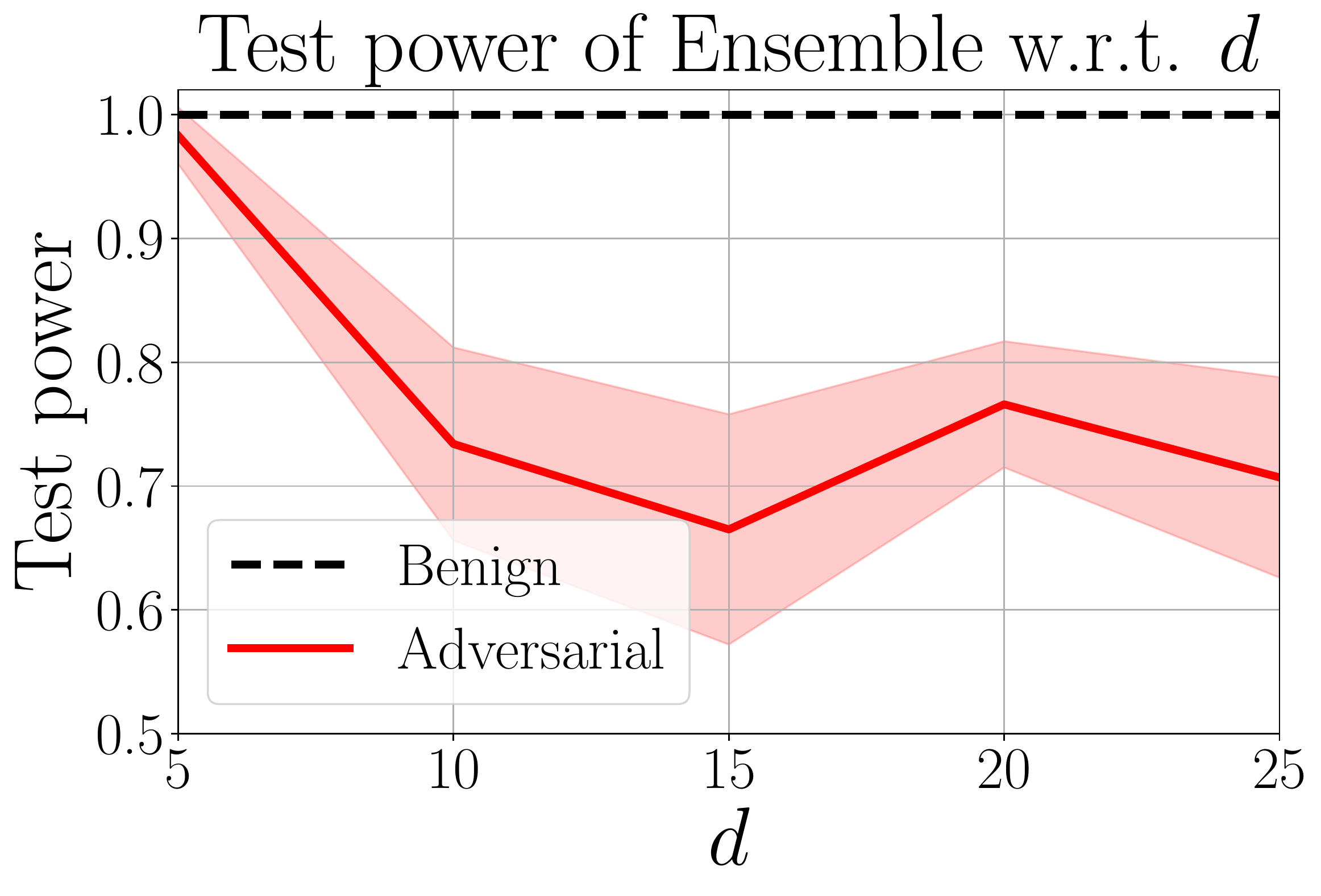}\\
    \includegraphics[width=0.23\textwidth]{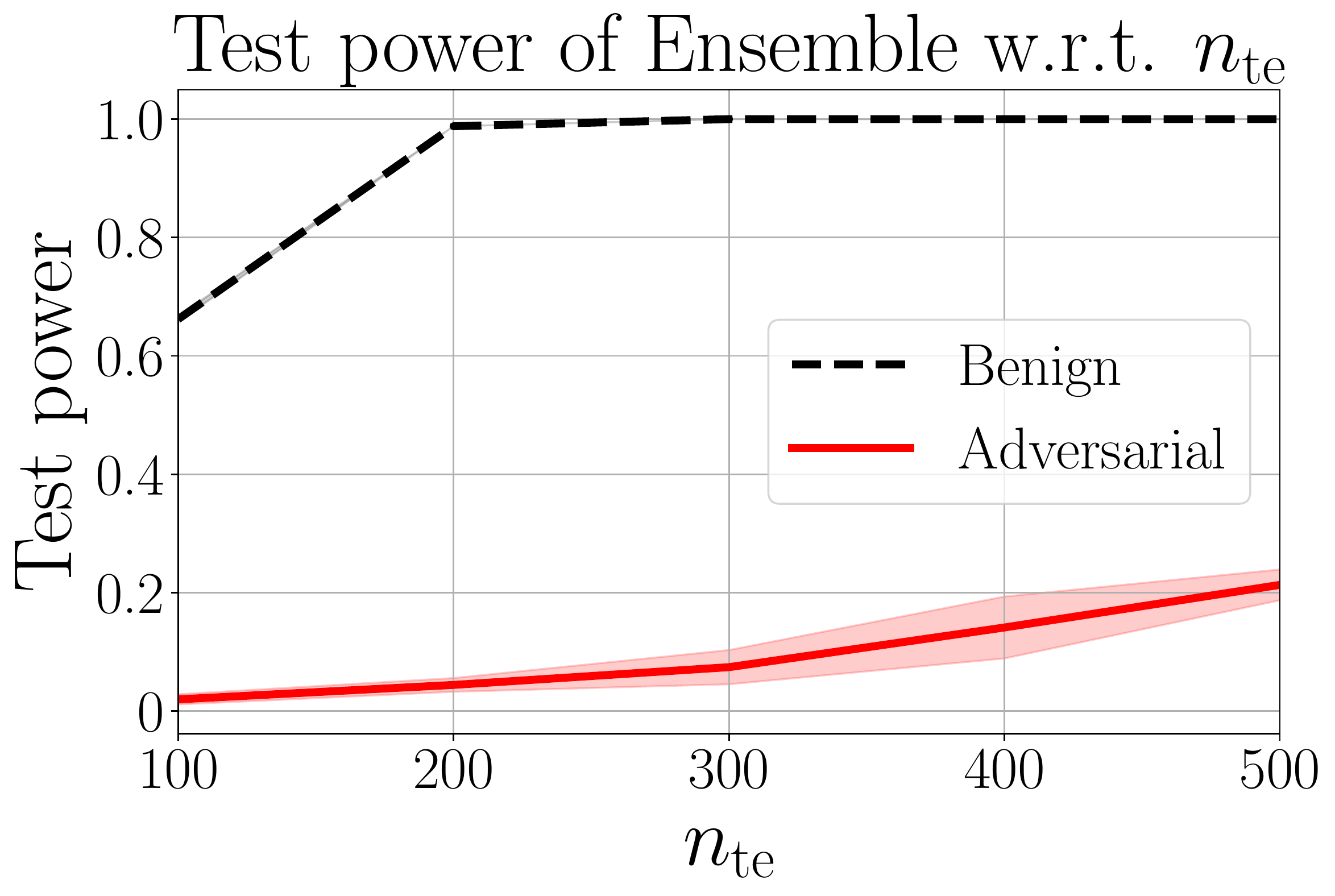}
    \includegraphics[width=0.23\textwidth]{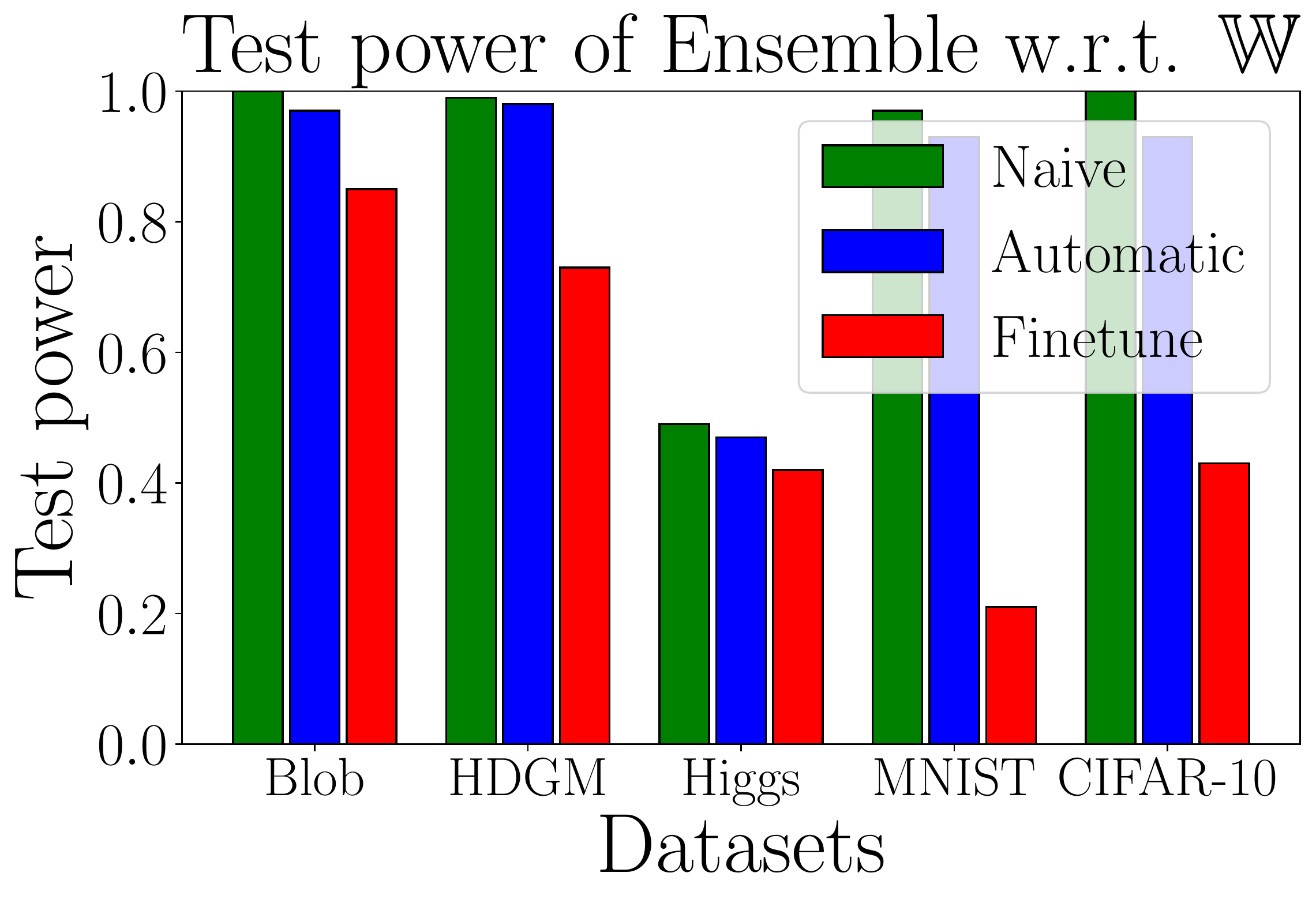}
    \vspace{-4mm}
    \caption{Ablation studies on important hyperparameters.}
    \label{fig:ablation}
    \vspace{-4mm}
\end{figure}

\vspace{-1.5mm}
\paragraph{Evaluation with different $\epsilon$.} We report the average test power of Ensemble under EA with $\epsilon \in \{0.00, 0.025, 0.05, 0.075, 0.1, 0.15, 0.2\}$ on MNIST.
Other settings keep same as Section~\ref{sec:attack_power}. 
The upper left panel of Figure~\ref{fig:ablation} shows that the test power of Ensemble under EA (red solid line) becomes lower as $\epsilon$ increases, and is significantly lower than the test power evaluated in the benign setting (black dash line) over different $\epsilon$, which is in line with the conclusion of Theorem~\ref{theory:tp_attack}.

\vspace{-1.5mm}
\paragraph{Evaluation with different $d$.} 
We evaluate the test power of Ensemble under EA on HDGM with different $d \in \{5,10,15,20,25\}$. The settings follow Section~\ref{sec:attack_power} except the dimensionality of Gaussian mixture. The upper right panel of Figure~\ref{fig:ablation} shows that the test power in the adversarial setting (red solid line) decreases as $d$ rises and remains lower than benign test power (black dash line). However, with larger $d$ (e.g., $d > 15$) the test power under EA does not keep degrading and even rises. We believe it is due to that the weight set for EA with larger $d$ is set inappropriately. We discuss the reasons in detail in Appendix~\ref{appendix:d_reason}.  

\vspace{-1.5mm}
\paragraph{Evaluation with different $n_{\te}$.} 
We evaluate the test power of Ensemble under EA on MNIST with different $n_{\te} \in \{100,200,300,400,500\}$, and use the same settings as Section~\ref{sec:attack_power}. The lower left panel of Figure~\ref{fig:ablation} shows that the test power in the adversarial setting (red solid line) increases as $n_{\te}$ becomes larger, but the test power under EA is always severely deteriorated compared to benign test power (black dash line), which reflects that non-parametric TSTs lack adversarial robustness.

\paragraph{Evaluation with different $\bW$.} We report the test power of Ensemble under EA with three weight strategies: 1) ``Naive'' (green pillar) denotes that we set $\bW=\{1/6,1/6,1/6,1/6,1/6,1/6\}$; 2) ``Automatic'' (blue pillar) denotes that we use the softmax of test criterion for each test as $\bW$ at each PGD iteration, i.e., $w^{(\cJ_i)} = \frac{\exp (\hat{\cF}^{(\cJ_i)})}{\sum_{j=1}^n \exp (\hat{\cF}^{(\cJ_j)}) }$; 3) ``Finetune'' (red pillar) denotes that we set manually-finetuned $\bW$ for each dataset. The finetuned weight set is summarized in Appendix~\ref{appendix:exp_detail}. Other settings follow Section~\ref{sec:attack_power}. The lower right panel of Figure~\ref{fig:ablation} shows that the test power of Ensemble under EA can be severely deteriorated with an appropriate weight strategy. 

\subsection{Transferability of Adversarial Attacks}
\label{sec:transferability}
Further, we empirically demonstrate that our proposed EA against non-parametric TSTs has transferability. 

\paragraph{Transferability between different types of non-parametric TSTs.} 
We report test power of non-parametric TSTs under the adversarial attack against a certain type of TSTs on MNIST in Figure~\ref{fig:MNIST_one_for_all} and the test power of non-parametric TSTs under EA against a TST ensemble composed by leaving one TST out of Ensemble on MNIST in Figure~\ref{fig:MNIST_leave_one_out}. The experimental details and results are in Appendix~\ref{appendix:extensive_exp}. Figure~\ref{fig:MNIST_one_for_all} shows that attacks against a certain type of TST sometimes can fool other types of TSTs. Figure~\ref{fig:MNIST_leave_one_out} demonstrates that attacks against an ensemble of TSTs sometimes can successfully fool TSTs that are not included in the attack ensemble. Therefore, Figure~\ref{fig:MNIST_transferability} validates our proposed EA has transferability between different types of non-parametric TSTs.

\paragraph{Transferability between target and surrogate non-parametric TSTs.} 
Here, we assume that the attacker cannot obtain the target non-parametric TST's kernel parameters and training data, and it only knows the target non-parametric TST's test criterion (including its kernel function). 
We generate adversarial pairs via EA based on an ensemble of surrogate non-parametric TSTs on MNIST (other attack configurations follow Section~\ref{sec:attack_power}) and then report the average test power of target tests on these adversarial pairs in Table~\ref{tab:black}. Surrogate tests are trained on the training data with different random seeds. Table~\ref{tab:black} shows that the test power of each target non-parametric TST and Ensemble are deteriorated under EA based on surrogate non-parametric TSTs, which further validates that existing non-parametric TSTs are adversarially vulnerable.

\begin{table}[h!]
\centering
\vskip -0.1in
\caption{Transferability between target and surrogate non-parametric TSTs.}
\label{tab:black}
\resizebox{\columnwidth}{!} { 
\begin{tabular}{ccccccc}
\hline
MMD-D & MMD-G & C2ST-S & C2ST-L & ME & SCF & Ensemble \\ \hline
0.564\scriptsize{$\pm$0.09} & 0.149\scriptsize{$\pm$0.00} & 0.418\scriptsize{$\pm$0.03} & 0.471\scriptsize{$\pm$0.04} & 0.064\scriptsize{$\pm$0.01} & 0.001\scriptsize{$\pm$0.00} & 0.751\scriptsize{$\pm$0.01}   \\ \hline
\end{tabular}
}
\vskip -0.1in
\end{table}

\paragraph{Transferability between different test sets drawn from $\bP$.} 
We replace the set $S_\bP$ with $S'_\bP$ where $S'_\bP$ is drawn from the distribution $\bP$ with different random seeds (i.e., $S_\bP \neq S'_\bP$). $\tilde{S}_\bQ$ is generated by EA on the benign test pair $(S_\bP, S_\bQ)$. We report the average test power of non-parametric TSTs on $(S'_\bP, \tilde{S}_\bQ)$ under EA on MNIST (details follow Section~\ref{sec:attack_power}) in Table~\ref{tab:transfer}. Table~\ref{tab:transfer} shows that EA still hurts the test power of non-parametric TSTs on $(S'_\bP, \tilde{S}_\bQ)$, and implies that EA has a good transferability property between different test sets drawn from $\bP$.

\begin{table}[h!]
\vskip -0.1in
\centering
\caption{Transferability between different test sets drawn form $\bP$.}
\label{tab:transfer}
\resizebox{\columnwidth}{!} { 
\begin{tabular}{ccccccc}
\hline
MMD-D & MMD-G & C2ST-S & C2ST-L & ME & SCF & Ensemble  \\ \hline
0.166\scriptsize{$\pm$0.05} & 0.201\scriptsize{$\pm$0.00} & 0.013\scriptsize{$\pm$0.00} & 0.018\scriptsize{$\pm$0.00} & 0.270\scriptsize{$\pm$0.03} & 0.017\scriptsize{$\pm$0.01} & 0.486\scriptsize{$\pm$0.04}  \\ \hline
\end{tabular}
}
\vskip -0.1in
\end{table}

\section{Conclusions}
This paper systematically studies adversarial robustness of non-parametric TSTs. We propose a generic ensemble attack framework which reveals non-parametric TSTs are adversarially vulnerable.To counteract these risks, we propose to adversarially learn kernels for non-parametric TSTs. We empirically show that SOTA non-parametric TSTs can fail catastrophically under adversarial attacks, and our proposed MMD-RoD can substantially enhance the adversarial robustness of non-parametric TSTs. 
We believe our work makes people aware of potential risks when they apply non-parametric TSTs to critical applications.

One of the limitations of our current work is that our proposed attack method is computationally heavy and user-dependent, in that it needs very large GPU memory when $n_{\te}$ is too large and the weight set needs to be manually finetuned. Future research includes (a) how to fool non-parametric TSTs by perturbing fewer samples, (b) how to adaptively adjust the weight set at each PGD iteration.

\section*{Acknowledgements}
Jingfeng Zhang was supported by JST, ACT-X Grant Number JPMJAX21AF.
Masashi Sugiyama was supported by JST AIP Acceleration Research Grant Number JPMJCR20U3 and the Institute for AI and Beyond, UTokyo.
Mohan Kankanhalli's research is supported by the National Research Foundation, Singapore under its Strategic Capability Research Centres Funding Initiative. Any opinions, findings and conclusions or recommendations expressed in this material are those of the author(s) and do not reflect the views of National Research Foundation, Singapore.

\clearpage
\nocite{langley00}

\bibliography{reference}
\bibliographystyle{icml2022}

\clearpage
\newpage
\appendix
\onecolumn

\section{Notation Table}
\label{appendix:notation_table}

\begin{table}[h!]
\centering
\caption{A notation table in convenience for viewing.}
\label{tab:notation}
\begin{tabular}{cc }
\hline
Notation & Description  \\ \hline
$\cJ$ & The non-parametric TST \\
$\bJ$ & The set of non-parametric TSTs \\
$\cH_0$ & The null hypothesis \\
$\cH_1$ & The alternative hypothesis \\ 
$\alpha$ & The significance level \\
$r$ & The rejection threshold \\
$\mathrm{TP}$ & The measurement function for the test power \\
$\cD$ & The test statistic function \\
$\hat{\cF}$ & The test criterion function \\
$\hat{\bF}$ & The set of test criterion functions \\
$d$ & The dimensionality of data \\
$\cX$ & The data feature space $\subset \bR^{d}$ \\
$\bP$ & The Borel probability measure on $\cX$ \\
$\bQ$ & The Borel probability measure on $\cX$ \\
$\bP^m$ & The joint probability distribution $\bP^m = \overbrace{\bP \times \bP \times \ldots \times \bP}^m$\\
$\bQ^n$ & The joint probability distribution $\bQ^n = \overbrace{\bQ \times \bQ \times \ldots \times \bQ}^n$\\
$S_\bP$ & The set $S_{\bP} =\{x_i\}_{i=1}^{m} \sim \bP^m $ \\
$S_\bQ$ & The set $S_{\bQ} =\{ y_i \}_{i=1}^{n} \sim \bQ^n $ \\
$n_{\tr}$ & The number of training samples drawn from a particular distribution \\
$n_{\te}$ & The number of testing samples drawn from a particular distribution \\
$k$ & The kernel function \\
$\theta$ & The kernel parameter \\
$\kappa$ & The dimensionality of the kernel parameter \\
$\Theta$ & The set of kernel parameters \\
$\R_{\Theta}$ & A positive constant that bounds the kernel parameter $\theta \in \Theta$ \\
$s$ & A positive constant used in defining $\bar{\Theta}_s$ \\
$\bar{\Theta}_s$ & A set of kernel parameters $\bar{\Theta}_s = \{\theta \in \Theta \mid \sigma_{\theta}^2 \geq s^2 > 0 \}$ \\
$\nu$ & A constant that uniformly bounds the kernel function \\
$L_1$ & Lipschitz constant of the kernel function \\
$L_2$ & Lipschitz constant of the kernel function \\
$\lambda$ & A constant $\in (0,1)$ used in calculating $\hat{\sigma}_{\cH_1, \lambda}$ (in Eq.~\eqref{eq:test_criterion}) \\
$\tilde{S}_\bQ$ & The adversarial data corresponding to $S_\bQ$ \\
$\epsilon$ & The size of adversarial budget \\
$T$ & The maximum PGD step \\
$\rho$ & The step size \\
$w$ & The weight for the test criterion function \\
$\bW$ & The weight set \\
$\mathbb{C}$ & The checkpoint set \\
$E$ & The number of training epoch \\
$\eta$ & The learning rate of the optimizer \\
$f$ & A classifier that outputs classification probabilities \\
$\phi$ & A neural network \\
$\gamma$ & A learnable parameter in the deep kernel \\
$\sigma_{\phi}$ & A learnable parameter in the deep kernel \\
$G$ & The number of test locations \\
$\mathcal{V}$ & The set of test locations \\
\hline
\end{tabular}
\end{table}

\section{Theoretical Analysis}
\label{appendix:theory}

All the proofs are inspired by~\citet{liu2020learning}.

\subsection{Uniform Convergence Results}
\label{appendix:proof_mmd_shift}
These results, on the uniform convergence of $\widehat{\MMD}^2(S_\bP, \tilde{S}_\bQ;k_\theta)$ and $\hat{\sigma}^2_{\cH_1, \lambda}(S_{\bP}, \tilde{S}_{\bQ};k_{\theta})$, were used in the proof of Theorem~\ref{theory:tp_attack}.

\paragraph{Proposition 1 (Restated).}
Under Assumptions~\ref{assump:param} to~\ref{assump:lipschitz}, we use $n_{\tr}$ samples to train a kernel $k_{\theta}$ parameterized with $\theta$ and $n_{\te}$ samples to run a test of significance level $\alpha$. Given adversarial budget $\epsilon \geq 0$, the benign pair $(S_{\bP}, S_{\bQ})$ and the corresponding adversarial pair $(S_{\bP}, \tilde{S}_{\bQ})$ where $\tilde{S}_{\bQ} \in \epsball[S_{\bQ}]$,  with the probability at least $1 - \delta$, we have
\begin{align}
\sup_{\theta}|{\widehat{\MMD}}^{2}(S_{\bP}, \tilde{S}_{\bQ};k_{\theta}) - {\widehat{\MMD}}^{2}(S_{\bP}, S_{\bQ};k_{\theta}) | \leq \frac{8 L_2 \epsilon \sqrt{d}}{\sqrt{n_{\te}}}  \sqrt{2 \log \frac{2}{\delta} + 2\kappa\log (4\R_{\Theta} \sqrt{n_{\te}})} + \frac{8 L_1}{\sqrt{n_{\te}}}. \nonumber
\end{align}

\begin{proof}[Proof of Proposition~\ref{proposition:mmd_shift}]

We study the random error function
\begin{align}
    \Delta(\theta) = \widehat{\MMD}^2(S_\bP, \tilde{S}_\bQ;k_\theta) - \widehat{\MMD}^2(S_\bP, S_\bQ;k_\theta). \nonumber
\end{align}

First, we choose $P$ points $\{ \theta_i\}_{i=1}^{P}$ such that any $\theta_i \in \Theta$ and $\min_i\|\theta - \theta_i \| \leq q$; Assumption~\ref{assump:param} ensures this is possible with at most $P = (4\R_{\Theta}/q)^\kappa$ points~\cite{poggio2002mathematical}.

We define $\tilde{H}_{ij} =  k(\bmx_i,\bmx_j) + k(\tilde{\bmy}_i, \tilde{\bmy}_j) - k(\bmx_i, \tilde{\bmy}_j) - k(\bmx_j, \tilde{\bmy}_i)$ where $x_i, x_j \in S_\bP$ and $\tilde{y}_i, \tilde{y}_j \in \tilde{S}_{\bQ}$. Note that $\tilde{y}_i = y_i + \zeta_i$ for any $y_i \in S_\bQ$ and $\tilde{y}_i \in \tilde{S}_\bQ$ where $\zeta_i$ is an adversarial perturbation under an $\ell_\infty$-bound of size $\epsilon$. Correspondingly, ${\widehat{\MMD}}^{2}(S_{\bP}, \tilde{S}_{\bQ};k_{\theta}) = \frac{1}{n(n-1)}\sum_{i \neq j}\tilde{H}_{ij}$. Via Assumption~\ref{assump:lipschitz} we know that $| \tilde{H}_{ij} - H_{ij} | \leq 4 L_2 \epsilon \sqrt{d}$.

Because $\tilde{S}_\bQ \in \epsball[S_\bQ]$, it holds that $ |\tilde{H}_{ij} - H_{ij}| \rightarrow 0$ when $\epsilon \rightarrow 0$. Therefore, we have $\bE \Delta \rightarrow 0$. Recall that $\widehat{\MMD}^2(S_\bP, S_\bQ;k_\theta) = \frac{1}{n(n-1)}\sum_{i \neq j}H_{ij}$. 
If we replace $(x_1, y_1)$ with $({x_1}', {y_1}')$, we can obtain ${{\widehat{\MMD'}^2}} (S_\bP, S_\bQ;k_\theta) = \frac{1}{n(n-1)}\sum_{i \neq j}F_{ij}$ and ${{\widehat{\MMD'}^2}}(S_{\bP}, \tilde{S}_{\bQ};k_{\theta}) = \frac{1}{n(n-1)}\sum_{i \neq j}\tilde{F}_{ij}$, where $F$ (or $\tilde{F}$) agrees with $H$ (or $\tilde{H}$) except when $i$ or $j$ is 1. Then, we have
\begin{align}
    & |\widehat{\MMD}^2(S_\bP, \tilde{S}_\bQ;k_\theta) - \widehat{\MMD}^2(S_\bP, S_\bQ;k_\theta) - ({{\widehat{\MMD'}}^{2}} (S_\bP, \tilde{S}_\bQ;k_\theta) - {{\widehat{\MMD'}}^{2}} (S_\bP, S_\bQ;k_\theta))| \nonumber \\
    & \leq  \frac{1}{n(n-1)} | \sum_{i>1} ( \tilde{H}_{i1} - H_{i1} - (\tilde{F}_{i1} - F_{i1}) ) + \sum_{j>1} ( \tilde{H}_{1j} - H_{1j} - (\tilde{F}_{1j} - F_{1j}) ) |\nonumber \\
    & \leq  \frac{1}{n(n-1)} \bigg( \sum_{i>1}| \tilde{H}_{i1} - H_{i1} | + \sum_{i>1}|(\tilde{F}_{i1} - F_{i1})| + \sum_{j>1}|\tilde{H}_{1j} - H_{1j}| + \sum_{j>1}|\tilde{F}_{1j} - F_{1j}| \bigg) \nonumber \\
    & \leq \frac{16L_2 \epsilon \sqrt{d}}{n}. \nonumber
\end{align}
Using McDiarmid's inequality for each $\Delta(\theta_i)$ and a union bound, we then obtain that with probability at least $1-\delta$,
\begin{align}
    \max_{i \in \{1,2,\ldots, P \}}\Delta(\theta) \leq \frac{16L_2 \epsilon \sqrt{d}}{ \sqrt{2n}}\sqrt{\log \frac{2P}{\delta}} \leq \frac{8L_2 \epsilon \sqrt{d}}{\sqrt{n}}\sqrt{2\log \frac{2}{\delta} + 2 \kappa \log \frac{4\R_{\Theta}}{q}}. \nonumber
\end{align}

Via Assumption~\ref{assump:lipschitz}, for any two $\theta, \theta' \in \Theta$, we also have
\begin{gather}
    | \widehat{\MMD}^2(S_\bP, \tilde{S}_\bQ;k_\theta) - \widehat{\MMD}^2(S_\bP, \tilde{S}_\bQ;k_{\theta'}) | \leq \frac{1}{n(n-1)}\sum_{i \neq j} | \tilde{H}_{ij}^{(\theta)} - \tilde{H}_{ij}^{(\theta')} | \nonumber \\
    \leq \frac{1}{n(n-1)}\sum_{i \neq j} 4L_1 \| \theta - \theta'\| = 4L_1 \| \theta - \theta'\| \leq 4L_1q. \nonumber
\end{gather}
Similarly, $| \widehat{\MMD}^2(S_\bP, S_\bQ;k_\theta) - \widehat{\MMD}^2(S_\bP, S_\bQ;k_{\theta'}) | \leq 4L_1 q$.

Combining these two results, we know that with probability at least $1 - \delta$,
\begin{align}
    \sup_{\theta} |\Delta(\theta)| \leq  \max_{i \in \{1,2,\ldots, P \}}\Delta(\theta) + 8L_1q \leq 
    \frac{8L_2 \epsilon \sqrt{d}}{\sqrt{n}} \sqrt{2\log \frac{2}{\delta} + 2 \kappa \log \frac{4\R_{\Theta}}{q}} + 8L_1q. \nonumber
\end{align}

Since the adversary perturbs benign test pairs, we let $n = n_\te$ and $q = \frac{1}{\sqrt{n_\te}}$, thus yielding the desired results.
\end{proof}

\begin{proposition}
\label{proposition:sigma_shift}
Under Assumptions~\ref{assump:param} to~\ref{assump:lipschitz}, we use $n_{\tr}$ samples to train a kernel $k_{\theta}$ parameterized with $\theta$ and $n_{\te}$ samples to run a test of significance level $\alpha$. Given adversarial budget $\epsilon \geq 0$, the benign pair $(S_{\bP}, S_{\bQ})$ and the corresponding adversarial pair $(S_{\bP}, \tilde{S}_{\bQ})$ where $\tilde{S}_{\bQ} \in \epsball[S_{\bQ}]$,  with the probability at least $1 - \delta$, we have
    \begin{align}
    \sup_{\theta}|\hat{\sigma}^2_{\cH_1, \lambda}(S_{\bP}, \tilde{S}_{\bQ};k_{\theta}) - \hat{\sigma}^2_{\cH_1, \lambda}(S_{\bP}, S_{\bQ};k_{\theta}) | \leq \frac{1024 \nu L_2 \epsilon \sqrt{d}}{\sqrt{n_{\te}}}\sqrt{2\log \frac{2}{\delta} + 2 \kappa \log (4\R_{\Theta} \sqrt{n_\te}) } + \frac{ 512L_1 \nu}{ \sqrt{n_\te}}. \nonumber
    \end{align}
\end{proposition}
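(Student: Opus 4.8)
The plan is to replay the architecture of the proof of Proposition~\ref{proposition:mmd_shift} almost verbatim, with the variance estimator in place of $\widehat{\MMD}^2$. First I would set
\[
\Delta_\sigma(\theta) = \hat{\sigma}^2_{\cH_1,\lambda}(S_\bP,\tilde{S}_\bQ;k_\theta) - \hat{\sigma}^2_{\cH_1,\lambda}(S_\bP,S_\bQ;k_\theta),
\]
noting that the additive constant $\lambda$ cancels, so $\Delta_\sigma(\theta)$ depends only on the quadratic form $g(\{H_{ij}\}) = \frac{4}{n^3}\sum_i\big(\sum_j H_{ij}\big)^2 - \frac{4}{n^4}\big(\sum_{ij}H_{ij}\big)^2$ evaluated at the perturbed entries $\tilde H_{ij}$ versus the benign entries $H_{ij}$. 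Two elementary facts carry everything: by Assumption~\ref{assump:bound} each of the four kernel terms composing $H_{ij}$ or $\tilde H_{ij}$ is bounded by $\nu$, so $|H_{ij}|,|\tilde H_{ij}|\le 4\nu$; and by Assumption~\ref{assump:lipschitz}, exactly as already derived inside the proof of Proposition~\ref{proposition:mmd_shift}, $|\tilde H_{ij}-H_{ij}|\le 4L_2\epsilon\sqrt{d}$ and $|\tilde H^{(\theta)}_{ij}-\tilde H^{(\theta')}_{ij}|\le 4L_1\|\theta-\theta'\|$ (and likewise for $H$).

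The workhorse estimate is the identity $a^2-b^2=(a+b)(a-b)$, applied both to the inner sums $\sum_j H_{ij}$ versus $\sum_j \tilde H_{ij}$ and to the double sum $\sum_{ij}H_{ij}$ versus $\sum_{ij}\tilde H_{ij}$. Since any sum of $n$ (resp.\ $n^2$) of the $H$-entries has magnitude $O(\nu n)$ (resp.\ $O(\nu n^2)$) while the corresponding sum of differences is $O(L_2\epsilon\sqrt{d}\,n)$ (resp.\ $O(L_2\epsilon\sqrt{d}\,n^2)$), after cancelling against the $1/n^3$ and $1/n^4$ prefactors one gets a pointwise-in-$\theta$ bound $\sup_\theta|g(\tilde H)-g(H)|\le 256\,\nu L_2\epsilon\sqrt{d}$. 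In particular $\bE\Delta_\sigma\to 0$ as $\epsilon\to 0$, which is the only role $\bE\Delta_\sigma$ plays; it is absorbed into the concentration term exactly as in Proposition~\ref{proposition:mmd_shift}.

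Next I would establish the bounded-difference constant for McDiarmid's inequality. Following the device in the proof of Proposition~\ref{proposition:mmd_shift}, replace one sample pair $(x_1,y_1)$ by $(x_1',y_1')$, let $F$ (resp.\ $\tilde F$) agree with $H$ (resp.\ $\tilde H$) except on row and column $1$, and bound $|\Delta_\sigma(\theta)-\Delta'_\sigma(\theta)|$. Only $O(n)$ of the $H$-entries change; in the difference-of-squares expansion the ``$a+b$'' factors remain $O(\nu n)$, while the ``$a-b$'' factors restricted to the affected entries are $O(L_2\epsilon\sqrt d)$, so after the $1/n^3,1/n^4$ prefactors the net change is of order $\nu L_2\epsilon\sqrt d/n$ (tracking constants one gets $\le 2048\,\nu L_2\epsilon\sqrt d/n$). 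Plugging this into McDiarmid over the same cover $\{\theta_i\}_{i=1}^{P}$ of $\Theta$ with $P=(4\R_{\Theta}/q)^\kappa$ and resolution $q$ used in Proposition~\ref{proposition:mmd_shift}, and taking a union bound, yields $\max_i \Delta_\sigma(\theta_i)\le \frac{2048\,\nu L_2\epsilon\sqrt d}{\sqrt{2n}}\sqrt{\log\frac{2P}{\delta}}$ with probability at least $1-\delta$.

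Finally I would pass from the cover to all of $\Theta$: the $\theta$-Lipschitz bound on the $H$-entries combined with the same difference-of-squares estimate gives $|g(H^{(\theta)})-g(H^{(\theta')})|\le 256\,\nu L_1\|\theta-\theta'\|$ and likewise for $\tilde H$, hence $|\Delta_\sigma(\theta)-\Delta_\sigma(\theta')|\le 512\,\nu L_1 q$ whenever $\|\theta-\theta'\|\le q$. Combining, $\sup_\theta|\Delta_\sigma(\theta)|\le \frac{2048\,\nu L_2\epsilon\sqrt d}{\sqrt{2n}}\sqrt{\log\frac{2P}{\delta}}+512\,\nu L_1 q$; substituting $n=n_\te$, $q=1/\sqrt{n_\te}$, $P=(4\R_{\Theta}\sqrt{n_\te})^\kappa$ and pulling the $\sqrt 2$ inside the radical reproduces the stated bound exactly. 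The main obstacle is purely the bookkeeping in the bounded-difference step: one has to verify that the extra factor $\nu n$ produced by the ``$a+b$'' terms of the quadratic form is precisely cancelled by the $1/n$ from the prefactor, so that the bounded-difference constant genuinely scales like $1/n$ rather than $O(1)$; once that is checked, the rest is a routine repetition of the Proposition~\ref{proposition:mmd_shift} argument.
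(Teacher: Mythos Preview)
Your proposal is correct and follows essentially the same approach as the paper's own proof: define $\Delta_\sigma(\theta)$, bound the single-coordinate change by $2048\,\nu L_2\epsilon\sqrt d/n$ via the difference-of-squares identity, apply McDiarmid plus a union bound over a $q$-cover of $\Theta$, then pass to the full $\Theta$ using the $\theta$-Lipschitz bound $|\hat\sigma^2_{\cH_1,\lambda}(\cdot;k_\theta)-\hat\sigma^2_{\cH_1,\lambda}(\cdot;k_{\theta'})|\le 256\,\nu L_1\|\theta-\theta'\|$, and finally set $q=1/\sqrt{n_\te}$. The only cosmetic difference is that the paper cites this last Lipschitz estimate as Lemma~19 of \citet{liu2020learning} rather than re-deriving it inline as you do; your constants match the paper's exactly.
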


\begin{proof}[Proof of Propositon~\ref{proposition:sigma_shift}]
We study the random error function
\begin{align}
    \Delta(\theta) = \hat{\sigma}^2_{\cH_1, \lambda}(S_{\bP}, \tilde{S}_{\bQ};k_{\theta}) - \hat{\sigma}^2_{\cH_1, \lambda}(S_{\bP}, S_{\bQ};k_{\theta}). \nonumber
\end{align}

Note that $\hat{\sigma}^2_{\cH_1, \lambda}(S_{\bP}, S_{\bQ};k_{\theta}) = \frac{4}{n^3}\sum_{i=1}^{n} \bigg(\sum_{j=1}^{n}H_{ij} \bigg)^2 - \frac{4}{n^4}\bigg(\sum_{i=1}^{n}\sum_{j=1}^{n}H_{ij} \bigg)^2 + \lambda$, and $\hat{\sigma}^2_{\cH_1, \lambda}(S_{\bP}, \tilde{S}_{\bQ};k_{\theta}) = \frac{4}{n^3}\sum_{i=1}^{n} \bigg(\sum_{j=1}^{n}\tilde{H}_{ij} \bigg)^2 - \frac{4}{n^4}\bigg(\sum_{i=1}^{n}\sum_{j=1}^{n}\tilde{H}_{ij} \bigg)^2 + \lambda$.

Because $\tilde{S}_\bQ \in \epsball[S_\bQ]$, it holds that $ |\tilde{H}_{ij} - H_{ij}| \rightarrow 0$ when $\epsilon \rightarrow 0$. Therefore, we have $\bE \Delta \rightarrow 0$. 

If we replace $(x_1, y_1)$ with $({x_1}', {y_1}')$, we can obtain $\hat{\sigma'}^2_{\cH_1, \lambda}(S_{\bP}, S_{\bQ};k_{\theta}) = \frac{4}{n^3}\sum_{i=1}^{n} \bigg(\sum_{j=1}^{n}F_{ij} \bigg)^2 - \frac{4}{n^4}\bigg(\sum_{i=1}^{n}\sum_{j=1}^{n}F_{ij} \bigg)^2 + \lambda$ and $\hat{\sigma'}^2_{\cH_1, \lambda}(S_{\bP}, \tilde{S}_{\bQ};k_{\theta}) = \frac{4}{n^3}\sum_{i=1}^{n} \bigg(\sum_{j=1}^{n}\tilde{F}_{ij} \bigg)^2 - \frac{4}{n^4}\bigg(\sum_{i=1}^{n}\sum_{j=1}^{n}\tilde{F}_{ij} \bigg)^2 + \lambda$, where $F$ (or $\tilde{F}$) agrees with $H$ (or $\tilde{H}$) except when $i$ or $j$ is 1. Via Assumption~\ref{assump:bound}, we have $|H_{ij}| \leq 4 \nu$. Then, we have
\begin{align}
    & |\hat{\sigma}^2_{\cH_1, \lambda}(S_{\bP}, \tilde{S}_{\bQ};k_{\theta}) - \hat{\sigma}^2_{\cH_1, \lambda}(S_{\bP}, S_{\bQ};k_{\theta})- (\hat{\sigma'}^2_{\cH_1, \lambda}(S_{\bP}, \tilde{S}_{\bQ};k_{\theta}) - \hat{\sigma'}^2_{\cH_1, \lambda}(S_{\bP}, S_{\bQ};k_{\theta}))| \nonumber \\
    & \leq  \frac{4}{n^3} \bigg |\sum_{i=1}^n \bigg[ \bigg(\sum_{j=1}^{n}\tilde{H}_{ij} \bigg)^2 - \bigg(\sum_{j=1}^{n}H_{ij} \bigg)^2 - \bigg(\sum_{j=1}^{n}\tilde{F}_{ij} \bigg)^2 + \bigg(\sum_{j=1}^{n}F_{ij} \bigg)^2 \bigg] \bigg|  \nonumber \\
    & \quad + \frac{4}{n^4} \bigg|\bigg[\bigg(\sum_{i=1}^{n}\sum_{j=1}^{n}\tilde{H}_{ij} \bigg)^2 - \bigg(\sum_{i=1}^{n}\sum_{j=1}^{n}H_{ij} \bigg)^2 - \bigg(\sum_{i=1}^{n}\sum_{j=1}^{n}\tilde{F}_{ij} \bigg)^2 + \bigg(\sum_{i=1}^{n}\sum_{j=1}^{n}F_{ij} \bigg)^2  \bigg] \bigg| \nonumber \\
    & \leq \frac{4}{n^3} \bigg|\sum_{i=1}^n \bigg[\bigg(\sum_{j=1}^{n}\tilde{H}_{ij} - \sum_{j=1}^{n}\tilde{F}_{ij} \bigg)\bigg(\sum_{j=1}^{n}\tilde{H}_{ij} + \sum_{j=1}^{n}\tilde{F}_{ij} \bigg) - \bigg(\sum_{j=1}^{n}H_{ij} - \sum_{j=1}^{n}F_{ij} \bigg)\bigg(\sum_{j=1}^{n}H_{ij} + \sum_{j=1}^{n}F_{ij} \bigg) \bigg]  \bigg| \nonumber \\
    & \quad + \frac{4}{n^4} \bigg| \bigg(\sum_{ij}\tilde{H}_{ij} - \sum_{ij}\tilde{F}_{ij} \bigg)\bigg(\sum_{ij}\tilde{H}_{ij} + \sum_{ij}\tilde{F}_{ij} \bigg) - \bigg(\sum_{ij}H_{ij} - \sum_{ij}F_{ij} \bigg)\bigg(\sum_{ij}H_{ij} + \sum_{ij}F_{ij} \bigg) \bigg| \nonumber \\
    & \leq \frac{4}{n^3} \bigg| \bigg(\sum_{j=1}^{n}\tilde{H}_{1j} - \sum_{j=1}^{n}\tilde{F}_{1j} \bigg)\bigg(\sum_{j=1}^{n}\tilde{H}_{1j} + \sum_{j=1}^{n}\tilde{F}_{1j} \bigg) + \sum_{i > 1} \bigg(\tilde{H}_{i1} - \tilde{F}_{i1} \bigg)\bigg(\sum_{j=1}^{n}\tilde{H}_{ij} + \sum_{j=1}^{n}\tilde{F}_{ij} \bigg)  \nonumber \\
    & \quad - \bigg(\sum_{j=1}^{n}H_{1j} - \sum_{j=1}^{n}F_{1j} \bigg)\bigg(\sum_{j=1}^{n}H_{1j} + \sum_{j=1}^{n}F_{1j} \bigg) - \sum_{i > 1} \bigg(H_{i1} - F_{i1} \bigg)\bigg(\sum_{j=1}^{n}H_{ij} + \sum_{j=1}^{n}F_{ij} \bigg) \bigg| \nonumber \\
    & \quad + \frac{4}{n^4} \bigg| \sum_{ij}\tilde{H}_{ij} - \sum_{ij}\tilde{F}_{ij} \bigg| \cdot \bigg| \sum_{ij}\tilde{H}_{ij} + \sum_{ij}\tilde{F}_{ij} - \bigg( \sum_{ij}H_{ij} + \sum_{ij}F_{ij} \bigg) \bigg| \nonumber \\
    & \quad + \frac{4}{n^4} \bigg| \sum_{ij}H_{ij} + \sum_{ij}F_{ij} \bigg| \cdot \bigg| \sum_{ij}\tilde{H}_{ij} - \sum_{ij}\tilde{F}_{ij} - \bigg( \sum_{ij}H_{ij} - \sum_{ij}F_{ij} \bigg) \bigg| \nonumber \\
    & \leq \frac{4}{n^3} \bigg(\bigg|  \sum_{j=1}^{n}\tilde{H}_{1j} - \sum_{j=1}^{n}\tilde{F}_{1j} \bigg| \cdot \bigg| \sum_{j=1}^{n}\tilde{H}_{1j} + \sum_{j=1}^{n}\tilde{F}_{1j} - \bigg(\sum_{j=1}^{n}H_{1j} + \sum_{j=1}^{n}F_{1j} \bigg) \bigg| \nonumber \\
    & \quad + \bigg| \sum_{j=1}^{n}\tilde{H}_{1j} + \sum_{j=1}^{n}\tilde{F}_{1j} \bigg| \cdot \bigg| \sum_{j=1}^{n}\tilde{H}_{1j} - \sum_{j=1}^{n}\tilde{F}_{1j}  - \bigg( \sum_{j=1}^{n}H_{1j} - \sum_{j=1}^{n}F_{1j} \bigg) \bigg| \bigg) \nonumber \\
    &\quad + \frac{4}{n^3} \sum_{i > 1} \bigg( \bigg| \tilde{H}_{i1} - \tilde{F}_{i1} \bigg| \cdot \bigg| \sum_{j=1}^{n}\tilde{H}_{ij} + \sum_{j=1}^{n}\tilde{F}_{ij} - \bigg( \sum_{j=1}^{n}H_{ij} + \sum_{j=1}^{n}F_{ij} \bigg)  \bigg|  \nonumber \\
    & \quad + \bigg| \sum_{j=1}^{n}H_{ij} + \sum_{j=1}^{n}F_{ij} \bigg| \cdot \bigg|\tilde{H}_{i1} - \tilde{F}_{i1} -  \bigg(H_{i1} - F_{i1} \bigg) \bigg| \bigg) \nonumber \\
    & \quad + \frac{4}{n^4} \cdot 2(2n -1) \cdot 4\nu \cdot (n^2 \cdot 4 L_2 \epsilon \sqrt{d} + n^2 \cdot 4 L_2 \epsilon \sqrt{d} ) \nonumber \\
    & \quad + \frac{4}{n^4} \cdot (n^2 \cdot 4\nu + n^2 \cdot 4 \nu) \cdot ((2n-1) \cdot 4 L_2 \epsilon \sqrt{d} + (2n-1) \cdot 4 L_2 \epsilon \sqrt{d} ) \nonumber \\
    & \leq \frac{4}{n^3} \cdot (n \cdot 4 \nu + n \cdot 4 \nu) \cdot ( n \cdot 4L_2\epsilon \sqrt{d} + n \cdot 4L_2\epsilon \sqrt{d}) + \frac{4}{n^3}\cdot(n \cdot 4 \nu + n \cdot 4 \nu) \cdot (n \cdot4L_2\epsilon \sqrt{d} + n \cdot 4L_2\epsilon \sqrt{d}) \nonumber \\
    & \quad + \frac{4}{n^3} \cdot (n-1) \cdot (8 \nu \cdot ( n \cdot 4L_2\epsilon \sqrt{d}  + n \cdot 4L_2\epsilon \sqrt{d}) + (n \cdot 4 \nu + n \cdot 4 \nu) \cdot (4L_2\epsilon \sqrt{d}+ 4L_2\epsilon \sqrt{d} ) ) \nonumber \\
    & \quad + \frac{512(2n-1)}{n^2} \nu L_2 \epsilon \sqrt{d}  \nonumber \\
    &  \leq \frac{1024(2n-1)}{n^2}\nu L_2 \epsilon \sqrt{d} \leq \frac{2048\nu L_2\epsilon \sqrt{d}}{n}. \nonumber
\end{align}
Using McDiarmid's inequality for each $\Delta(\theta_i)$ and a union bound, we then obtain that with probability at least $1-\delta$,
\begin{align}
    \max_{i \in \{1,2,\ldots, P \}}\Delta(\theta) \leq \frac{2048\nu L_2 \epsilon \sqrt{d}}{ \sqrt{2n}}\sqrt{\log \frac{2P}{\delta}} \leq \frac{1024\nu L_2 \epsilon \sqrt{d}}{\sqrt{n}}\sqrt{2\log \frac{2}{\delta} + 2 \kappa \log \frac{4\R_{\Theta}}{q}}. \nonumber
\end{align}

According to Lemma 19 in~\citet{liu2020learning}, for any two $\theta, \theta' \in \Theta$, we have
\begin{gather}
    | \hat{\sigma}^2_{\cH_1, \lambda}(S_{\bP}, S_{\bQ};k_{\theta}) - \hat{\sigma}^2_{\cH_1, \lambda}(S_{\bP}, S_{\bQ};k_{\theta'}) |
    \leq 256L_1 \nu \| \theta - \theta'\| \leq 256L_1 \nu q. \nonumber
\end{gather}
Similarly, $| \hat{\sigma}^2_{\cH_1, \lambda}(S_{\bP}, \tilde{S}_{\bQ};k_{\theta}) - \hat{\sigma}^2_{\cH_1, \lambda}(S_{\bP}, \tilde{S}_{\bQ};k_{\theta'})| \leq 256 L_1 \nu q $.

Combining these two results, we know that with probability at least $1 - \delta$,
\begin{align}
    \sup_{\theta} |\Delta(\theta)| \leq  \max_{i \in \{1,2,\ldots, P \}}\Delta(\theta) + 512L_1 \nu q \leq
    \frac{1024 \nu L_2 \epsilon \sqrt{d}}{\sqrt{n}}\sqrt{2\log \frac{2}{\delta} + 2 \kappa \log \frac{4\R_{\Theta}}{q}} + 512L_1 \nu q. \nonumber
\end{align}

Since the adversary perturbs benign test pairs, we let $n = n_\te$ and $q = \frac{1}{\sqrt{n_\te}}$, thus yielding the desired results.
\end{proof}

\subsection{Proof of Lemma~\ref{lemma:adv_benign_tp}}
\label{appendix:proof_adv_benign_tp}

\paragraph{Lemma~\ref{lemma:adv_benign_tp} (Restated).}
Under Assumptions~\ref{assump:param} to~\ref{assump:lipschitz}, we use $n_{\tr}$ samples to train a kernel $k_{\theta}$ parameterized with $\theta$ and $n_{\te}$ samples to run a test of significance level $\alpha$. Given adversarial budget $\epsilon \geq 0$, the benign pair $(S_{\bP}, S_{\bQ})$ and the corresponding adversarial pair $(S_{\bP}, \tilde{S}_{\bQ})$ where $\tilde{S}_{\bQ} \in \epsball[S_{\bQ}]$,  with the probability at least $1 - \delta$, we have
 \begin{align}
     & \sup_{\theta \in \bar{\Theta}_s} |\hat{\cF}(S_{\bP}, \tilde{S}_\bQ;k_{\theta}) - \hat{\cF}(S_{\bP}, S_\bQ;k_{\theta})| \nonumber \\
     & \leq \frac{L_2 \epsilon \sqrt{d}}{\sqrt{n_\te}}\bigg[ \frac{8 }{s } + \frac{2048 \nu^2 }{s^3} \bigg]\sqrt{2\log \frac{2}{\delta} + 2 \kappa \log (4\R_{\Theta} \sqrt{n_\te}) } + \bigg[  \frac{8 L_1}{s \sqrt{n_\te}} + \frac{1024 L_1 \nu}{s^3 \sqrt{n_\te}}\bigg]
     := \tilde{\xi}, \nonumber 
 \end{align}
 and by treating $\nu$ as a constant, we have
 \begin{align}
     \sup_{\theta \in \bar{\Theta}_s } |\hat{\cF}(S_{\bP}, \tilde{S}_\bQ;k_{\theta}) - \hat{\cF}(S_{\bP}, S_\bQ;k_{\theta})| =
     \cO \bigg(\frac{\epsilon L_2 \sqrt{d \big(\log \frac{1}{\delta} + \kappa \log (\R_{\Theta} \sqrt{n_\te})\big) } + L_1 }{s \sqrt{n_\te}} \bigg). \nonumber
 \end{align}

\begin{proof}[Proof of Lemma~\ref{lemma:adv_benign_tp}] 
Using Proposition~\ref{proposition:mmd_shift} and~\ref{proposition:sigma_shift}, we have
    \begin{align}
    & \sup_{\theta \in \bar{\Theta}_s } |\hat{\cF}(S_{\bP}, \tilde{S}_\bQ;k_{\theta}) - \hat{\cF}(S_{\bP}, S_\bQ;k_{\theta})| \nonumber \\
    & = \sup_{\theta \in \bar{\Theta}_s } |\frac{{\widehat{\MMD}}^{2}(S_{\bP}, \tilde{S}_{\bQ};k_{\theta})}{\hat{\sigma}_{\cH_1, \lambda}(S_{\bP}, \tilde{S}_{\bQ};k_{\theta})}
    - \frac{{\widehat{\MMD}}^{2}(S_{\bP}, S_{\bQ};k_{\theta})}{\hat{\sigma}_{\cH_1, \lambda}(S_{\bP}, S_{\bQ};k_{\theta})}| \nonumber \\
    & \le \bigg| \frac{{\widehat{\MMD}}^{2}(S_{\bP}, \tilde{S}_{\bQ};k_{\theta})}{\hat{\sigma}_{\cH_1, \lambda}(S_{\bP}, \tilde{S}_{\bQ};k_{\theta})}
    - \frac{{\widehat{\MMD}}^{2}(S_{\bP}, S_{\bQ};k_{\theta})}{\hat{\sigma}_{\cH_1, \lambda}(S_{\bP}, \tilde{S}_{\bQ};k_{\theta})} \bigg| + \bigg|\frac{{\widehat{\MMD}}^{2}(S_{\bP}, S_{\bQ};k_{\theta})}{\hat{\sigma}_{\cH_1, \lambda}(S_{\bP}, \tilde{S}_{\bQ};k_{\theta})}
    - \frac{{\widehat{\MMD}}^{2}(S_{\bP}, S_{\bQ};k_{\theta})}{\hat{\sigma}_{\cH_1, \lambda}(S_{\bP}, S_{\bQ};k_{\theta})} \bigg| \nonumber \\
    & = \frac{1}{|\hat{\sigma}_{\cH_1, \lambda}(S_{\bP}, \tilde{S}_{\bQ};k_{\theta})|} \cdot |{\widehat{\MMD}}^{2}(S_{\bP}, \tilde{S}_{\bQ};k_{\theta}) - {\widehat{\MMD}}^{2}(S_{\bP}, S_{\bQ};k_{\theta})| \nonumber \\
    & \quad + \frac{|{\widehat{\MMD}}^{2}(S_{\bP}, S_{\bQ};k_{\theta})| \cdot |\hat{\sigma}^2_{\cH_1, \lambda}(S_{\bP}, \tilde{S}_{\bQ};k_{\theta}) - \hat{\sigma}^2_{\cH_1, \lambda}(S_{\bP}, S_{\bQ};k_{\theta})|}{ |\hat{\sigma}_{\cH_1, \lambda}(S_{\bP}, S_{\bQ};k_{\theta})| \cdot |\hat{\sigma}_{\cH_1, \lambda}(S_{\bP}, \tilde{S}_{\bQ};k_{\theta})| \cdot |(\hat{\sigma}_{\cH_1, \lambda}(S_{\bP}, \tilde{S}_{\bQ};k_{\theta}) + \hat{\sigma}_{\cH_1, \lambda}(S_{\bP}, S_{\bQ};k_{\theta}) )|}  \nonumber \\
    & \le \frac{1}{s} |\widehat{\MMD}^{2}(S_{\bP}, \tilde{S}_{\bQ};k_{\theta}) - \widehat{\MMD}^{2}(S_{\bP}, S_{\bQ};k_{\theta}) | + \frac{4\nu}{2s^3} |\hat{\sigma}^2_{\cH_1, \lambda}(S_{\bP}, \tilde{S}_{\bQ};k_{\theta}) - \hat{\sigma}^2_{\cH_1, \lambda}(S_{\bP}, S_{\bQ};k_{\theta})| \nonumber \\
    & = \frac{1}{s} (\frac{8 L_2 \epsilon \sqrt{d}}{\sqrt{n_{\te}}}  \sqrt{2 \log \frac{2}{\delta} + 2\kappa\log (4\R_{\Theta} \sqrt{n_{\te}})} + \frac{8 L_1}{\sqrt{n_{\te}}}) + \frac{2 \nu}{s^3} (\frac{1024 \nu L_2 \epsilon \sqrt{d}}{\sqrt{n_{\te}}}\sqrt{2\log \frac{2}{\delta} + 2 \kappa \log (4\R_{\Theta} \sqrt{n_\te}) } + \frac{ 512L_1 \nu}{ \sqrt{n_\te}}) \nonumber  \\
    & = \bigg[ \frac{8 L_2 \epsilon \sqrt{d}}{s \sqrt{n_\te}} + \frac{2048 \nu^2 L_2 \epsilon \sqrt{d}}{s^3 \sqrt{n_\te}}\bigg]\sqrt{2\log \frac{2}{\delta} + 2 \kappa \log (4\R_{\Theta} \sqrt{n_\te}) } + \bigg[  \frac{8 L_1}{s \sqrt{n_\te}} + \frac{1024 L_1 \nu}{s^3 \sqrt{n_\te}}\bigg]. \nonumber
\end{align}
\end{proof}

\subsection{Proof of Theorem~\ref{theory:tp_attack}}
\label{appendix:proof_tp_attack}

Before providing the proof of Theorem~\ref{theory:tp_attack}, we need the following lemma.
We let $\cF(k_{\theta})$ refer to $\cF(S_{\bP}, S_{\bQ}; k_{\theta})$, and analogously $\hat{\cF}(k_{\theta})$ refer to $\hat{\cF}(S_{\bP}, S_{\bQ}; k_{\theta})$, for simplicity.
\begin{lemma}[\citet{liu2020learning}]
\label{lemma:benign_tp}
  Under Assumptions~\ref{assump:param} to~\ref{assump:lipschitz}, we use $n_{\tr}$ samples to train a kernel $k_{\theta}$ parameterized with $\theta$ and $n_{\te}$ samples to run a test of significance level $\alpha$. 
  With probability at least $1 - \delta$, we have
 \begin{align}
     & \sup_{\theta \in \bar{\Theta}_s} |\hat{\cF}(S_\bP, S_\bQ; k_{\theta}) - \cF(S_\bP, S_\bQ; k_{\theta})| \nonumber \\
     & \leq \frac{2\nu}{s^3}\lambda + \frac{1}{\sqrt{n_\tr}}\bigg[ \frac{8\nu}{s } +  \frac{1792\nu}{s^2s} \bigg] \sqrt{2\log \frac{2}{\delta} + 2 \kappa \log (4\R_{\Theta}\sqrt{n_{\tr}})} + \bigg[\frac{8}{s\sqrt{n_{\tr}}} + \frac{2048 \nu^2}{\sqrt{n_{\tr}}s^2s} \bigg] L_1 + \frac{4608 \nu^3}{s^2 n_{\tr} s} := \xi. \nonumber 
 \end{align}
\end{lemma}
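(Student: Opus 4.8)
\textbf{Proof proposal for Lemma~\ref{lemma:benign_tp}.}

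The plan is to bound $\sup_{\theta \in \bar\Theta_s} |\hat{\cF}(k_\theta) - \cF(k_\theta)|$ by first controlling the numerator deviation $|\widehat{\MMD}^2 - \MMD^2|$ and the denominator deviation $|\hat\sigma^2_{\theta} - \sigma^2_\theta|$ uniformly over $\theta \in \bar\Theta_s$, and then combining these through an algebraic identity for the difference of ratios, exactly as in the proof of Lemma~\ref{lemma:adv_benign_tp}. Note one structural difference from Proposition~\ref{proposition:mmd_shift}: here the comparison is between the empirical quantity (based on $n_\tr$ training samples) and its population counterpart, rather than between benign and adversarial empirical quantities, so the perturbation term $L_2 \epsilon \sqrt d$ is replaced by a genuine concentration term, and a $\lambda$-bias term appears because $\hat\sigma^2_{\cH_1,\lambda}$ targets $\sigma^2_{\cH_1} + \lambda$ rather than $\sigma^2_{\cH_1}$.

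First I would establish a uniform deviation bound for the MMD numerator: $\sup_{\theta \in \Theta} |\widehat{\MMD}^2(S_\bP,S_\bQ;k_\theta) - \MMD^2(\bP,\bQ;k_\theta)| = \cO\big(\sqrt{(\log(1/\delta) + \kappa\log(\R_\Theta\sqrt{n_\tr}))/n_\tr}\big)$ with the $\nu$-dependence coming from Assumption~\ref{assump:bound}. This follows the same covering-net plus McDiarmid plus Lipschitz-in-$\theta$ (Assumption~\ref{assump:lipschitz}, constant $L_1$) argument used in Proposition~\ref{proposition:mmd_shift}: cover $\Theta$ by $P = (4\R_\Theta/q)^\kappa$ balls of radius $q$, apply McDiarmid to each $\Delta(\theta_i)$ (the bounded-differences constant is $\cO(\nu/n_\tr)$ since $|H_{ij}| \le 4\nu$), union bound, then pay $8L_1 q$ for the off-net discretization and set $q = 1/\sqrt{n_\tr}$. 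Second, I would do the analogous thing for $\hat\sigma^2_{\cH_1,\lambda}$, which yields a bound of order $\nu^2\sqrt{(\log(1/\delta)+\kappa\log(\R_\Theta\sqrt{n_\tr}))/n_\tr} + \nu^3/n_\tr$ plus the deterministic $\lambda$ term; the $\nu^3/n_\tr$-type remainder is the higher-order term in the concentration of the variance estimator, and the Lipschitz-in-$\theta$ constant for $\hat\sigma^2$ is $256 L_1 \nu$ (Lemma~19 of Liu et al., already cited in the proof of Proposition~\ref{proposition:sigma_shift}). Third, on $\bar\Theta_s$ we have $\sigma^2_\theta \ge s^2$ and $\hat\sigma^2_{\cH_1,\lambda} \ge \lambda > 0$, and (up to the high-probability event) $\hat\sigma_\theta \gtrsim s$; combined with $|\widehat{\MMD}^2| \le 4\nu$ and $\MMD^2 \le 4\nu$, I would write
\begin{align}
\Big|\frac{\widehat{\MMD}^2}{\hat\sigma_\theta} - \frac{\MMD^2}{\sigma_\theta}\Big|
\le \frac{1}{s}\big|\widehat{\MMD}^2 - \MMD^2\big| + \frac{4\nu}{2s^3}\big|\hat\sigma^2_\theta - \sigma^2_\theta\big|, \nonumber
\end{align}
and substitute the two uniform bounds together with the deterministic $\lambda$-bias contribution $\tfrac{4\nu}{2s^3}\lambda = \tfrac{2\nu}{s^3}\lambda$; collecting constants gives the stated $\xi$.

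The main obstacle is the denominator concentration bound $\sup_{\theta \in \bar\Theta_s}|\hat\sigma^2_{\cH_1,\lambda}(S_\bP,S_\bQ;k_\theta) - \sigma^2_{\cH_1}(\bP,\bQ;k_\theta)|$: unlike the MMD $U$-statistic, $\hat\sigma^2_{\cH_1,\lambda}$ is a ratio/quadratic functional of the $H_{ij}$'s (it involves $\sum_i(\sum_j H_{ij})^2$ terms), so the bounded-differences computation is more delicate — one must carefully track how changing a single $(x_1,y_1)$ propagates through the squared row-sums, which is precisely the long telescoping calculation carried out in the proof of Proposition~\ref{proposition:sigma_shift}, and one must separately handle the bias $\bE[\hat\sigma^2_{\cH_1,\lambda}] - \sigma^2_{\cH_1}$, which contributes both the explicit $\lambda$ and an $\cO(\nu^3/n_\tr)$ finite-sample term. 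Everything else — the covering argument, the union bound, the Lipschitz-in-$\theta$ steps, and the final ratio algebra — is a routine repetition of the machinery already displayed in the proofs of Propositions~\ref{proposition:mmd_shift}–\ref{proposition:sigma_shift} and Lemma~\ref{lemma:adv_benign_tp}, just with $n_\tr$ in place of $n_\te$ and population targets in place of adversarial ones.
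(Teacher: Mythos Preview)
The paper does not actually prove Lemma~\ref{lemma:benign_tp}: it is stated as a cited result from \citet{liu2020learning} and used as a black box in the proof of Theorem~\ref{theory:tp_attack}. Your proposal correctly reconstructs the argument behind the cited result, and does so using exactly the machinery (covering net on $\Theta$, McDiarmid bounded-differences, Lipschitz-in-$\theta$ discretization error, and the ratio decomposition $|a/b - c/d| \le |a-c|/b + |c|\,|b-d|/(bd)$) that the paper itself displays in Propositions~\ref{proposition:mmd_shift}--\ref{proposition:sigma_shift} and Lemma~\ref{lemma:adv_benign_tp}, so there is nothing to compare against in this paper and your sketch is the right one.
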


Then, we provide the proof of Theorem~\ref{theory:tp_attack}.

\paragraph{Theorem~\ref{theory:tp_attack} (Restated).}
In the setup of Proposition~\ref{proposition:mmd_shift}, given $\hat{\theta}_{n_{\tr}} = \arg\max_{\theta \in \bar{\Theta}_s} \hat{\cF}(k_{\theta})$, $r^{(n_{\te})}$ denoting the rejection threshold, $\cF^{*} = \sup_{\theta \in \bar{\Theta}_s} \cF(k_\theta)$, and constants $ C_1, C_2, C_3$ depending on $\nu, L_1 , \lambda, s, \R_{\Theta}$ and $\kappa$, with probability at least $1-\delta$, the test under adversarial attack has power
\begin{align}
    \Pr(n_{\te}{\widehat{\MMD}}^{2}(S_\bP, \tilde{S}_\bQ; k_{\hat{\theta}_{n_{\tr}}}) > r^{(n_{\te})})  \geq \Phi \bigg[ \sqrt{n_{\te}} \bigg( \cF^* - \frac{C_1}{\sqrt{n_{\tr}}}\sqrt{\log \frac{\sqrt{n_{\tr}}}{\delta}} -  \frac{C_2 L_2 \epsilon \sqrt{d}}{\sqrt{n_{\te}}} \sqrt{\log \frac{\sqrt{n_{\te}}}{\delta}} \bigg) -  C_3 \sqrt{\log \frac{1}{\alpha}} \bigg]. \nonumber
    \end{align}

\begin{proof}[Proof of Theorem~\ref{theory:tp_attack}]

Letting $\theta^* = \argmax \cF(k_{\theta})$, we know that $\hat{\cF}(S_\bP, S_\bQ; k_{\hat{\theta}_{n_{\tr}}}) \geq  \hat{\cF}(S_\bP, S_\bQ; k_{\theta^*})$ because $\hat{\theta}_{n_{\tr}}$ maximizes $\hat{\cF}$. 
Using Lemma~\ref{lemma:adv_benign_tp} and~\ref{lemma:benign_tp} 
, in the adversarial setting, we can obtain
\begin{align}
\label{eq:tc_gap}
    \cF(S_\bP, \tilde{S}_\bQ; k_{\hat{\theta}_{n_{\tr}}}) 
    & \geq \hat{\cF}(S_\bP, \tilde{S}_\bQ; k_{\hat{\theta}_{n_{\tr}}}) - \xi 
    \geq (\hat{\cF}(S_\bP, S_\bQ; k_{\hat{\theta}_{n_{\tr}}}) - \tilde{\xi}) - \xi 
    \geq \hat{\cF}(S_\bP, S_\bQ; k_{\theta^*}) - \xi -\tilde{\xi} \nonumber \\
    &\geq (\cF(S_\bP, S_\bQ; k_{\theta^*}) - \xi) - \xi - \tilde{\xi}
    = \cF^* - 2 \xi - \tilde{\xi}. 
\end{align}

Corollary 11 of \citet{gretton2012kernel} implies that $r^{(n_{\te})} \leq 4 \nu \sqrt{\log (\alpha^{-1}) n_{\te}}$ no matter the choice of $\theta$.
According to Theorem~\ref{theory:asymptotics} and Eq.~\eqref{eq:tc_gap}, with probability at least $1 - \delta$, the test in adversarial settings has power

\begin{align}
    &\Pr\bigg[ n_{\te} \widehat{\MMD}^2(S_\bP, \tilde{S}_\bQ;k_{\hat{\theta}_{n_\tr}}) > r^{(n_{\te})} \bigg] \nonumber \\
    & = \Pr \bigg[ n_{\tr} \frac{\widehat{\MMD}^2(S_\bP, \tilde{S}_\bQ;k_{\hat{\theta}_{n_\tr}}) - \MMD^2(S_\bP, \tilde{S}_\bQ; k_{\hat{\theta}_{n_\tr}})}{\sigma_{\cH_1}(S_\bP, \tilde{S}_\bQ; k_{\hat{\theta}_{n_\tr}})} > \frac{r^{(n_{\te})}}{\sqrt{n_\te} \sigma_{\cH_1}(S_\bP, \tilde{S}_\bQ; k_{\hat{\theta}_{n_\tr}})} - \frac{\sqrt{n_\te} \MMD^2(S_\bP, \tilde{S}_\bQ; k_{\hat{\theta}_{n_\tr}})}{\sigma_{\cH_1}(S_\bP, \tilde{S}_\bQ; k_{\hat{\theta}_{n_\tr}})} \bigg] \nonumber \\
    & \rightarrow \Phi \bigg[ \sqrt{n_\te} \cF(S_\bP, \tilde{S}_\bQ; k_{\hat{\theta}_{n_\te}}) - \frac{r^{(n_{\te})}}{\sqrt{n_\te}\sigma_{\cH_1}(S_\bP, \tilde{S}_\bQ; k_{\hat{\theta}_{n_\tr}})} \bigg] \nonumber \\
    & \geq \Phi \bigg[ \sqrt{n_\te} (\cF^* - 2\xi - \tilde{\xi}) - \frac{r^{(n_{\te})}}{s \sqrt{n_\te}} \bigg] \nonumber \\
    & \geq \Phi \bigg[ \sqrt{n_{\te}} \bigg( \cF^* -  \frac{C_1}{\sqrt{n_{\tr}}}\sqrt{\log \frac{\sqrt{n_{\tr}}}{\delta}} - \frac{C_2 L_2 \epsilon \sqrt{d}}{\sqrt{n_{\te}}} \sqrt{\log \frac{\sqrt{n_{\te}}}{\delta}} \bigg) - C_3 \sqrt{\log \frac{1}{\alpha}} \bigg], \nonumber
\end{align}
where $C_1, C_2, C_3$ are constants depending on $\nu, L_1, \kappa, \R_{\Theta}, \lambda$ and $s$.
\end{proof}

\section{Related Works}
\label{sec:related_work}

In this section, we discuss the differences between our work and the related studies. 
\paragraph{Two-sample tests.}
TST is a premier statistical method to judge whether two sets of data come from the same distribution. Classical TSTs such as \textit{t}-test and Kolmogorov-Smirnov test require strong assumptions on the distributions being studied and are only efficient when applied to one-dimensional data. Non-parametric TSTs, relaxing the distributional assumptions and being able to handling complex distributions, have been applied to a wide of real-world domains~\cite{gretton2009fast, sugiyama2011least, gretton2012kernel,sutherland2016generative,chen2017new, ghoshdastidar2017two, li2018fully, kirchler2020two, chwialkowski2015fast,jitkrittum2016interpretable,lopez2016revisiting,cheng2019classification,liu2020learning,liu2021meta}.
These tests have also allowed applications in various machine learning problems such as domain adaptation, covariate shift, label-noise learning, generative modeling, fairness and causal discovery~\cite{binkowski2018demystifying,zhang2020one,DBLP:conf/nips/FangL0S20,gong2016domain,fang2020open,liu2019butterfly,DBLP:conf/ijcai/ZhangLFY0020, liu2020multi,stojanov2019data,lopez2016revisiting, oneto2020exploiting}. However, people rarely doubt the reliability of non-parametric TSTs. In other words, adversarial robustness of non-parametric TSTs is barely studied. In this paper, we leverage our proposed adversarial attack to disclose the failure mode of non-parametric TSTs and propose an effective strategy to make TSTs reliable in analyzing critical data.

\paragraph{Robust hypothesis tests.}
Previous robust hypothesis tests are composite tests where the null and the alternative hypotheses include a family of distributions, to obtain the reliable estimation
of the underlying distributions when there exists outliers in training dataset. These robust tests introduce various uncertainty sets for the distributions under the null and the alternative hypotheses such as $\epsilon$-contamination sets~\cite{huber2004robust} and sets centered around the empirical distribution defined via Kullback-Leibler divergence~\cite{levy2008robust,gul2017minimax} or Wasserstein metric~\cite{GaoX0X18, xie2021robust}. In comparison, our study discloses a premier hypothesis testing method (i.e., non-parametric TSTs) is non-robust against adversarial attacks during the testing procedure. Further, we develop a novel defense---robust deep kernels for TSTs, to enhance adversarial robustness of non-parametric TSTs at the testing time.

\paragraph{Adversarial attacks and defenses.} There is a bunch of studies on adversarial attacks~\cite{szegedy,Goodfellow14_Adversarial_examples,moosavi2016deepfool,papernot2016towards,Carlini017_CW,chen2018ead,ilyas2018black,Athalye_ICML_18_Obfuscated_Gradients,cheng2018queryefficient, xiao2018spatially,zheng2019distributionally,wong2019wasserstein,mopuri2018generalizable,Alaifari19_iclr_deformation,sriramanan2020guided,Cheng2020Sign-OPT:,chen2020hopskipjumpattack,rahmati2020geoda,yan2020policy,croce2020reliable,wu2020adversarial, Wu2020Skip,andriushchenko2020square,croce2020robustbench, yu2021lafeat,yao2021automated,hendrycks2021natural,kanth2021learning} and defenses~\cite{Madry_adversarial_training,Cai_CAT,YanGZ18,Wang_Xingjun_MA_FOSC_DAT,song2018improving,Tramer_iclr_18,Eric_Wong_provable_defence_convex_polytope,Ali_NIPS19_adversarial_training_for_free,Pang_ICML_19_AT_Ensemble,carmon2019unlabeled,wang2020improving_MART,ding2020mma,wu2020adversarial,dong2020adversarially,wong2020fast_zico_kolter, sehwag2020hydra,Lu_yiping_NIPS19_yopo,local_linearilization,Zhang_trades,zhang2020fat,zhang2021geometryaware,sriramanan2020guided,sriramanan2021towards, robey2021adversarial, DBLP:conf/icml/ZouFG21, kim2021distilling,wang2021probabilistic,sarkar2021adversarial,pang2021bag,chen2021robust,erdemir2021adversarial,gowal2021improving,rebuffi2021data} in the parametric settings, especially focusing on DNNs. On the other hand, studies on robustness of non-parametric classifiers (e.g., nearest neighbors, decision trees, random forests and kernel classifiers) are gaining a growing attention~\cite{amsaleg2017vulnerability,DBLP:conf/nips/HeinA17,wang2018analyzing,DBLP:conf/icml/ChenZBH19,sitawarin2019robustness,yang2019adversarial,yang2020robustness,bhattacharjee2020non,bhattacharjee2021consistent} as well. In contrast, our study focuses on adversarial robustness of non-parametric TSTs, which belongs to the field of hypothesis test rather than classification problems.

\paragraph{Statistical adversarial data detection.} Non-parametric TSTs have been applied to judge if upcoming data contains adversarial data that is statistically different from benign data distribution~\cite{metzen2017detecting,feinman2017detecting,grosse2017statistical,gao2021maximum}. These works focus on utilizing statistical methods (e.g., TSTs) to distinguish adversarial data against DNNs from benign data. Compared to these works, our work investigates TST itself. We disclose the adversarial vulnerabilities of non-parametric TSTs through adversarial attacks and further propose effective defensive strategies to make non-parametric TSTs reliable.

\section{Non-Parametric Two-Sample Tests}
\label{appendix:tst_intro}
We provide an introduction to the typical non-parametric TSTs in this section.

\subsection{Test Statistics}
\paragraph{C2ST-S~\cite{lopez2016revisiting}} Classifier-based two-sample test (C2ST) utilizes a classifier
$f : \cX \rightarrow \bR$ that outputs the classification probabilities. C2ST trains $f$ via maximizing the classification accuracy, and then makes judgements on the test pairs. C2ST-S is based on the sign of classification probabilities. The test statistic of C2ST-S proposed in~\citet{lopez2016revisiting} is 
\begin{align}
\label{eq:ts_c2sts}
\cD^{(\rS)}(S_{\bP}, S_\bQ) = \frac{1}{2n} \sum_{x_i \in S_\bP} \mathbbm{1}(f(x_i)>0) + \frac{1}{2n} \sum_{y_i \in S_\bQ} \mathbbm{1}(f(y_i)<0).
\end{align} 
Further, ~\citet{liu2020learning} pointed out that $\cD^{(\rS)}(S_{\bP}, S_\bQ)$ is equivalent to $\widehat{\MMD}^2(S_{\bP}, S_\bQ;k^{(\rS)})$.

\paragraph{C2ST-L~\cite{cheng2019classification}}
C2ST-L utilizes the classification confidence given by $f$ instead of only accessing the sign of $f$'s output.
Letting $f : \cX \rightarrow \bR$ be a classifier that outputs classification probabilities, the test statistic of C2ST-L proposed in~\citet{cheng2019classification} is
\begin{align}
\label{eq:ts_s2stl}
\cD^{(\rL)}(S_{\bP}, S_\bQ) = \frac{1}{n} \sum_{x_i \in S_\bP} f(x_i) - \frac{1}{n} \sum_{y_i \in S_\bQ} f(y_i).
\end{align}
Similar to C2ST-S, ~\citet{liu2020learning} also pointed out that $\cD^{(\rL)}(S_{\bP}, S_\bQ)$ is equivalent to $\widehat{\MMD}^2(S_{\bP}, S_\bQ;k^{(\rL)})$.

\paragraph{ME~\cite{chwialkowski2015fast,jitkrittum2016interpretable}.}
Given a positive definite kernel $k: \cX \times \cX \rightarrow \bR$ and a set of $G$ test locations $\mathcal{V} = \{ v_i\}_{i=1}^G$, the test statistic of ME is 
\begin{align}
\label{eq:ts_ME}
\cD^{(\ME)}(S_\bP, S_\bQ) = n \bar{z}_n^{\top}S_n^{-1}\bar{z}_n,
\end{align}
where
$\bar{z}_n=\frac{1}{n}\sum_{i=1}^n z_i$, $S_n = \frac{1}{n-1}\sum_{i=1}^n (z_i - \bar{z}_n)(z_i - \bar{z}_n)^{\top}$, and $z_i = (k(\bmx_i, v_j) - k(\bmy_i, v_j))_{j=1}^G \in \bR^G$.

\paragraph{SCF~\cite{chwialkowski2015fast,jitkrittum2016interpretable}.}
Given a positive definite kernel $k: \cX \times \cX \rightarrow \bR$ and a set of $G$ test locations $\mathcal{V} = \{ v_i\}_{i=1}^G $, the test statistic of SCF is 
\begin{align}
\label{eq:ts_SCF}
\cD^{(\SCF)}(S_\bP, S_\bQ) = n \bar{z}_n^{\top}S_n^{-1}\bar{z}_n,
\end{align}
where
$\bar{z}_n=\frac{1}{n}\sum_{i=1}^n z_i$, $S_n = \frac{1}{n-1}\sum_{i=1}^n (z_i - \bar{z}_n)(z_i - \bar{z}_n)^{\top}$, 
and $z_i = [ \hat{h}(\bmx_i)\sin{(\bmx_i^\top v_j)} - \hat{h} (\bmy_i) \sin{(\bmy_i^\top v_j)},  \hat{h}(\bmx_i)\cos{(\bmx_i^\top v_j)} - \hat{h} (\bmy_i) \cos{(\bmy_i^\top v_j)}]_{j=1}^G$. $\hat{h}(\bmx) = \int_{\bR^d} \exp (-iux)l(u)d u$ 
is the Fourier transform of $l(\bmx)$, 
and $h: \bR^d \rightarrow \bR $ is an analytic translation-invariant kernel.

\subsection{Test Criterion}
\label{appendix:extra_test_criterion}
For C2ST-S and C2ST-L, \citet{lopez2016revisiting} and \citet{cheng2019classification} proposed to maximize $f$'s classification accuracy, but it cannot directly maximize the test power~\cite{liu2020learning}. In this paper, therefore, we take $\hat{\cF}^{(\rS)}(S_\bP,S_\bQ) = \frac{\widehat{\MMD}^2(S_\bP, S_\bQ; k^{(\rS)})}{\hat{\sigma}_{\cH_1, \lambda} (S_\bP, S_\bQ; k^{(\rS)})} $ and $\hat{\cF}^{(\rL)}(S_\bP,S_\bQ) = \frac{\widehat{\MMD}^2(S_\bP, S_\bQ; k^{(\rL)})}{\hat{\sigma}_{\cH_1, \lambda} (S_\bP, S_\bQ; k^{(\rL)})} $ as the test criterion for C2ST-S and C2ST-L, respectively. To make $\hat{\cF}^{(\rS)}$ differentiable, we modify the kernel for C2ST-S as follows:
\begin{align}
    k^{(\rS)}(x, y) = \frac{1}{16} ( \frac{f(x)}{|f(x)|} + 1) ( \frac{f(y)}{|f(y)|} + 1). 
\end{align}

For ME and SCF tests, ~\citet{chwialkowski2015fast} and~\citet{jitkrittum2016interpretable} theoretically pointed out that maximizing $\cD^{(\ME)}(S_\bP, S_\bQ)$ and $\cD^{(\SCF)}(S_\bP, S_\bQ)$ can maximize the test power of ME and SCF, respectively. Therefore, $\hat{\cF}^{(\ME)}(\cdot, \cdot) = \cD^{(\ME)}(\cdot, \cdot)$ and $\hat{\cF}^{(\SCF)}(\cdot, \cdot) = \cD^{(\SCF)}(\cdot, \cdot)$.

\section{Experimental Details and Results}

\subsection{Datasets}
\label{appendix:dataset_stats}
In this section, we introduce the distribution $\bP$ and $\bQ$ of each dataset.

\paragraph{Blob.} 
Blob is often used to validate two-sample test methods~\cite{gretton2012kernel,jitkrittum2016interpretable,sutherland2016generative}. We show the specifications of $\bP$ and $\bQ$ of Blob in Table~\ref{tab:syn_dataset}.

\paragraph{High-dimensional Gaussian mixture.}
High-dimensional Gaussian mixture (HDGM) was utilized as a benchmark dataset in~\citet{liu2020learning}. HDGM can be regarded as high-dimensional Blob which contains two modes with the same variance and different covariance. We show the specifications of $\bP$ and $\bQ$ of HDGM in Table~\ref{tab:syn_dataset}. 

We set $d = 10$ for experiments on HDGM in Section~\ref{sec:attack_power} and~\ref{sec:mmd-rod-tp}. In section~\ref{sec:ablation}, we conduct experiments on HDGM with different $d \in \{ 5,10, 15,20,25\}$. In practice, the scale of data from HDGM is roughly betwen $-4.37$ and $4.70$. The adversarial budget we set in the experiments ($\epsilon=0.05$) on HDGM is small enough.

\begin{table}[h!]
\centering
\caption{Specifications of $\bP$ and $\bQ$ of synthetic datasets. $\mu_1^b=[0,0], \mu_2^b=[0,1], \mu_3^b=[0,2], ..., \mu_8^b=[2,1], \mu_9^b=[2,2]$. $\mu_1^h= \mathbf{0}_{d}, \mu_2^h= 0.5 \times \mathbf{1}_{d}$ where $\mathbf{1}_d$ is an identity matrix with size $d$. $\Delta_i^b=-0.02-0.002 \times (i-1)$ if $i < 5$ and $\Delta_i^b = 0.02 + 0.002 \times (i-6)$ if $i>5$. if $i=5$, $\Delta_i^b=0.$ $\Delta_1^h$ and $\Delta_2^h$ are set to 0.5 and -0.5, respectively.}
\label{tab:syn_dataset}
\begin{tabular}{c|cc}
\hline
Datasets & $\bP$ & $\bQ$ \\ \hline
Blob & $\sum_{i=1}^9 \frac{1}{9} \cN (\mu_i^b, 0.03 \times I_2)$ & $\sum_{i=1}^9 \frac{1}{9} \cN \bigg(\mu_i^b, \begin{bmatrix} 0.03 & \Delta_i^b \\ \Delta_i^b & 0.03 \end{bmatrix} \bigg ) $ \\
HDGM & $\sum_{i=1}^2 \frac{1}{2} \cN(\mu_i^h, I_d)$ & $\sum_{i=1}^2 \frac{1}{2} \cN \bigg (\mu_i^h, \begin{bmatrix} 1 & \Delta_i^h & \mathbf{0}_{d-2} \\ \Delta_i^h & 1 & \mathbf{0}_{d-2} \\ \mathbf{0}_{d-2}^T & \mathbf{0}_{d-2}^T & I_{d-2} \end{bmatrix} \bigg )$ \\
\hline
\end{tabular}

\end{table}

\paragraph{Higgs.}
For the experiments on Higgs, we compare the jet $\Phi$-momenta distribution ($d=4$) of the background process, $\bP$, which lacks Higgs bosons, to the corresponding distribution $\bQ$ for the process that produces Higgs bosons, following~\citet{chwialkowski2015fast}. Higgs dataset can be downloaded from \href{https://archive.ics.uci. edu/ml/datasets/Higgs}{UCI Machine Learning Repository}. 
In practice, the scale of data from Higgs is betwen $-1.74$ and $1.74$. The adversarial budget we set in the experiments ($\epsilon=0.05$) on Higgs is small enough.

\paragraph{MNIST.}
For the experiments on MNIST, we compare true MNIST images drawn from MNIST dataset~\cite{lecun1998gradient} (regarded as the distribution $\bP$) to fake MNIST images generated from a pretrained deep convolutional generative adversarial network (DCGAN)~\cite{radford2015unsupervised} (regarded as the distribution $\bQ$). Samples drawn from $\bQ$ can be generated by implementing \href{https://github.com/eriklindernoren/ PyTorch-GAN/blob/master/implementations/dcgan/dcgan.py.}{dcgan.py}.

\paragraph{CIFAR-10.}
For the experiments on CIFAR-10, we compare samples drawn from the class ``cat'' (regarded as the distribution $\bP$) to samples drawn from the class ``dog'' (regarded as the distribution $\bQ$) in CIFAR-10 dataset~\cite{krizhevsky2009learning_cifar10}. CIFAR-10 dataset can be downloaded via PyTorch~\cite{paszke2019pytorch}.

\subsection{Training Settings}
\label{appendix:exp_detail}
 
We conduct all experiments on Python 3.8 (PyTorch 1.1) with NVIDIA RTX A50000 GPUs. We run MMD-D, MMD-G, C2ST-S, C2ST-L, ME and SCF using \hyperlink{https://github.com/fengliu90/DK-for-TST}{the GitHub code} provided by~\citet{liu2020learning} and implement MMD-RoD by ourselves. Following~\citet{lopez2016revisiting}, we use a deep neural network $f$ as the classifier in C2ST-S and C2ST-L, and train $f$ by minimizing cross-entropy loss. The neural network structure $\phi$ in MMD-D and MMD-RoD has the same architecture with feature extractor in $f$, i.e., $f = g \circ \phi$ where $g$ is composed of two fully-connected layers and outputs the classification probabilities. For MNIST and CIFAR-10, we normalize the raw data into the scale $[-1,1]$.

For Blob, HDGM and Higgs, $\phi$ is a five-layer fully-connected neural network. The number of neurons in hidden and output layers of $\phi$ are set to 50 for Blob, $3 \times d$ for HDGM and 20 for Higgs, where $d$ is the dimensionality of samples. For MNIST and CIFAR-10, $\phi$ is a convolutional neural network (CNN) that contains four convolutional layers and one fully-connected layer. The structure of the CNN exactly follows~\citet{liu2020learning}.

We use Adam optimizer~\cite{kingma2014adam} to optimize (1) parameters of $f$ in C2ST-S and C2ST-L, (2) parameters of $\phi$ in MMD-D and MMD-RoD and (3) kernel lengthscale in MMD-G. We set drop-out rate to zero when training C2ST-S, C2ST-L, MMD-D and MMD-RoD on all datasets. We set the number of training samples $n_{\tr}$ to 100 for Blob, 3, 000 for HDGM, 5, 000 for Higgs, 500 for MNIST and CIFAR-10. 

For ME and SCF, we follow~\cite{chwialkowski2015fast} and set $J=10$ for Higgs. For other datasets, we set $J=5$.

For C2ST-S and C2ST-L, we set batchsize to 128 for Blob, HDGM and Higgs, and 100 for MNIST and CIFAR-10. We set the number of training epochs to $9000 \times n^{te}/$batchsize for Blob, 1, 000 for HDGM and Higgs, 2, 000 for MNIST and CIFAR-10. We set learning rate to 0.001 for Blob, HDGM and Higgs, and 0.0002 for MNIST and CIFAR-10.

For MMD-D, we use full batch (i.e., all samples) to train MMD-D and MMD-RoD for Blob, HDGM and Higgs. We use mini-batch (batchsize is 100) to train MMD-D and MMD-RoD for MNIST and CIFAR-10. We set the number of training epochs to 2, 000 for Blob, HDGM, Higgs and MNIST, and 1, 000 for CIFAR-10. We set learning rate to 0.0005 for Blob and Higgs, 0.00001 for HDGM, 0.001 for MNIST and 0.0002 for CIFAR-10.

For MMD-RoD, we keep $\epsilon$ for each dataset same as that in Table~\ref{tab:attack_power} and set $T$ to 1 for all datasets. We set learning rate to 0.0005 for MNIST. Other training settings of MMD-RoD keep same as that of MMD-D.

\subsection{Testing Procedure}
\label{appendix:testing}

We use permutation test to compute p-values of MMD-D, MMD-G, C2ST-S, C2ST-L and MMD-RoD. We set $\alpha$ to 0.05 and the iteration number of permutation test to 100 for all experiments. In addition, we utilize the wild bootstrap process~\cite{chwialkowski2014wild} to resample the value of MMD for MMD-D, MMD-G and MMD-RoD since the adversarial data are probably not IID. The wild bootstrap can ensure that we obtain correct p-values in non-IID/IID scenarios~\cite{chwialkowski2014wild}.

\paragraph{Wild bootstrap process.} Following~\citet{leucht2013dependent} and~\citet{chwialkowski2014wild}, we utilize the following wild bootstrap process:
\begin{align}
    \label{eq:wild_bootstrap}
    W_t = e^{-1/l} W_{t-1} + \sqrt{1 - e^{-2/l}} \tau_t,
\end{align}
where $W_0, \tau_0, ..., \tau_t$ are independent standard normal random variables. In all experiments, we set $l = 0.5$.

We summarize the permutation test with wild bootstrap process for non-parametric TSTs based on MMD in Algorithm~\ref{alg:testing}. 

\begin{algorithm}[h!]
  \caption{Testing with $k_{\theta}$ on $S_\bP$ and $S_\bQ$ }
  \label{alg:testing}
\begin{algorithmic}[1]
  \STATE {\bfseries Input:} input pair $(S_{\bP}, S_{\bQ})$, kernel $k_{\theta}$ parameterized with $\theta$, iteration number of permutation test $n_{\mathrm{perm}}$
  \STATE {\bfseries Output:} $est$, p-value: $\frac{1}{n_{\mathrm{perm}}} \sum_{i=1}^{n_{\mathrm{perm}}} \mathbbm{1} (perm_i > est)$
  \STATE $est \leftarrow \widehat{\MMD}^2(S_{\bP}, S_{\bQ}; k_{\theta})$
  \FOR{$i=1$ \textbf{to} $n_{\mathrm{perm}}$}
  \STATE Generate $\{W^{\bP}_i\}_{i=1}^{n_{\te}}$ and $\{W^{\bQ}_i\}_{i=1}^{n_{\te}}$ using Eq.~\eqref{eq:wild_bootstrap}
  \STATE $\{\tilde{W}^{\bP}_i\}_{i=1}^{n_{\te}} \leftarrow \{W^{\bP}_i\}_{i=1}^{n_{\te}} - \frac{1}{n_{\te}} \sum_{i=1}^{n_{\te}} W_i^{\bP}$
  \STATE $\{\tilde{W}^{\bQ}_i\}_{i=1}^{n_{\te}} \leftarrow \{W^{\bQ}_i\}_{i=1}^{n_{\te}} - \frac{1}{n_{\te}} \sum_{i=1}^{n_{\te}} W_i^{\bQ}$
  \STATE $perm_i \leftarrow \frac{1}{n_{\te}(n_{\te}-1)}\sum_{i,j} H_{ij} \tilde{W}_i^{\bP} \tilde{W}_j^{\bQ}$
  \ENDFOR
\end{algorithmic}
\end{algorithm}

\subsection{Weight Set Configurations}
\label{appendix:attack_config}
Observed from the lower right panel of Figure~\ref{fig:ablation}, we empirically find that an appropriate weight set is critical to the performance of EA. We finetune the weight set by increasing the weight of the TST that is difficult to be successfully fooled. Table~\ref{tab:attack_weight} summarizes the manually-finetuned weight of MMD-D, MMD-G, C2ST-S, C2ST-L, ME and SCF for each dataset.

\begin{table}[h!]
\centering
\caption{The manually-finetuned weight set of EA for each dataset.}
\label{tab:attack_weight}
\begin{tabular}{c|c}
\hline
Datasets  & $\bW$   \\ \hline
Blob & $\{\frac{5}{29}, \frac{1}{29}, \frac{1}{29}, \frac{20}{29}, \frac{1}{29}, \frac{1}{29}\}$ \\
HDGM & $\{\frac{25}{79}, \frac{1}{79}, \frac{1}{79}, \frac{50}{79}, \frac{1}{79}, \frac{1}{79} \}$\\
Higgs & $\{\frac{3}{98}, \frac{45}{98}, \frac{4}{98}, \frac{3}{98}, \frac{40}{98}, \frac{3}{98} \}$ \\
MNIST & $\{\frac{1}{109}, \frac{45}{109}, \frac{1}{109}, \frac{1}{109}, \frac{60}{109}, \frac{1}{109}\}$ \\
CIFAR-10 & $\{\frac{1}{80}, \frac{50}{80}, \frac{4}{80}, \frac{4}{80}, \frac{20}{80}, \frac{1}{80} \}$ \\ \hline
\end{tabular}
\end{table}

\subsection{Type $\bigromanone$ Errors}
\label{appendix:type1}

The Type $\bigromanone$ error of a TST measures the probability of rejecting $\cH_0$ when $\cH_0$ is true.
If the Type $\bigromanone$ error was much higher than $\alpha$, this TST would always reject the null hypothesis, which invalidates this TST~\cite{chwialkowski2014wild}. Therefore, a reasonable Type $\bigromanone$ error of a TST should not be much higher than $\alpha$.

\paragraph{Type $\bigromanone$ Errors of six typical non-parametric TSTs.} 
We report the Type $\bigromanone$ error of typical non-parametric TSTs on each dataset in Table~\ref{tab:type1}. As for the experimental configurations, the only difference from settings in Section~\ref{sec:attack_power} is that the training pairs and test pairs are composed of samples drawn from the same distribution $\bP$. 
Table~\ref{tab:type1} shows that these six typical non-parametric TSTs have reasonable Type $\bigromanone$ errors in benign settings.

\begin{table}[h!]
\centering
\caption{We report the Type $\bigromanone$ error of six typical non-parametric TSTs ($\alpha=0.05$) on five benchmark datasets.}
\label{tab:type1}
\begin{tabular}{cccccccc}
\hline
Datasets  & $n_{\te}$ & MMD-D & MMD-G & C2ST-S & C2ST-L & ME & SCF   \\ \hline
Blob & 100 & 0.056\scriptsize{$\pm$0.000} & 0.056\scriptsize{$\pm$0.000} & 0.049\scriptsize{$\pm$0.000} & 0.051\scriptsize{$\pm$0.000} & 0.051\scriptsize{$\pm$0.000} & 0.042\scriptsize{$\pm$0.000} \\
HDGM & 3000 & 0.057\scriptsize{$\pm$0.000} & 0.048\scriptsize{$\pm$0.000} & 0.056\scriptsize{$\pm$0.000}  & 0.040\scriptsize{$\pm$0.000} & 0.050\scriptsize{$\pm$0.000} & 0.041\scriptsize{$\pm$0.000}\\
Higgs & 5000 & 0.058\scriptsize{$\pm$0.000} & 0.043\scriptsize{$\pm$0.000} & 0.040\scriptsize{$\pm$0.001} & 0.045\scriptsize{$\pm$0.001} & 0.043\scriptsize{$\pm$0.000} & 0.029\scriptsize{$\pm$0.000} \\
MNIST & 500 & 0.026\scriptsize{$\pm$0.000} & 0.009\scriptsize{$\pm$0.000} & 0.030\scriptsize{$\pm$0.000} & 0.038\scriptsize{$\pm$0.000} & 0.026\scriptsize{$\pm$0.000} & 0.010\scriptsize{$\pm$0.000} \\
CIFAR-10 &  500 &  0.032\scriptsize{$\pm$0.000} & 0.001\scriptsize{$\pm$0.000} & 0.000\scriptsize{$\pm$0.000} & 0.003\scriptsize{$\pm$0.000} & 0.001\scriptsize{$\pm$0.000} & 0.000\scriptsize{$\pm$0.000} \\
\hline
\end{tabular}
\end{table}

\paragraph{Type $\bigromanone$ error of MMD-RoD.}
We report the Type $\bigromanone$ error of MMD-RoD in Table~\ref{tab:mmd-rod-type1}. 
The training pairs and test pairs are composed of samples drawn from the same distribution $\bP$. The training settings and testing procedure of MMD-RoD exactly follow Section~\ref{sec:mmd-rod-tp}.
Table~\ref{tab:mmd-rod-type1} shows that the Type $\bigromanone$ error of MMD-RoD maintains reasonable in benign settings.

\begin{table*}[h!]
\centering
\caption{The Type $\bigromanone$ error of MMD-RoD.}
\label{tab:mmd-rod-type1}
\begin{tabular}{ccccc}
\hline
Blob & HDGM & Higgs & MNIST & CIFAR-10 \\ \hline
0.049\scriptsize{$\pm$0.004} & 0.056\scriptsize{$\pm$0.000} & 0.030\scriptsize{$\pm$0.001} & 0.002\scriptsize{$\pm$0.000} & 0.000\scriptsize{$\pm$0.000} \\ 
\hline
\end{tabular}
\end{table*}

\subsection{Transferability between Different Types of Non-Parametric TSTs}
\label{appendix:extensive_exp}

We report the test power of non-parametric TSTs under the adversarial attack against a certain type of TSTs on MNIST in Figure~\ref{fig:MNIST_one_for_all}. 
The experimental settings are kept the same as in Section~\ref{sec:attack_power} except $\bW$. We set $w^{(\cJ)}=1$ for the attack implemented on benign test pairs in each row of Figure~\ref{fig:MNIST_one_for_all} where $\cJ$ is the target non-parametric TST (corresponding to the ordinate). Figure~\ref{fig:MNIST_one_for_all} shows that a specific attack against a certain type of TST sometimes can fool other types of TSTs. 

Therefore, an ensemble of TSTs is sometimes effective against a specific attack against a certain type of TST.
For example, an ensemble of C2ST-S and C2ST-L could still be vulnerable against the attack against C2ST-S since the test power of C2ST-S and C2ST-L are simultaneously degraded under the attack against C2ST-S (see the third row of Figure~\ref{fig:MNIST_one_for_all}). However, an ensemble of those six typical non-parametric TSTs can defend the attack against C2ST-S since MMD-D, MMD-G and ME all have a high test power under the attack against C2ST-S (see the third row of Figure~\ref{fig:MNIST_one_for_all}).

However, an ensemble of TSTs is no longer an effective defense under EA. Compared to the attack against a particular type of TST, our proposed EA that jointly minimizes a weighted sum of different test criteria can significantly degrades the test power of different TSTs simultaneously (empirically validated in Section~\ref{sec:attack_power}).

In addition, we further show the test power of non-parametric TSTs under EA against a TST ensemble composed by leaving one TST (corresponding to the ordinate) out of Ensemble on MNIST in Figure~\ref{fig:MNIST_leave_one_out}. The experimental settings follow Section~\ref{sec:attack_power} except $\bW$. In each row of Figure~\ref{fig:MNIST_leave_one_out}, we set the weight of the TST (corresponding to the ordinate) that is needed to be left out to 0; we then normalize the weights of leftover TSTs in Ensemble to $[0,1]$ according to the original weight set summarized in Table~\ref{tab:attack_weight}, so that the weight sum is 1. Figure~\ref{fig:MNIST_leave_one_out} demonstrates that attacks against an ensemble of TSTs sometimes can successfully fool TSTs that are not included in the attack ensemble.

All in all, Figure~\ref{fig:MNIST_transferability} validates that our proposed EA has transferability between different types of non-parametric TSTs, and it further validates that existing non-parametric TSTs lack adversarial robustness. 

\begin{figure*}[h!]
\centering
\subfigure[]{
\begin{minipage}[t]{0.49\linewidth}
\centering
\includegraphics[width=\linewidth]{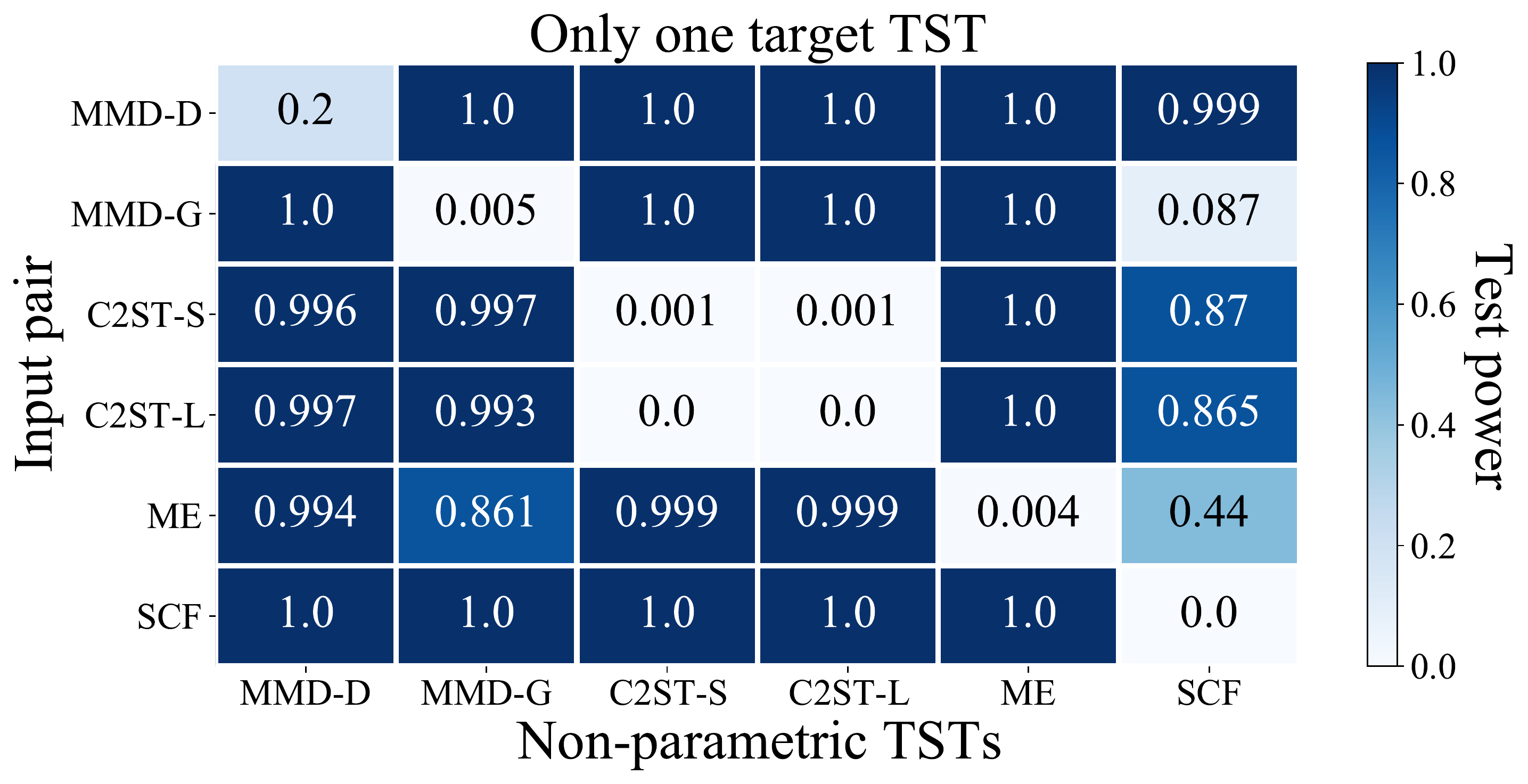}
\vskip -1.5in
\label{fig:MNIST_one_for_all}
\end{minipage}%
}%
\subfigure[]{
\begin{minipage}[t]{0.49\linewidth}
\centering
\includegraphics[width=\linewidth]{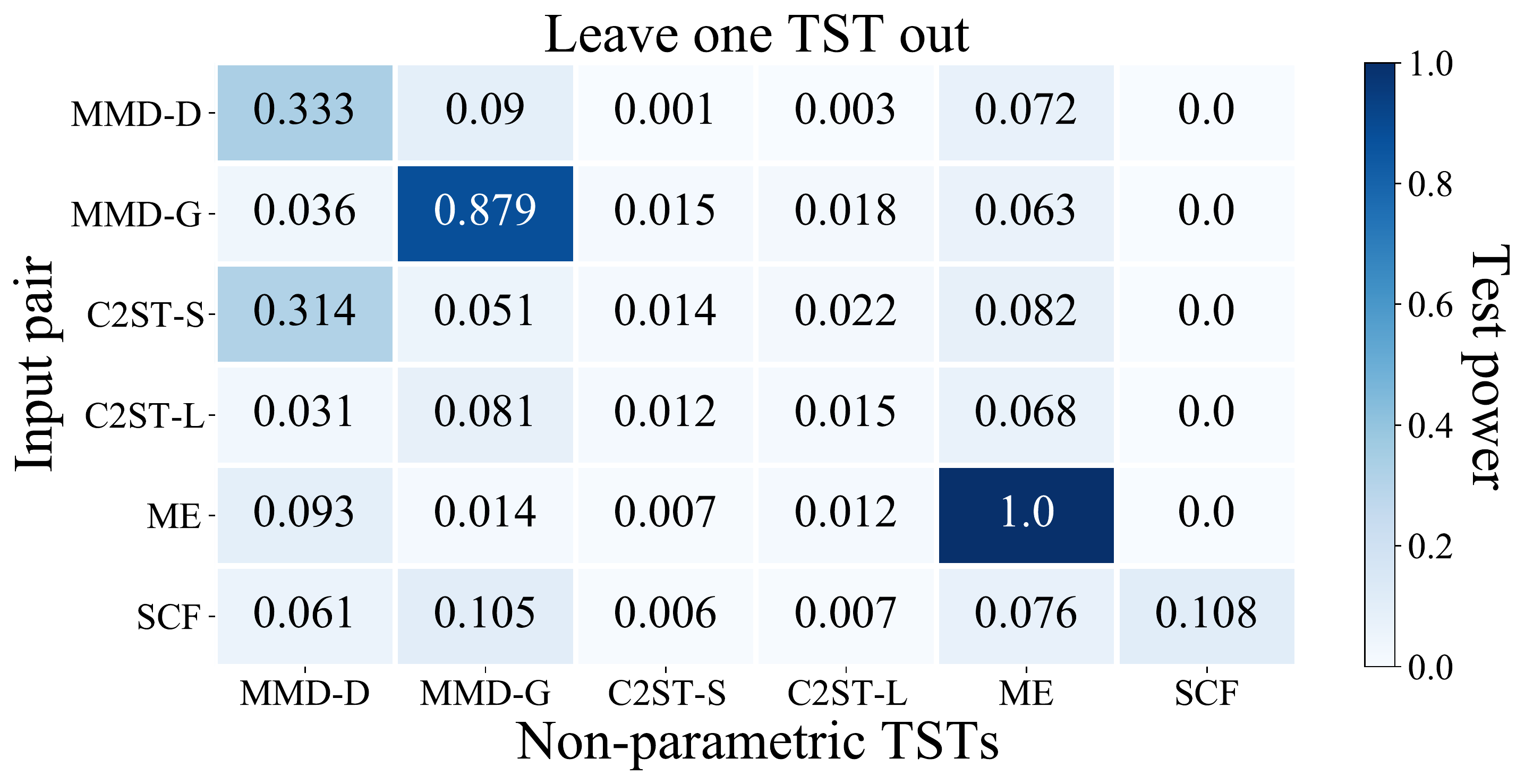}
\vskip -1.5in
\label{fig:MNIST_leave_one_out}
\end{minipage}%
}%
\centering
\vspace{-3mm}
\caption{In Figure~\ref{fig:MNIST_one_for_all}, each number represents the test power of the non-parametric TST corresponding to its abscissa on adversarial test pairs generated by the attack against the target TST corresponding to its ordinate. 
In Figure~\ref{fig:MNIST_leave_one_out}, each number represents the test power of the non-parametric TST corresponding to the abscissa on adversarial test pairs generated by the attack against Ensemble except the TST corresponding to its ordinate.}
\label{fig:MNIST_transferability}
\end{figure*}

\subsection{Discussions about the Situation When $d$ is Larger}
\label{appendix:d_reason}
In this section, we discuss the reason for the phenomenon where the test power of Ensemble under EA does not continue to decrease with larger $d$ (e.g., $d > 15$), which is shown in the upper right of Figure~\ref{fig:ablation}. We demonstrate the test power of each particular non-parametric TST and Ensemble under EA with different $d$ in Table~\ref{tab:ablation_d}. Table~\ref{tab:ablation_d} shows that, with the increasing of $d$, the test power of most TSTs (e.g., MMD-D, MMD-G) becomes lower. However, the ME test seems to be difficult to be successfully fooled with larger $d$, especially $d > 15$. We believe that upweighting the test criterion of ME (i.e., enlarging $w^{(ME)}$) during conducting EA on HDGM with larger $d$ could make EA further hurt the test power of ME and Ensemble.

\begin{table}[h!]
\centering
\caption{We report the average test power of six typical non-parametric TSTs ($\alpha=0.05$) as well as Ensemble under EA on HDGM with different $d \in \{5,10,15,20,25\}$.}
\label{tab:ablation_d}
\begin{tabular}{c|ccccccc }
\hline
$d$ & MMD-D & MMD-G & C2ST-S & C2ST-L & ME & SCF & Ensemble   \\ \hline
5 & 0.289\scriptsize{$\pm$0.019} & 0.613\scriptsize{$\pm$0.029} & 0.123\scriptsize{$\pm$0.017} & 0.597\scriptsize{$\pm$0.137} & 0.885\scriptsize{$\pm$0.080} & 0.297\scriptsize{$\pm$0.003} & 0.983\scriptsize{$\pm$0.023} \\
10 & 0.259\scriptsize{$\pm$0.009} & 0.081\scriptsize{$\pm$0.003} & 0.105\scriptsize{$\pm$0.000} & 0.090\scriptsize{$\pm$0.000} & 0.500\scriptsize{$\pm$0.025} & 0.006\scriptsize{$\pm$0.000} & 0.734\scriptsize{$\pm$0.078} \\
15 & 0.094\scriptsize{$\pm$0.002} & 0.063\scriptsize{$\pm$0.000} & 0.079\scriptsize{$\pm$0.000} & 0.086\scriptsize{$\pm$0.000} & 0.655\scriptsize{$\pm$0.000} & 0.003\scriptsize{$\pm$0.000} & 0.665\scriptsize{$\pm$0.093} \\
20 & 0.008\scriptsize{$\pm$0.000} & 0.014\scriptsize{$\pm$0.000} & 0.067\scriptsize{$\pm$0.000} & 0.051\scriptsize{$\pm$0.000} & 0.696\scriptsize{$\pm$0.000} & 0.006\scriptsize{$\pm$0.000} & 0.765\scriptsize{$\pm$0.051} \\
25 & 0.000\scriptsize{$\pm$0.000} & 0.000\scriptsize{$\pm$0.000} & 0.009\scriptsize{$\pm$0.000} & 0.000\scriptsize{$\pm$0.000} & 0.762\scriptsize{$\pm$0.000} & 0.000\scriptsize{$\pm$0.000} & 0.707\scriptsize{$\pm$0.081} \\ \hline
\end{tabular}
\end{table}

\subsection{Extensive Experiments about Adversarially Learning Kernels for TSTs}
Here, we study a different adversarial learning objective for obtaining robust kernels that minimizes a weighted sum of benign and adversarial loss~\cite{Goodfellow14_Adversarial_examples, Zhang_trades}, which is formulated as follows.
\begin{align}
\label{eq:rob_obj_addbenign}
    \hat{\theta} \approx \argmax_{\theta} ( \beta \cdot \hat{\cF}(S_{\bP}, S_{\bQ}; k_{\theta}) +  (1 - \beta) \cdot \hat{\cF}(S_{\bP}, \tilde{S}_{\bQ}; k_{\theta})),
\end{align}
where the adversarial set $\tilde{S}_{\bQ}$ is generated using Eq.~\eqref{eq:attack} and $0 \le \beta \le 1$ is a constant. Note that Eq.~\eqref{eq:rob_obj} is a special case of Eq.~\eqref{eq:rob_obj_addbenign} when we set $\beta = 0$.

We call non-parametric TSTs with robust deep kernels obtained by Eq.~\eqref{eq:rob_obj_addbenign} as ``MMD-RoD$^{*}$''. The training algorithm of MMD-RoD$^{*}$ is almost same as Algorithm~\ref{alg:rob_kernel} expect that the Line 6 in Algorithm~\ref{alg:rob_kernel} is replaced with $\theta \gets \theta + \eta \nabla_{\theta}( \beta \cdot \hat{\cF}(S_{\bP}, S_{\bQ}; k_{\theta}) +  (1 - \beta) \cdot \hat{\cF}(S_{\bP}, \tilde{S}_{\bQ}; k_{\theta}))$.

We conduct experiments to evaluate the adversarial robustness of MMD-RoD$^{*}$. We set $\beta = 0.5$ and denote the ensemble of six typical non-parametric TSTs and  MMD-RoD$^{*}$ as ``Ensemble$^{*}$''. Other settings of training, attack and testing procedure exactly follow Section~\ref{sec:mmd-rod-tp}. We report the test power of MMD-RoD$^{*}$ and Ensemble$^{*}$ in Table~\ref{tab:mmd-rod-addbenign}.

\begin{table}[h!]
\centering
\caption{Test power of MMD-RoD$^{*}$ and Ensemble$^{*}$.}
\label{tab:mmd-rod-addbenign}
\begin{tabular}{c|c|ccccc}
\hline
 & EA & Blob & HDGM & Higgs & MNIST & CIFAR-10 \\ \hline
\multirow{2}{*}{MMD-RoD$^{*}$} & $\times$ & 1.00\scriptsize{$\pm$0.04} & 1.00\scriptsize{$\pm$0.02} & 0.52\scriptsize{$\pm$0.00} & \textbf{1.00}\scriptsize{$\pm$0.12} & \textbf{1.00}\scriptsize{$\pm$0.00} \\ 
 & $\surd$ & 0.13\scriptsize{$\pm$0.06} & 0.01\scriptsize{$\pm$0.00} & 0.19\scriptsize{$\pm$0.02} & \textbf{0.86}\scriptsize{$\pm$0.00} & \textbf{0.84}\scriptsize{$\pm$0.01} \\ \hline
\multirow{2}{*}{Ensemble$^{*}$} & $\times$ & 1.00\scriptsize{$\pm$0.00} & 1.00\scriptsize{$\pm$0.00} & 1.00\scriptsize{$\pm$0.00} & 1.00\scriptsize{$\pm$0.00} & 1.00\scriptsize{$\pm$0.00} \\ 
& $\surd$ & 0.85\scriptsize{$\pm$0.01} & 0.74\scriptsize{$\pm$0.02} & 0.54\scriptsize{$\pm$0.04} & \textbf{0.89}\scriptsize{$\pm$0.00} & \textbf{0.88}\scriptsize{$\pm$0.00} \\ 
\hline
\end{tabular}
\end{table}

Compared to MMD-RoD (in Table~\ref{tab:mmd-rod}), we find MMD-RoD$^{*}$ that incorporates benign training pairs into adversarially learning kernels improves the test power in benign settings (especially on HDGM), but obtains the lower test power in adversarial settings among all datasets. Therefore, we recommend utilizing only adversarial training pairs for adversarially learning deep kernels.

\section{Description of Attackers against Non-Parametric TSTs}
\label{appendix:attacker}
In this section, we provide a detailed description of the attacker against non-parametric TSTs from four perspectives ``goal, knowledge, capability, strategy''~\cite{biggio2018wild}.

\begin{itemize}
\item \textbf{Goal.} The attacker aims to make a target non-parametric TST incorrectly judge two sets of data are drawn from the same distribution during the test procedure, when in reality these two sets of data are drawn from different distributions. 

\item \textbf{Knowledge.} Depending on the assumptions made on the attacker's knowledge, we have different attack scenarios. 
\begin{itemize}
    \item Perfect-knowledge white-box attacks. The attacker is assumed to know everything about the target non-parametric TST, such as the target non-parametric TST's test criterion function and kernel parameters. 
    \item Limited-knowledge gray-box attacks. The attacker has part of the target non-parametric TST's knowledge. For example, the attacker knows the target non-parametric TST's test criterion function, but does not know its kernel parameters and training data. 
    \item Zero-knowledge black-box attacks. The attacker does not have any knowledge about the target non-parametric TST. The attacker can only query the non-parametric TST in a black-box manner and then obtain the judgement on the test pairs. 
\end{itemize}

\item \textbf{Capability.} The attacker can only manipulate test data, and the malicious perturbations should be human-imperceptible. 

\item \textbf{Strategy.} The attacker searches for adversarial sets via minimizing the target non-parametric TST's test criterion under data manipulation constraints. 

\end{itemize}


\end{document}